\newcommand{\zb}[1]{\ensuremath{\boldsymbol{#1}}}
\newcommand{\tT}{\mathrm{T}}
\newcommand{\X}{\mathbb{X}}
\newcommand{\Y}{\mathbb{Y}}
\newcommand{\R}{\mathbb{R}}
\newcommand{\Z}{\mathbb{Z}}
\newcommand{\N}{\mathbb{N}}
\newcommand{\PP}{\mathbb{P}}
\newcommand{\diff}{\textnormal{d}}
\newcommand{\assign}{\coloneqq}
\newcommand{\const}{\text{const}}
\DeclareMathOperator{\KL}{KL}
\DeclareMathOperator{\spt}{spt}
\DeclareMathOperator*{\argmin}{arg\,min}
\newcommand{\rightweaks}{\stackrel{\ast}{\rightarrow}}
\newtheorem{theorem}{Theorem}
\newtheorem{assumption}[theorem]{Assumption}
\newtheorem{lemma}[theorem]{Lemma} 
\newtheorem{proposition}[theorem]{Proposition}
\newtheorem{remark}[theorem]{Remark}
\newtheorem{definition}[theorem]{Definition}
\newtheorem{corollary}[theorem]{Corollary}
\theoremstyle{definition}
\begin{document}

\author{
Florian Beier\footnote{Institute of Mathematics,
 	Technische Universit\"at Berlin,
 	Strasse des 17. Juni 136, 10587 Berlin, Germany,
 	f.beier@tu-berlin.de, steidl@math.tu-berlin.de}
	\and
Hancheng Bi\footnote{Institute of Computer Science,
  Georg-August-Universit\"at Göttingen,
  Goldschmidtstraße 7, 37077 Göttingen, Germany,
  \{bi, clement.sarrazin, schmitzer\}@cs.uni-goettingen.de
}
	\and
Clément Sarrazin\footnotemark[2]
	\and
Bernhard Schmitzer\footnotemark[2]
    \and
Gabriele Steidl\footnotemark[1]
	}
\date{}
\title{Transfer Operators from Batches of Unpaired Points 
via Entropic Transport Kernels}

\maketitle

\begin{abstract}
In this paper, we are concerned with estimating the joint probability
of random variables $X$ and $Y$,  given $N$ independent observation blocks
$(\zb x^i,\zb y^i)$, $i=1,\ldots,N$, 
each of $M$ samples 
$(\zb x^i,\zb y^i) = \big( (x^i_j, y^i_{\sigma^i(j)}) \big)_{j=1}^M$,
where $\sigma^i$ denotes an unknown permutation 
of i.i.d.~sampled pairs
$(x^i_j,y_j^i)$, $j=1,\ldots,M$. 
This means that the internal ordering 
of the $M$ samples within an observation block is not known. 
We derive a maximum-likelihood inference functional, propose a computationally tractable approximation and analyze their properties. In particular, we prove a $\Gamma$-convergence
result showing that we can recover the true density 
from empirical approximations as the number $N$ of blocks goes to infinity.
Using entropic optimal transport kernels, we model a class of  hypothesis spaces of density functions over which the inference functional can be minimized.
This hypothesis class is particularly suited for
approximate  inference of transfer operators from data. 
We solve the resulting discrete minimization problem by 
a modification of the EMML algorithm to take addional transition probability
constraints into account and prove the convergence of this algorithm.
Proof-of-concept examples demonstrate the potential of our method.
\end{abstract}
\textbf{Keywords.} 
Dynamical systems,
maximum likelihood estimation,
entropic optimal transport,
kernel mean embedding.

\section{Introduction}\label{sec:Intro}
Given independent and identically distributed (i.i.d.)\ realizations $(x^i,y^i)_{i=1}^N$ from the joint distribution $\pi$ of two random variables $X$ and $Y$ with values in $\X$ and $\Y$, a relevant problem is to find a good approximation of $\pi$ for further analysis.
Assuming that $\pi$ has density $p$ with respect to some product measure $\mu \otimes \nu$, i.e., $\pi = p \cdot (\mu \otimes \nu)$, a common approach to approximate $p$ is to minimize    
the negative log-likelihood based on the observations 
\begin{align} \label{eq:IntroML}
\argmin_{q \in Q} - \frac1N \sum_{i=1}^N \log \left(q(x^i,y^i) \right).
\end{align}
over a parametric or non-parametric hypothesis space of densities
$Q \subset \mathcal C(\X \times \Y)$.

\paragraph{Analysis of dynamical systems.}
An example for such a problem is when 
$X$ and $Y$ 
represent two successive time-steps $t$ and $t+1$ of a time-discrete dynamical system.
In this case, let $\mu$ be the marginal distribution of $X$ and $\nu$ be the marginal distribution of $Y$. Then $p(x,\cdot) \cdot \nu$ is the distribution of $Y$ conditioned on $X=x$.
The family $(p(x,\cdot))_x$ are then called the transition probability densities.
Consider a cohort of particles that at time $t$ is distributed according to
$u_t \cdot \mu \in \mathcal P(\X)$ for some probability density 
$u_t \in L^1(\mu)$ 
and introduce the operator 
$\mathcal T : L^1(\mu) \to L^1(\nu)$ by 
$\mathcal Tu_t \assign \int_{\X} p(x,\cdot) u_t(x)\,\diff \mu(x)$. 
Then at time $t+1$ the distribution of the particle cohort has the density $u_{t+1}\assign \mathcal Tu_t$ with respect to $\nu$.
The map $\mathcal T$ is called the transfer operator of the system. 
If $\mathcal T$ is a compact operator from $L^2(\mu)$ into $L^2(\nu)$, then the singular value decomposition of $\mathcal T$ can reveal macroscopic features of the system, such as coherent sets \cite{Froyland2010Coherent,FROYLAND20131,froyland2014almost}. 
If $\X=\Y$ and $\mu=\nu$, eigendecomposition of $\mathcal T$ can be employed in a similar way, for instance to detect almost-stationary sets or  macroscopic approximate cyclic behaviour \cite{DelJun1999,KoltaiWeiss2020}.
Estimating the conditional transition probabilities $(p(x,\cdot) \cdot \nu)_{x \in \X}$ or the operator $\mathcal T$ (in such a way that its singular value or eigendecomposition can be approximated) from sample data $(x^i,y^i)_{i=1}^N$ is a relevant problem.
The literature on the analysis of dynamical systems via spectral analysis of the transfer operator and the approximation of the latter from data is vast and we refer to \cite{DelJun1999,Koopmanism2012,FROYLAND20131,KluKolSch2016,TransferOperatorApproxReview18} and references therein or the monograph \cite{ErgodicOperators} as exemplary starting points. As an example for more recent developments see, for instance, \cite{BittracherRC2021}.
To get meaningful results in such an application, \eqref{eq:IntroML} should be minimized over a suitable class of hypothesis densities $Q$ such that for all $q \in Q$, 
the measure $q(x,\cdot) \cdot \nu$ is indeed a conditional probability distribution for $\mu$-almost all $x \in X$, i.e.~they are transition probability densities.

In this paper, we consider the following variant of the above estimation problem: 
\begin{assumption}[Sample generation]\label{asp:main}
We assume that we observe $N$ batches of $M$ i.i.d.~pairs of samples from the joint distribution of random variables
$(X,Y)$, but their actual pairings within each batch remain unknown. 
More precisely,
for each $i \in \{1,\ldots,N\}$, the $M$ pairs $((x_{j}^i,y_{j}^i))_{j=1}^M$ are sampled i.i.d.~from the joint law of $(X,Y)$, and in addition, 
a permutation $\sigma^i$ is sampled uniformly 
from the permutation group of $\{1,\ldots,M\}$, 
so that we have only access to
$((x_{j}^i,y^i_{\sigma^i(j)}))_{j=1}^M$.
\end{assumption}

This models, for instance, the experiment where we observe a group of $M$ particles evolving according to a dynamical system, but the particles are indistinguishable and their identity cannot be tracked between time steps. The extreme case that
we observe only a single batch, $N=1$, 
was considered in connection with transport phenomena and two time steps for the Schr\"odinger problem 
in \cite{OTCoherentSet2021} and for Gromov--Wasserstein transport problems in \cite{beier_gwt}. 
Similarly, the approach in \cite{Lavenant2021} for tracking cell differentiation can be interpreted as the setting $N=1$ and $M\to \infty$ and in both cases entropy regularized optimal transport is used as a prior to solve this vastly underdetermined problem. In \cite{DalitzSSVM2017} a variant of the problem is considered, where only tomographic projections of the positions are observed. Leveraging optimal transport as a prior for displacement and compressed sensing theory for sparse reconstruction it is shown that the particle positions and their associations can be recovered correctly with high probability for $N=1$ and $M$ sufficiently small (but greater 1).

\paragraph{Particle colocalization.}
The above estimation problem might also serve as a simplified model for the analysis of particle colocalization in super-resolution microscopy.
In this case, $\X=\Y$ is the image domain and $X$ and $Y$ describe the random locations of two species of fluorescent markers.
Let us assume for simplicity that in each image we observe $M$ markers of each species, that correspond to $M$ i.i.d.~sampled pairs from an unknown $\pi$, with their pairwise association remaining unobserved. If the particles form tightly bound pairs, $\pi$ would have most of its mass concentrated near the diagonal on $\X \times \Y$, such that with high probability $X \approx Y$. If the particle positions are practically independent, $\pi$ would be approximately equal to the product measure $\mu \otimes \nu$, i.e.~$p(x,y) \approx 1$ for all $(x,y)$.
Therefore, inferring $\pi$ from such data can reveal information about the interaction between the two species.
Of course, the real problem is more complicated, due to effects such as incomplete labeling efficiency (i.e.~not all markers are actually visible in the images), incomplete pairing (not all markers are necessarily paired with one of the other species, even if their positions were highly dependent, for instance due to different global abundances) and related issues. At this point were merely consider this as a conceptual study.
We refer to \cite{TamelingColoc2021} and references therein for an exposition of the colocalization problem in super-resolution microscopy and for an analysis method based on optimal transport.

\paragraph{Outline of the paper.}
We start by providing the necessary notation in Section \ref{sec:notation}.
In Section \ref{sec:Functionals}, we develop a functional for solving our inference problem.
We first derive a maximum likelihood estimator in the spirit of \eqref{eq:IntroML}, but under Assumption \ref{asp:main} for sampling. This turns out to become numerically intractable very quickly as $M$ increases due to the vast number of potential permutations. We therefore give a tractable approximate negative log-likelihood and establish its basic properties. In particular, we prove a $\Gamma$-convergence type result that shows that we can recover the true density $p$ as $N \to \infty$.
In Section \ref{sec:Kernels}, we propose a class of nonparametric hypothesis spaces $Q$. For this, we use the concept of entropic optimal transport, where the regularization parameter $\varepsilon$ approximately plays the role of the squared kernel bandwidth, thus providing a means to control the complexity or bias of $Q$.
This construction is robust with respect to approximating the marginal distributions $\mu$ and $\nu$, e.g.~by sampling and discretization. We give a corresponding $\Gamma$-convergence result.
In addition, entropic optimal transport provides a simple way to implement the constraint that $(q(x,\cdot))_x$ is a family of transition probability densities. The introduced hypothesis class is therefore particularly suited for the approximate inference of transfer operators from data.
 Our use of entropic transport for the estimation of smoothed transfer operators from data is an extension of the method in \cite{EntropicTransfer22} (which deals with the case $M=1$ for deterministic systems) and we briefly discuss related results on spectral convergence.
Solving the resulting discrete minimization problem is the content of Section \ref{sec:Algorithm}. We will see that the unconstrained problem can be solved by the well-known EMML algorithm.
For the problem with added transition probability constraint, we modify the algorithm while preserving its monotone convergence and give a convergence proof in Appendix \ref{subsec:cEMML}.
Finally, in Section \ref{sec:numerics}, we illustrate the performance of our method by several numerical examples on dynamic system analysis, illustrating the role of the parameters $M$, $N$ and $\varepsilon$. In particular, we empirically estimate the stability of our inference method for large $M$ and find that it can still extract information from the samples in this regime.
Conclusions and a list of open questions are given in Section \ref{sec:Conclusion}.

\section{Notation} \label{sec:notation}
Let $\X$ be a compact metric space.
By $\mathcal M(\X)$ we denote the space of the Borel  measures on  $\mathbb X$, 
by $\mathcal M_+(\X)$ the subset of non-negative measures
and by $\mathcal P(\X)$ the probability measures.
For a measure
$\mu \in \mathcal{M}_+(\X)$, 
its support is defined by
\[
\spt \mu \coloneqq 
\{x \in \X: 
\text{ for all open neighbourhoods } N_x 
\text{ of } x \text{ it holds } \mu(N_x) > 0
\}.
\]
We identify $\mathcal M (\X)$ with the topological dual of the Banach space of continuous functions $\mathcal C(\X)$ equipped with the supremum norm $\| \cdot \|_\infty$. Let $\mathcal C_+(\X)$ be the
non-negative continuous functions on $\X$.
By $\mu^{\otimes M}$, we denote the $M$-fold product measure of $\mu$ on $\mathbb X^M$.
Then, for $\zb x \coloneqq (x_1,\ldots,x_M)$, 
we set
$\diff \mu^{\otimes M} (\zb x) \coloneqq \diff \mu(x_1) \ldots \diff \mu (x_M)$.
Furthermore, for functions $p: \X \to \R$, 
let $p^{\otimes M}: \X^M \to \R$ be given by 
$p^{\otimes M}(x_1,\ldots,x_m) \coloneqq \prod_{j=1}^M p(x_j)$.

For two spaces $\X$, $\Y$ and a measurable map $T : \X \to \Y$, 
the push-forward measure of $\mu \in \mathcal M(\X)$ under $T$, denoted by $T_\# \mu \in \mathcal M(\Y)$, is defined by  
$T_\# \mu := \mu \circ T^{-1}$.
Note that a random variable $X: (\Omega,\Sigma,\mathbb P) \to   \X$ has law $P_X = X_\# \mathbb P$.
For a set $Q$, let $\iota_Q$ be the indicator function of $Q$, i.e., $\iota_Q(x) = 0$ if $x \in Q$ and 
$\iota_Q(x) = +\infty$ otherwise.

A measure $\mu \in \mathcal M(\X)$ 
is called absolutely continuous 
with respect to $\tilde{\mu} \in \mathcal{M}(\X)$, 
and we write $\mu \ll \tilde{\mu}$, 
if for every Borel-measurable $A \subset \X$ 
with $\tilde{\mu}(A) = 0$ it holds $\mu(A) = 0$.
By $\KL: \mathcal M_+(\X) \times \mathcal M_+(\X) \to \R_+ \cup \{\infty\}$ we denote the Kullback--Leibler divergence
defined for $\mu, \tilde \mu \in \mathcal M_+(\X)$ by
$$\KL(\mu|\tilde \mu) 
\coloneqq 
\begin{cases}
	\int_{\X} \varphi\left(\frac{{\rm d} \mu}{{\rm d} \tilde \mu}\right)\,\text{d} \tilde \mu
	& \text{if } \mu \ll \tilde \mu, \\
	+\infty & \text{otherwise,}
	\end{cases}
$$
with 
$\varphi(s) \coloneqq s \log s - s + 1$ for  $s>0$ 
and 
$\varphi(0) \coloneqq 1$.
It holds that 
\begin{equation}\label{eq:Gibbs}
\KL(\mu|\tilde \mu) = 0 \quad \text{ if and only if }\quad \mu = \tilde \mu.
\end{equation}
In particular, we have for $\mu = p \cdot \lambda \in \mathcal P(\X)$ 
and 
$\tilde \mu = \tilde p \cdot \lambda\in \mathcal P(\X)$ with 
$\lambda \in \mathcal M_+(\X)$
that
\begin{equation} \label{kl}
\KL(\mu|\tilde \mu) = \int_{\X} p \log \Big(\frac{p}{\tilde p} \Big) \, \text{d} \lambda
= \int_{\X} \log (p) \, \text{d} \mu - \int_{\X} \log (\tilde p) \, \text{d} \mu.
\end{equation}
In our algorithmic part, we will also deal with the KL divergence of vectors 
$\zb p, \tilde{\zb p} \in \R^N$ in the probability simplex
$$
\triangle_N \assign \big\{\zb p \in  \R^N_{\ge 0}: \sum_{i=1}^N p_i = 1 \big\}.
$$
Then the KL divergence between $\zb p$ and $\tilde {\zb p}$ is simply given by
\begin{equation} \label{kl_discrete}
\KL(\zb p|\tilde{\zb p}) \assign \sum_{i=1}^N  p_i \log \Big(\frac{p_i}{\tilde p_i} \Big) - \sum_{i=1}^N  p_i + \sum_{i=1}^N  \tilde p_i,
\end{equation}
where $\KL(\zb p|\tilde{\zb p}) \assign +\infty$ if $\tilde p_i = 0$ and $p_i \not = 0$
for some $i \in [N]$ and $0 \log 0 \assign 0$.
We write $[N]$ for the set $\{1,\ldots,N\}$.

\section{Inference functionals}\label{sec:Functionals}
In this section we construct functionals for inferring the true density $p$ of $\pi=p \cdot \mu \otimes \nu$ from observations and establish their basic properties.
We start by considering the random variable
behind our sample generation in Assumption \ref{asp:main} in Subsection \ref{subsec:31} and formulate the corresponding log-likelihood functional, referred to as \emph{permutation functional}.
As $M$ increases, computationally this functional quickly becomes intractable due to the large number of potential permutations.
Therefore, in Subsection \ref{subsec:32}, we  propose an \emph{approximate inference functional}. Finally, in Subsection \ref{sec:FunctionalsBasic}, we establish basic continuity and $\Gamma$-convergence results of the approximate functional. Furthermore, we show that, in the limit $N \to \infty$, its global minimizer will be $p$, and discuss how the approximation relates to the permutation functional.

\subsection{Modeling and permutation functional} \label{subsec:31}

Let $\mathbb X, \mathbb Y$ be compact metric spaces
and
$X: (\Omega,\Sigma,\mathbb P) \to   \X$, $Y:(\Omega,\Sigma,\mathbb P) \to   \Y$
random variables with joint law $P_{(X,Y)} = \pi \in \mathcal P(\X \times \Y)$.
The marginal laws of $X$ and $Y$ are then given by 
the push forward of the corresponding projections
$\mu \assign (P_1)_\# \pi$ and $\nu \assign (P_2)_\# \pi$, respectively.
We suppose that there exists $p\in C(\X \times \Y)$ 
such that $\pi = p \cdot (\mu \otimes \nu)$.
Note that this implies 
\begin{align}
	\int_{\X} p(x',y)\,\diff \mu(x')  =1 \quad \text{and} \quad
	\int_{\Y} p(x,y')\,\diff \nu(y')  =1
\label{eq:pMarginals}
\end{align}
for all $(x,y) \in \X \times \Y$. Therefore the disintegration of $\pi$ against its $\mu$-marginal at $x$ is given by $p(x,\cdot) \cdot \nu$ and $p(x,\cdot)$ is the probability density of $Y$ with respect to $\nu$, when conditioned on $X=x$.

Now, let $(X_j,Y_j)$, $j \in [M]$, be i.i.d.~random variables with law $\pi$.
We set 
$(\zb X, \zb Y) \coloneqq \left((X_j,Y_j)\right)_{j=1}^M$.  
Similarly, 
for $(x_j,y_j) \in \X \times \Y$, $j \in [M]$, 
we set
$(\zb x,\zb y) \coloneqq \left((x_j,y_j)\right)_{j=1}^M$. 
Then the joint law of $(\zb X, \zb Y)$ and its probability density with respect to $(\mu \otimes \nu)^{\otimes M}$ are given by
\[
P_{(\zb X,\zb Y)} = \pi^{\otimes M} = (p \cdot (\mu \otimes \nu))^{\otimes M} = p^{\otimes M} \cdot (\mu \otimes \nu)^{\otimes M}.
\]                                                 
Unfortunately, according to Assumption \ref{asp:main} we cannot directly sample from $(\zb X, \zb Y)$, but have to take permutations 
in the second component into account.
To model this, let $\mathcal G_M$ denote the permutation group on $[M] \coloneqq \{1,\ldots,M\}$.
Recall that $|\mathcal G_M| = M!$.
Let $\Theta: (\Omega,\Sigma,\mathbb P) \to \mathcal G_M$ be a discrete random variable which is uniformly distributed on
$\mathcal G_M$, i.e. $P(\Theta = \sigma) = \frac{1}{M!}$ for every $\sigma \in \mathcal G_M$, and independent of $(\zb X, \zb Y)$.
For $\sigma \in \mathcal G_M$ 
we denote the associated permutation matrix by $M_\sigma$.
For $\Theta(\omega) = \sigma$, we set $\Theta_j(\omega) \coloneqq \sigma(j)$.
Now our samples arise from concatenating the random variables $\bm{X},\bm{Y}$ and $\bm{\Theta}$.
To this end, let
\[
S: (\mathbb X \times \mathbb Y)^M \times \mathcal G_M \to (\mathbb X \times \mathbb Y)^M , \quad
((\zb x, \zb y), \sigma) \mapsto ((x_j,y_{\sigma(j)}))_{j=1}^M
\]
and $S_\sigma \coloneqq S(\cdot,\cdot,\sigma): (\mathbb X \times \mathbb Y)^M \to (\mathbb X \times \mathbb Y)^M$
for any $\sigma \in \mathcal G_M$.
Then our sampling procedure can be considered as a realization 
of the random variable
\begin{equation}\label{Z}
\zb Z = S \circ \left((\zb X,\zb Y),\Theta \right): (\Omega,\Sigma,\mathbb P) \to   (\X \times \Y)^M, \quad
\omega \mapsto \left( \left(X_j(\omega),Y_{\Theta(\omega)(j)} (\omega) \right)\right)_{j=1}^M.
\end{equation}
As we will see below, 
the distribution of $\zb{Z}$ can be formulated 
via the \emph{symmetrization operator on functions}
\begin{equation}\label{sym_1}
s_M: \mathcal C(\X\times \Y) \to\ \mathcal C( (\X \times \Y)^M ), \quad s_M p\assign \frac{1}{M!} \sum_{\sigma \in \mathcal G_M} p^{\otimes M} \circ S_\sigma,
\end{equation}
i.e.\
\begin{equation}
s_M p (\zb x,\zb y) \assign 
\frac{1}{M!} \sum_{\sigma \in \mathcal G_M} 
p_{M_\sigma(\zb Y)|\zb X = \zb x} (\zb y)
= \frac{1}{M!} \sum_{\sigma \in \mathcal G_M} \prod_{j=1}^M p(x_j, y_{\sigma(j)} ).
\end{equation}

Clearly, we have for any $\sigma \in \mathcal G_M$ the invariance
\begin{equation} \label{sym}
(s_M p) \circ S_\sigma= s_M p.
\end{equation}

\begin{proposition} \label{prop:basic}
The law of the random variable $\zb Z \in (\mathbb X \times \mathbb Y)^M$ in \eqref{Z} is given by
\begin{equation} \label{law_Z}
P_{\zb Z} = p_{\zb Z} \cdot (\mu \otimes \nu)^{\otimes M} ,
\quad \text{with} \quad 
p_{\zb Z}  \coloneqq s_M p .
\end{equation}
\end{proposition}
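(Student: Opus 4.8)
The plan is to identify $P_{\zb Z}$ by evaluating $\mathbb E[g(\zb Z)]$ for an arbitrary bounded Borel function $g \colon (\X \times \Y)^M \to \R$ and recognizing the outcome as integration of $g$ against $(s_M p)\cdot(\mu\otimes\nu)^{\otimes M}$. Since $\Theta$ is, by construction, independent of $(\zb X,\zb Y)$ and uniformly distributed on $\mathcal G_M$, conditioning on the value of $\Theta$ and then inserting the density of $P_{(\zb X,\zb Y)}=\pi^{\otimes M}=p^{\otimes M}\cdot(\mu\otimes\nu)^{\otimes M}$ gives
\[
\mathbb E\big[g(\zb Z)\big]
= \frac{1}{M!}\sum_{\sigma \in \mathcal G_M} \mathbb E\big[g(S_\sigma(\zb X,\zb Y))\big]
= \frac{1}{M!}\sum_{\sigma \in \mathcal G_M} \int_{(\X\times\Y)^M} (g \circ S_\sigma)\, p^{\otimes M}\, \diff(\mu\otimes\nu)^{\otimes M}.
\]

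The key step is to move each permutation off the test function and onto the density. The map $S_\sigma$ leaves the $\X$-coordinates untouched and only rearranges the $\Y$-coordinates; since $\nu^{\otimes M}$, and hence $(\mu\otimes\nu)^{\otimes M}$, is invariant under any permutation of its coordinates, we obtain $(S_\sigma)_\#(\mu\otimes\nu)^{\otimes M}=(\mu\otimes\nu)^{\otimes M}$. Moreover $S_\sigma$ is a bijection with inverse $S_\sigma^{-1}=S_{\sigma^{-1}}$. Applying the change-of-variables identity $\int (g\circ T)\,f\,\diff\lambda = \int g\,(f\circ T^{-1})\,\diff(T_\#\lambda)$ with $T=S_\sigma$, $f=p^{\otimes M}$ and $\lambda=(\mu\otimes\nu)^{\otimes M}$ turns each summand into $\int g\,(p^{\otimes M}\circ S_{\sigma^{-1}})\,\diff(\mu\otimes\nu)^{\otimes M}$.

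Finally I sum over $\mathcal G_M$: since $\sigma\mapsto\sigma^{-1}$ is a bijection of $\mathcal G_M$, relabeling yields $\frac{1}{M!}\sum_{\sigma}p^{\otimes M}\circ S_{\sigma^{-1}}=\frac{1}{M!}\sum_{\sigma}p^{\otimes M}\circ S_{\sigma}=s_M p$ by the definition \eqref{sym_1}. Hence $\mathbb E[g(\zb Z)]=\int g\cdot s_M p\,\diff(\mu\otimes\nu)^{\otimes M}$ for every bounded Borel $g$, which is exactly \eqref{law_Z}. I do not expect a genuine obstacle here; the only point that needs a moment's care is the invariance $(S_\sigma)_\#(\mu\otimes\nu)^{\otimes M}=(\mu\otimes\nu)^{\otimes M}$, which is immediate from Tonelli once one notes that permuting coordinates of a product measure with identical $\Y$-factors changes nothing, together with bookkeeping of $\sigma$ versus $\sigma^{-1}$, which is harmless after summing over the whole group. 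An alternative route would be to push forward $\pi^{\otimes M}\otimes(\text{uniform on }\mathcal G_M)$ through $S$ directly, but the test-function computation above is the cleanest.
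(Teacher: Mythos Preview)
Your argument is correct and follows essentially the same route as the paper: condition on $\Theta$ (law of total probability), use the invariance $(S_\sigma)_\#(\mu\otimes\nu)^{\otimes M}=(\mu\otimes\nu)^{\otimes M}$ together with change of variables to obtain $P_{\zb Z|\Theta=\sigma}=p^{\otimes M}\circ S_{\sigma^{-1}}\cdot(\mu\otimes\nu)^{\otimes M}$, and then relabel the sum over $\mathcal G_M$ via $\sigma\mapsto\sigma^{-1}$. The only cosmetic difference is that you phrase everything through test functions while the paper works directly with push-forwards of measures.
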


\begin{proof}
By the law of total probability, we obtain
\begin{align} \label{start}
P_{\zb Z}
= \sum_{\sigma \in \mathcal G_M} P_{\zb Z|\Theta = \sigma} \, P(\Theta = \sigma)
= 
\frac{1}{M!}\sum_{\sigma \in \mathcal G_M} P_{\zb Z|\Theta=\sigma}.
\end{align}
For the law of $\zb Z$ when conditioned on $\Theta=\sigma$, we have 
\[
P_{\zb Z|\Theta=\sigma} = P_{S \circ ((\zb X,\zb Y),\Theta)|\Theta=\sigma} = P_{S_\sigma(\zb X,\zb Y)} 
=
(S_\sigma)_\# (\pi^{\otimes M})
=
(S_\sigma)_\# \left(p^{\otimes M} \cdot (\mu \otimes \nu)^{\otimes M} \right).
\]
Obviously, $S_\sigma: (\mathbb X \times \mathbb Y)^M \to (\mathbb X \times \mathbb Y)^M$ is a diffeomorphism 
which leaves $(\mu \otimes \nu)^{\otimes M}$ unchanged, i.e.,
\begin{equation} \label{fix}
(S_\sigma)_\#  (\mu \otimes \nu)^{\otimes M} = (\mu \otimes \nu)^{\otimes M}.
\end{equation}
Moreover, it holds $\text{det} (\nabla S_\sigma) = \text{det} (M_\sigma) \in \{\pm 1\}$.
Thus, together with the change of variables formula, we get
\begin{equation}\label{eq:LawZConditioned}
P_{\zb Z|\Theta=\sigma}= p^{\otimes M} \circ S_\sigma^{-1} \cdot (\mu \otimes \nu)^{\otimes M} 
\end{equation}
Inserting this into \eqref{start} and recalling the definition \eqref{sym_1} (where we use that $S_\sigma^{-1}=S_{\sigma^{-1}}$ and relabel the sum), we obtain the assertion.
\end{proof}

In general, we cannot find any arbitrary $p$ from the negative log-likelihood $-\log(s_M p)$.
Instead, we will only search for a hypothesis density $q \in Q \subset \mathcal C_+(\X \times \Y)$
from a parametric or nonparametric space $Q$ 
with corresponding measure $q \cdot (\mu \otimes \nu)$.
Later, we will model $Q$ as a nonparametric space using kernels from entropic optimal transport.

\begin{definition}[Permutation inference functional] \label{def1}
We define the {\rm population permutation functional} as
the expectation value of $-\log (s_M q)$ with respect to $P_{\bm{Z}}$, that is
\begin{align}\label{jm}
\mathcal J_{M}(q) & \assign - \int_{(\X \times \Y)^M} \log(s_M q) \, \diff  P_{\bm{Z}}.
\end{align}
For $N$ independently drawn samples $
(\zb x^i,\zb y^i)
= \left((x_j^i,y_j^i) \right)_{j=1}^M, i \in [N]
$
of $\zb Z$, 
the {\rm empirical permutation functional} is given by 
\begin{align}  \label{true_log}
\mathcal J_{M}^N (q)
&= - \int_{(\X \times \Y)^M} \log(s_M q) \, \diff \pi_M^N
= - \frac1N \sum_{i=1}^N \log 
\Big( (s_M q) \big((\zb x^i, \zb y^i) \big) \Big)
\end{align}
with the empirical measure  
\begin{equation}  \label{empirical}
\pi_M^N \coloneqq \frac1N \sum_{i=1}^N \delta_{(\zb x^i, \zb y^i)}.
\end{equation}
\end{definition}

Interestingly, we can rewrite the population permutation functional in another way.

\begin{lemma}\label{equal1}
    The functional $\mathcal J_M$ in \eqref{jm} admits the equivalent form
    \begin{equation*}
        \mathcal J_{M}(q) 
        = 
        - \int_{(\X \times \Y)^M} \log(s_M q) \, \diff \pi^{\otimes M}.
    \end{equation*}
\end{lemma}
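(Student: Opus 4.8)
The plan is to bring both sides to integrals against the common reference measure $\lambda\assign(\mu\otimes\nu)^{\otimes M}$ and then to exploit that the integrand $\log(s_M q)$ is $S_\sigma$-invariant while $\lambda$ is $S_\sigma$-preserving. By Proposition~\ref{prop:basic} we have $P_{\zb Z}=(s_M p)\cdot\lambda$, so $\mathcal J_M(q)=-\int_{(\X\times\Y)^M}\log(s_M q)\,(s_M p)\,\diff\lambda$, whereas $\pi^{\otimes M}=p^{\otimes M}\cdot\lambda$, so the asserted right-hand side equals $-\int_{(\X\times\Y)^M}\log(s_M q)\,p^{\otimes M}\,\diff\lambda$. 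It thus suffices to prove
\[
\int_{(\X\times\Y)^M}\log(s_M q)\,(s_M p)\,\diff\lambda
=\int_{(\X\times\Y)^M}\log(s_M q)\,p^{\otimes M}\,\diff\lambda .
\]
Before doing so I would record that, since $q\in\mathcal C_+(\X\times\Y)$ and $\X\times\Y$ is compact, $s_M q$ is nonnegative and continuous, hence bounded, so $\log(s_M q)$ is bounded from above by a constant; consequently both integrands are dominated above by an integrable function, each integral is a well-defined element of $[-\infty,+\infty)$, and $\mathcal J_M(q)$ is well defined with values in $(-\infty,+\infty]$, which is all that the manipulations below require.

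For the identity itself I would expand $s_M p=\frac{1}{M!}\sum_{\sigma\in\mathcal G_M}p^{\otimes M}\circ S_\sigma$ according to \eqref{sym_1}, obtaining
\[
\int\log(s_M q)\,(s_M p)\,\diff\lambda
=\frac{1}{M!}\sum_{\sigma\in\mathcal G_M}\int\log(s_M q)(\zb x,\zb y)\;p^{\otimes M}\bigl(S_\sigma(\zb x,\zb y)\bigr)\,\diff\lambda(\zb x,\zb y).
\]
For each fixed $\sigma$ I would change variables along the $\lambda$-preserving bijection $S_\sigma$, using \eqref{fix} together with $S_\sigma^{-1}=S_{\sigma^{-1}}$ and phrasing the step via pushforward measures so that it applies to merely measurable, bounded-above integrands; this rewrites the $\sigma$-summand as $\int\log(s_M q)\bigl(S_{\sigma^{-1}}(\zb x,\zb y)\bigr)\,p^{\otimes M}(\zb x,\zb y)\,\diff\lambda(\zb x,\zb y)$. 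Invoking the invariance \eqref{sym} of $s_M q$ under $S_{\sigma^{-1}}$ collapses the factor $\log(s_M q)\circ S_{\sigma^{-1}}$ to $\log(s_M q)$, so every summand equals $\int\log(s_M q)\,p^{\otimes M}\,\diff\lambda$; averaging over the $M!$ permutations then yields the claim.

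I do not anticipate a genuine obstacle: the argument is essentially the substitution above combined with the symmetry identities \eqref{sym} and \eqref{fix} already established in the excerpt. The only points meriting a line of care are, first, the well-definedness of $\log(s_M q)$, which may take the value $-\infty$ on the zero set of $s_M q$ and is handled by the boundedness-from-above remark, and second, formulating the change of variables in terms of image measures rather than a Jacobian/density computation, so that it remains valid even when $\log(s_M q)$ fails to be integrable.
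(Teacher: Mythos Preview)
Your proof is correct and follows essentially the same route as the paper: both arguments expand $s_M p$ via \eqref{sym_1}, apply the $\lambda$-preserving change of variables \eqref{fix} term by term, and then use the invariance \eqref{sym} to collapse each summand. Your additional remarks on well-definedness of $\log(s_M q)$ and on phrasing the substitution via pushforwards are reasonable points of care that the paper does not make explicit, but the core argument is the same.
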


\begin{proof}
    By \eqref{fix}, we have
    \begin{align*}
        &-\mathcal J_{M}(q)
                =
        \int_{(\X \times \Y)^M} 
        \log(s_M q)
        \,\diff P_{\bm{Z}}
        \\
        =
        &\int_{(\X \times \Y)^M} 
        \log(s_M q)\cdot
        (s_M p)
        \,\diff (\mu \otimes \nu)^{\otimes M}
        \\
        =
        &
        \frac{1}{M!} \sum_{\sigma \in \mathcal{G}_M}
        \int_{(\X \times \Y)^M} 
        \log\bigl(
        s_M q
        \bigr)\cdot
        \big(p^{\otimes M} \circ S_\sigma\big)
        \,\diff (\mu \otimes \nu)^{\otimes M}
        \\
        =
        &
        \frac{1}{M!} \sum_{\sigma \in \mathcal{G}_M}
        \int_{(\X \times \Y)^M} 
        \log
        \bigl(
        \underbrace{
        (s_M q) \circ S_\sigma^{-1}
        }_{=s_M q \textnormal{ by \eqref{sym}}} 
        \bigr)\cdot
        \big(p^{\otimes  M}\big)
        \,\diff (\mu \otimes \nu)^{\otimes M}
        \\
         =
        &
        \int_{(\X \times \Y)^M} \log(s_M q) \, \diff \pi^{\otimes M}.
    \end{align*}   
    This yields the assertion.
\end{proof}

Next, let us motivate these functionals from the point of view of the KL divergence.
We introduce the \emph{symmetrization operator on measures} 
\[
\mathcal S_M : \mathcal P((\X \times \Y)^M) \to \mathcal P((\X \times \Y)^M), \quad
\mathcal S_M{\boldsymbol \xi} \coloneqq \frac{1}{M!} \sum_{\sigma \in \mathcal G_M} S_{\sigma\#} \boldsymbol{\xi}, \quad 
\boldsymbol{\xi} \in \mathcal P((\X \times \Y)^M).
\]
The relation between $S_M$ and $s_M$ becomes clear in the following proposition and leads to the desired KL
characterization.

\begin{proposition} \label{prop:1}
For $p,q \in \mathcal C(\X \times \Y)$
we consider the measure $\pi\coloneqq p \cdot (\mu \otimes \nu)$ and $\gamma \coloneqq q \cdot (\mu \otimes \nu)$.
Then the following holds true:
\begin{itemize}
\item[i)] The measure $\mathcal S_M \pi ^{\otimes M}$ is absolutely continuous 
with respect to $(\mu \otimes \nu)^{\otimes M}$ and
\begin{equation}
\mathcal S_M \pi ^{\otimes M} = (s_M p) \cdot (\mu \otimes \nu)^{\otimes M}.
\end{equation}
The same holds true for $\pi ^{\otimes M}$ and in particular, 
we have $P_{\zb Z} = \mathcal S_M \pi^{\otimes M}$.
\item[ii)]
The functional $\mathcal J_{M}$ in \eqref{jm} can be written as
\begin{equation}
\mathcal J_{M}(q)  \coloneqq \KL(\mathcal S_M \pi^{\otimes M}| \mathcal S_M \gamma^{\otimes M}) + \text{const}
\end{equation}
with a constant not depending on $q$.
\end{itemize}
\end{proposition}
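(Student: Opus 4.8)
The plan is to prove the two items of Proposition~\ref{prop:1} in order, using Proposition~\ref{prop:basic} and the invariance \eqref{sym} as the main tools.

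\textbf{Item i).} First I would observe that $\pi^{\otimes M} = p^{\otimes M} \cdot (\mu \otimes \nu)^{\otimes M}$ by the very definition of the product measure. Pushing forward under $S_\sigma$ and using \eqref{fix} together with the change-of-variables computation already carried out in \eqref{eq:LawZConditioned}, we get $S_{\sigma\#} \pi^{\otimes M} = (p^{\otimes M}\circ S_\sigma^{-1}) \cdot (\mu \otimes \nu)^{\otimes M}$, which is absolutely continuous with respect to $(\mu \otimes \nu)^{\otimes M}$. Averaging over $\sigma \in \mathcal G_M$ and recalling the definition \eqref{sym_1} of $s_M$ (relabelling $\sigma \mapsto \sigma^{-1}$, as in the proof of Proposition~\ref{prop:basic}), we obtain $\mathcal S_M \pi^{\otimes M} = (s_M p) \cdot (\mu \otimes \nu)^{\otimes M}$. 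The statement $P_{\zb Z} = \mathcal S_M \pi^{\otimes M}$ is then just a restatement of Proposition~\ref{prop:basic}, and the same argument with $q$ in place of $p$ gives $\mathcal S_M \gamma^{\otimes M} = (s_M q) \cdot (\mu \otimes \nu)^{\otimes M}$.

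\textbf{Item ii).} With part i) in hand, both $\mathcal S_M \pi^{\otimes M}$ and $\mathcal S_M \gamma^{\otimes M}$ are expressed as densities times the common reference measure $(\mu \otimes \nu)^{\otimes M}$, so I can apply the formula \eqref{kl} for the KL divergence with $\lambda = (\mu \otimes \nu)^{\otimes M}$, $p \rightsquigarrow s_M p$, $\tilde p \rightsquigarrow s_M q$:
\begin{equation*}
\KL(\mathcal S_M \pi^{\otimes M} \,|\, \mathcal S_M \gamma^{\otimes M}) = \int_{(\X\times\Y)^M} \log(s_M p)\,\diff \mathcal S_M \pi^{\otimes M} - \int_{(\X\times\Y)^M} \log(s_M q)\,\diff \mathcal S_M \pi^{\otimes M}.
\end{equation*}
Since $\mathcal S_M \pi^{\otimes M} = P_{\zb Z}$, the second term on the right equals $-\mathcal J_M(q)$ by Definition~\ref{def1}, while the first term does not depend on $q$, giving the claimed identity with $\const = \int \log(s_M p)\,\diff P_{\zb Z}$.

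\textbf{Main obstacle.} The one point needing care is that \eqref{kl} is stated for probability densities, so I should first check that $\mathcal S_M \pi^{\otimes M}$ and $\mathcal S_M \gamma^{\otimes M}$ are indeed probability measures; for $\pi$ this is clear since $\pi \in \mathcal P(\X\times\Y)$, and for $\gamma = q\cdot(\mu\otimes\nu)$ with $q$ a hypothesis density one likewise has total mass one (each $S_{\sigma\#}$ preserves mass and the convex combination does too, using \eqref{eq:pMarginals}-type normalization of $q$). A second, more technical caveat is well-definedness of the KL integrals: since $\X, \Y$ are compact and $p,q$ continuous, $s_M p$ and $s_M q$ are continuous hence bounded, but one must ensure $\log(s_M q)$ is $P_{\zb Z}$-integrable, i.e. that $s_M q$ is bounded away from zero on $\spt P_{\zb Z}$; if $q$ is not strictly positive this may fail and the functional $\mathcal J_M(q)$ takes the value $+\infty$, in which case the asserted identity still holds with both sides equal to $+\infty$ (as $\KL$ takes values in $\R_+\cup\{\infty\}$). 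I would state this positivity/integrability caveat explicitly and otherwise the proof is a direct substitution.
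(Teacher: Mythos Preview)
Your proposal is correct and follows essentially the same route as the paper: part~i) is deduced from the pushforward computation \eqref{eq:LawZConditioned} and the relabelling $\sigma\mapsto\sigma^{-1}$ exactly as in Proposition~\ref{prop:basic}, and part~ii) is a direct application of \eqref{kl} to the densities $s_M p$ and $s_M q$ with respect to $(\mu\otimes\nu)^{\otimes M}$. The only cosmetic difference is that the paper writes the resulting integrals against $\pi^{\otimes M}$ (invoking Lemma~\ref{equal1} implicitly) whereas you write them against $\mathcal S_M\pi^{\otimes M}=P_{\zb Z}$, which is the more direct reading of \eqref{kl}; your additional remarks on total mass and integrability are sound refinements that the paper does not spell out.
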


\begin{proof}
\begin{itemize}
    \item[i)] Observing that $(\mu \otimes \nu)^{\otimes M}$ is invariant under $\mathcal{S}_M$,
    we see as in the proof of Proposition \ref{prop:basic} that
    \[
    \mathcal S_M \gamma^{\otimes M} 
    = \mathcal S_M \big(q^{\otimes M} \cdot (\mu \otimes \nu)^{\otimes M} \big) 
    = (s_M q) \cdot (\mu \otimes \nu)^{\otimes M}.
    \]
    For $\pi$ this relation implies (see again Proposition \ref{prop:basic}) that $P_{\zb Z} = \mathcal S_M \pi^{\otimes M}$.
    \item[ii)] Due to i) it holds
    $\mathcal S_M \pi^{\otimes M} = (s_M p) \cdot (\mu \otimes \nu)^{\otimes M}$ 
    and 
    $\mathcal S_M \gamma^{\otimes M} = (s_M q) \cdot (\mu \otimes \nu)^{\otimes M}$.
    Thus we obtain by \eqref{kl}
    \[
    \KL(\mathcal S_M \pi^{\otimes M}| \mathcal S_M \gamma^{\otimes M}) 
    = \int_{(\X \times \Y)^M} \log (s_M p) \, \text{d} \pi^{\otimes M} 
    - \int_{(\X \times \Y)^M} \log (s_M q) \, \text{d} \pi^{\otimes M}.
    \]
    The first summand is constant with respect to $q$ and the second is $\mathcal J_M(q)$.
\end{itemize}
\end{proof}

\subsection{Approximate inference functional} \label{subsec:32}

Unfortunately, as $M$ increases, inferring the density $q$ via $\mathcal J_{M}^N$ very quickly 
becomes computationally intractable due to the large number of possible permutations. 
Therefore, we need to resort to an approximate functional, which we motivate in the following.

We have that
\begin{equation}
s_M p (\zb x, \zb y) = \frac{1}{M!} \sum_{\sigma \in \mathcal G_M} 
p_{M_\sigma(Y)|\zb X = \zb x} (\zb y)
=
p_{Y_{\Theta_1},...,Y_{\Theta_M}|\zb X=\zb x}(\zb y).
\end{equation}
Of course, $(Y_{\Theta_j} |\zb X = \zb x)_{j=1}^M$
are not independent. 
But if we allow for this approximation,
we get
\begin{equation}\label{prob_approx}
s_M p (\zb x, \zb y) \approx \prod_{j=1}^M p_{Y_{\Theta_j}|\zb X=\zb x}(y_j).
\end{equation}
The  factors in \eqref{prob_approx} are given by the following lemma.

\begin{lemma} \label{lem:factors}
It holds 
$$
p_{Y_{\Theta_j} \vert \bm{X} = \bm{x}}(y) 
= (t_Mp)(\zb x,y),
$$
where the operator $t_M: \mathcal C(\X\times \Y) \to\ \mathcal C(\X^M \times \Y)$ 
is given by
$$
(t_Mq)(\zb x,y) := \frac{1}{M}\sum_{k=1}^M q(x_k,y), 
$$
\end{lemma}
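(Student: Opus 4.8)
The plan is to compute the conditional density $p_{Y_{\Theta_j}\vert \bm X = \bm x}$ directly by conditioning on the value of the permutation $\Theta$, using the fact that $\Theta$ is uniform on $\mathcal G_M$ and independent of $(\bm X, \bm Y)$. First I would recall that, by definition of $\bm Z$ in \eqref{Z}, the $j$-th $\Y$-component of $\bm Z$ is $Y_{\Theta(j)}$, so we are really asking for the law of $Y_{\Theta(j)}$ given $\bm X = \bm x$. The natural route is the law of total probability over the $M!$ equally likely events $\{\Theta = \sigma\}$:
\begin{equation*}
p_{Y_{\Theta_j}\vert \bm X = \bm x}(y) = \frac{1}{M!}\sum_{\sigma \in \mathcal G_M} p_{Y_{\sigma(j)}\vert \bm X = \bm x}(y).
\end{equation*}
Here I use that conditioning on $\Theta = \sigma$ makes $\Theta_j = \sigma(j)$ a deterministic index, and that $\Theta$ is independent of $(\bm X, \bm Y)$ so conditioning on $\Theta = \sigma$ does not otherwise disturb the joint law of $(\bm X, \bm Y)$.

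Next I would evaluate each summand. Since the pairs $(X_k, Y_k)_{k=1}^M$ are i.i.d.\ with joint law $\pi = p\cdot(\mu\otimes\nu)$, and $\pi$ disintegrates against its $\mu$-marginal at $x$ with density $p(x,\cdot)$ with respect to $\nu$ (as recorded around \eqref{eq:pMarginals}), the conditional law of $Y_{\sigma(j)}$ given $\bm X = \bm x$ depends only on the single coordinate $x_{\sigma(j)}$: by independence across the $M$ pairs, $Y_{\sigma(j)}$ given $\bm X = \bm x$ has density $p(x_{\sigma(j)}, \cdot)$ with respect to $\nu$. Substituting this back yields
\begin{equation*}
p_{Y_{\Theta_j}\vert \bm X = \bm x}(y) = \frac{1}{M!}\sum_{\sigma \in \mathcal G_M} p(x_{\sigma(j)}, y).
\end{equation*}
Finally I would carry out the combinatorial bookkeeping: for fixed $j$, as $\sigma$ ranges over $\mathcal G_M$, the index $\sigma(j)$ takes each value $k \in [M]$ exactly $(M-1)!$ times, so the sum collapses to $\frac{(M-1)!}{M!}\sum_{k=1}^M p(x_k, y) = \frac{1}{M}\sum_{k=1}^M p(x_k,y) = (t_M p)(\bm x, y)$, which is the claimed identity. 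Note the statement is independent of $j$, consistent with the exchangeability of the construction.

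The argument is essentially elementary, so there is no deep obstacle; the one point requiring care is making the conditioning rigorous — in particular justifying that one may condition on the null-probability-free discrete event $\{\Theta = \sigma\}$ and that independence of $\Theta$ from $(\bm X,\bm Y)$ legitimately reduces $p_{Y_{\Theta_j}\vert \bm X=\bm x}$ to the mixture above. One clean way to avoid any measure-theoretic subtlety is to verify the identity in integrated form: for every test function $\phi \in \mathcal C(\Y)$ and every bounded measurable $\psi$ on $\X^M$, compute $\mathbb E[\phi(Y_{\Theta_j})\,\psi(\bm X)]$ by first taking the expectation over $\Theta$ (independence), then over $(\bm X,\bm Y)$ using $\pi^{\otimes M}$ and the disintegration of $\pi$, and check it equals $\int_{\X^M}\big(\int_\Y \phi(y)\,(t_Mp)(\bm x,y)\,\diff\nu(y)\big)\psi(\bm x)\,\diff\mu^{\otimes M}(\bm x)$; this pins down the conditional density uniquely as $(t_Mp)(\bm x,\cdot)$.
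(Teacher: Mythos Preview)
Your proposal is correct and follows essentially the same approach as the paper: condition on $\Theta=\sigma$ via the law of total probability, identify the resulting conditional density as $p(x_{\sigma(j)},y)$, and collapse the sum over $\mathcal G_M$ using the count $|\{\sigma:\sigma(j)=k\}|=(M-1)!$. The paper differs only cosmetically in that it first computes the joint density $p_{\bm X,Y_{\Theta_j}\vert\Theta=\sigma}$ by explicitly integrating out the remaining $y$-variables (using \eqref{eq:pMarginals}) before conditioning on $\bm X=\bm x$, whereas you invoke the i.i.d.\ structure and the disintegration of $\pi$ directly; your added remark on verifying the identity in integrated form against test functions is a welcome extra bit of rigor not spelled out in the paper.
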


\begin{proof}
For identification purposes, let $\X_i,\Y_i$, $i \in [M]$, 
be copies of $\X,\Y$, respectively.
The probability for $(\bm{X},Y_{\Theta_j})$ given $\Theta = \sigma$ to lie within some measurable set $A \subseteq \X^N \times \Y_{\sigma(j)}$ is, using \eqref{eq:LawZConditioned},
\begin{align*}
&P_{\bm{X},Y_{\Theta_j} \vert \Theta = \sigma}(A)
=
P_{\bm{Z} \vert \Theta = \sigma} (A \times \bigtimes_{l \neq \sigma(j)} \Y_l)
\\
=
&\int_{A \times \bigtimes_{l \neq \sigma(j)} \Y_l} 
\prod_{l} p(x_l,y_{\sigma^{-1}(l)}) \,
\diff (\mu \otimes \nu)^{\otimes M} (\bm{x},\bm{y})
\\
=
&\int_{A} 
p(x_{\sigma(j)},y_j) 
\underbrace{\biggl(
\prod_{l \neq \sigma(j)} 
\int_{\Y} p(x_l,y_{\sigma^{-1}(l)}) \diff \nu(y_{\sigma^{-1}(l)})
\biggr)}_{=1 \textnormal{ by \eqref{eq:pMarginals}}}
\diff (\mu^{\otimes M} \otimes \nu) (x_1,\dotsc,x_M,y_{j}).
\end{align*}
Hence we conclude
\[
P_{\bm{X},Y_{\Theta_j} \vert \Theta = \sigma} 
= 
p_{\bm{X},Y_{\Theta_j} \vert \Theta = \sigma} \cdot \mu^{\otimes M} \otimes \nu
\quad \text{with} \quad
p_{\bm{X},Y_{\Theta_j} \vert \Theta = \sigma} (\zb x, y)
\coloneqq
p(x_{\sigma(j)},y).
\]
By \eqref{eq:pMarginals}, the conditional density of $Y_{\Theta_j}$ 
with respect to $\nu$, 
conditioned on $\bm{X} = \bm{x}$ and $\Theta = \sigma$,
is given by 
$p_{Y_{\Theta_j} \vert \bm{X} = \bm{x}, \Theta = \sigma}(y) = p(x_{\sigma(j)},y)$.
Finally, the conditional density of $Y_{\Theta_j}$ 
with respect to $\nu$, 
conditioned on $\bm{X} = \bm{x}$ reads as
\begin{align*}
p_{Y_{\Theta_j} \vert \bm{X} = \bm{x}}(y) 
&= \sum_{\sigma \in \mathcal{G}_M} p_{Y_{\Theta_j} \vert \bm{X} = \bm{x}, \Theta = \sigma}(y) \cdot P(\Theta = \sigma)
= \frac{1}{M!} \sum_{\sigma \in \mathcal{G}_M} p(x_{\sigma(j)},y)
\\
&= \frac{1}{M!} \sum_{k=1}^M \sum_{\sigma \in \mathcal{G}_M : \sigma(j) = k} p(x_k,y)
= \frac{1}{M} \sum_{k=1}^M p(x_k,y). \qedhere
\end{align*}
\end{proof}

Using the lemma, 
in \eqref{prob_approx} we get
\[
s_M p (\zb x, \zb y) 
\approx \prod_{j=1}^M \frac1M \sum_{k=1}^M p(x_k,y_j).
\]
Substituting this into the permutation functional $\mathcal J_M$,
we obtain, up to a factor $\frac{1}{M}$, 
the approximation functional below. 
We include this factor, 
as it will lead to more convenient expressions in our further analysis.

\begin{definition}[Approximate inference functional] \label{def2}
We define the {\rm inference functional}  by
\begin{align} 	
	J_M(q) 
 & \assign - \frac{1}{M} \int_{(\X \times \Y)^M}  \sum_{j=1}^M 
	\log \Big( \frac1M
	\sum_{k=1}^M q(x_{k},y_{j}) 
	\Big)
	\diff P_{\zb Z}(\zb x, \zb y) \\
 &= - \frac{1}{M} \int_{(\X \times \Y)^M}  \sum_{j=1}^M 
	\log \Big( \frac1M
	\sum_{k=1}^M q(x_{k},y_{j}) 
	\Big)
	\diff \pi^{\otimes M} (\zb x, \zb y) . 
 \label{eq:LossCloudsPopulation}
	\end{align}
For $N$ independently drawn samples \smash{$
(\zb x^i,\zb y^i)
= \left((x_j^i,y_j^i) \right)_{j=1}^M, i \in [N]
$}
of $\zb Z$,  the {\rm empirical inference functional} is given by
\begin{equation} 	\label{eq:LossCloudsEmpirical}
	J^N_M(q) \assign - \frac{1}{M N} \sum_{i=1}^N \sum_{j=1}^M \log \Big(
	\frac1M \sum_{k=1}^M q(x_{k}^i,y_{j}^i) \Big),
\end{equation}
\end{definition}

We remark that, the equality in \eqref{eq:LossCloudsPopulation} 
can be seen in the same way as Lemma \ref{equal1}.
The approximate inference functionals can be written in a simpler form. 

\begin{lemma}\label{jm_short}
The functional $J_M$ in \eqref{eq:LossCloudsPopulation} can be written as
\begin{align} \label{jm_other}
J_M(q)  &= - \int_{\X^M \times \Y} \log( t_M q) \, \diff \pi \diff \mu^{\otimes (M-1)}.
\end{align}
\end{lemma}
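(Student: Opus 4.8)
The plan is to start from the population form \eqref{eq:LossCloudsPopulation} of $J_M$ and exploit the permutation-invariance of $\pi^{\otimes M}$ to collapse the outer sum over $j$ into a single term, while the inner structure is already exactly $t_M q$ evaluated at $(\zb x, y_j)$. Concretely, recognizing $\frac{1}{M}\sum_{k=1}^M q(x_k, y_j) = (t_M q)(\zb x, y_j)$ by the definition of $t_M$ in Lemma \ref{lem:factors}, we can rewrite
\[
J_M(q) = -\frac{1}{M}\sum_{j=1}^M \int_{(\X\times\Y)^M} \log\bigl((t_M q)(\zb x, y_j)\bigr)\, \diff\pi^{\otimes M}(\zb x,\zb y).
\]
The key observation is that each summand in $j$ depends on $(\zb x, \zb y)$ only through $\zb x = (x_1,\dots,x_M)$ and the single coordinate $y_j$. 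Since $\pi^{\otimes M}$ is invariant under permuting the $M$ index blocks $(x_j,y_j)$ simultaneously, the integral $\int \log((t_M q)(\zb x, y_j))\,\diff\pi^{\otimes M}$ is independent of $j$: relabeling the product coordinates by the transposition swapping $1$ and $j$ leaves $\pi^{\otimes M}$ fixed and sends the $j$-th summand to the first. Hence all $M$ summands are equal and the prefactor $\frac{1}{M}$ cancels the sum, leaving
\[
J_M(q) = -\int_{(\X\times\Y)^M} \log\bigl((t_M q)(\zb x, y_1)\bigr)\, \diff\pi^{\otimes M}(\zb x,\zb y).
\]

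It then remains to marginalize out the coordinates that no longer appear in the integrand. The integrand depends only on $x_1,\dots,x_M$ and $y_1$; the variables $y_2,\dots,y_M$ are integrated against $\nu^{\otimes(M-1)}$ (each second marginal of $\pi$ is $\nu$), contributing a factor $1$. What is left is an integral over the law of $(X_1,\dots,X_M,Y_1)$, which factors as $\pi$ in the pair $(x_1,y_1)$ times $\mu^{\otimes(M-1)}$ in $(x_2,\dots,x_M)$, because the blocks are i.i.d. and the first marginal of $\pi$ is $\mu$. Writing the $x$-argument of $t_M q$ as $\zb x=(x_1,\dots,x_M)$ and the $y$-argument as $y_1$, this is exactly
\[
J_M(q) = -\int_{\X^M\times\Y} \log(t_M q)\, \diff\pi\, \diff\mu^{\otimes(M-1)},
\]
which is the claimed identity \eqref{jm_other} (with the understanding that $\diff\pi$ pairs $x_1$ with the $\Y$-variable and $\diff\mu^{\otimes(M-1)}$ supplies $x_2,\dots,x_M$).

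The only mild subtleties, rather than genuine obstacles, are bookkeeping ones: making the relabeling argument precise via the pushforward identity $(S'_\tau)_\#\pi^{\otimes M}=\pi^{\otimes M}$ for the block-permutation $S'_\tau$ acting on $(\X\times\Y)^M$ (analogous to \eqref{fix} but permuting whole pairs), and being careful that $\log(t_M q)$ is integrable — which holds since $q\in\mathcal C_+(\X\times\Y)$ on a compact space, so $t_M q$ is bounded and the integral is well-defined in $(-\infty,+\infty]$, exactly as already implicitly assumed for $J_M$ to make sense. No step here is deep; the proof is essentially the symmetrization-plus-marginalization computation, and the main thing to get right is the consistent labeling of which product coordinate feeds into which slot of $t_M q$ and $\pi$.
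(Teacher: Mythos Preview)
Your proof is correct and follows essentially the same approach as the paper: identify the inner average as $t_M q$, use the permutation symmetry of $\pi^{\otimes M}$ (and of $t_M q$ in its $\zb x$-argument) to reduce the sum over $j$ to a single term, and marginalize out the unused $y$-coordinates. The only cosmetic difference is the order of operations---the paper marginalizes each summand first and then collapses the sum, whereas you collapse first and then marginalize---but the substance is identical.
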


\begin{proof}
We can rewrite $J_M$ as
\begin{align}
&J_M(q) 
=
- \frac{1}{M} \sum_{j=1}^M 
\int_{(  \X \times \Y)^M} 
\log \big(
	(t_M q)(\zb x,y_j)
	\big)
	\diff \pi(x_1,y_1)\ldots \diff \pi(x_M,y_M) \\
& =
-\frac1M	\sum_{j=1}^M \int_{  \X^{M} \times \Y}
\log \big(
	(t_M q)(\zb x,y_j)
	\big)
	\diff \mu(x_1)  \ldots \diff \mu(x_{j-1}) \diff \pi(x_j,y_j) \diff \mu(x_{j+1}) \ldots \diff \mu(x_M)\\
&=	- \int_{\X^M \times \Y} \log \big(
	(t_M q)(\zb x,y_1)
	\big) \diff \pi(x_1,y_1) \diff \mu(x_2)  \ldots \diff \mu(x_M)
\end{align}
which yields the assertion.
\end{proof}

Next, we introduce  the projection operator 
$P : (\X \times \Y)^M \to (\X^M \times \Y)$ by
\begin{equation}
P\left((\zb x,\zb y)\right) \mapsto (\zb x,y_1).
\end{equation}
By the following proposition,
$t_M p$ plays the same role for $P_\#S_M \pi^{\otimes M}$ as
$s_M p$ did for $S_M \pi^{\otimes M}$.

\begin{proposition}\label{prop:KL_form}
For $p,q \in \mathcal C(\X \times \Y)$
we consider the measure $\pi\coloneqq p \cdot (\mu \otimes \nu)$ and $\gamma \coloneqq q \cdot (\mu \otimes \nu)$.
Assume that $\int_\Y q(x,y) \, \diff\nu(y)=1$ for $\mu$-almost all $x \in \X$.
Then the following holds true:
\begin{itemize}
\item[i)] The measure $P_\# \mathcal S_M \pi ^{\otimes M}$ is absolutely continuous 
with respect to $(\mu^{\otimes M} \otimes \nu)$ and
\begin{equation}
P_\# \mathcal S_M \gamma^{\otimes M} = (t_M q) \cdot (\mu^{\otimes M} \otimes \nu).
\end{equation}
\item[ii)]
The functional $J_{M}$ in \eqref{eq:LossCloudsPopulation} can be written as
\begin{equation}
J_{M}(q)  \coloneqq \KL( P_\# \mathcal S_M \pi^{\otimes M}|  P_\# \mathcal S_M \gamma^{\otimes M}) + \text{const}
\end{equation}
with a constant not depending on $q$.
\end{itemize}
\end{proposition}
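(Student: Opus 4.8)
The plan is to mirror the structure of Proposition~\ref{prop:1}, replacing the symmetrization operator $\mathcal S_M$ followed by $s_M$ with the composition $P_\# \circ \mathcal S_M$ followed by $t_M$. For part~i), the key computation is to identify the density of $P_\# \mathcal S_M \gamma^{\otimes M}$ with respect to $\mu^{\otimes M} \otimes \nu$. I would start from Proposition~\ref{prop:1}~i), which already gives $\mathcal S_M \gamma^{\otimes M} = (s_M q) \cdot (\mu \otimes \nu)^{\otimes M}$, and then push forward under $P$. Since $P$ forgets all the $\Y$-coordinates except the first, pushing forward integrates out $y_2, \ldots, y_M$, so the resulting density at $(\zb x, y_1)$ is $\int_{\Y^{M-1}} (s_M q)(\zb x, y_1, \ldots, y_M) \, \diff \nu(y_2) \cdots \diff\nu(y_M)$. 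Expanding $s_M q$ as $\frac{1}{M!}\sum_\sigma \prod_j q(x_j, y_{\sigma(j)})$ and using the normalization hypothesis $\int_\Y q(x,y)\,\diff\nu(y) = 1$ for $\mu$-a.e.\ $x$, every factor $q(x_j, y_{\sigma(j)})$ with $\sigma(j) \neq 1$ integrates to $1$; only the factor involving $y_1$, namely $q(x_{\sigma^{-1}(1)}, y_1)$, survives. Averaging $q(x_{\sigma^{-1}(1)}, y_1)$ over the $M!$ permutations $\sigma$ gives $\frac{1}{M}\sum_{k=1}^M q(x_k, y_1) = (t_M q)(\zb x, y_1)$, which is exactly the claimed density; the absolute continuity of $P_\# \mathcal S_M \pi^{\otimes M}$ follows the same way applied to $p$ (or is inherited from part~i) of Proposition~\ref{prop:1} under push-forward).

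For part~ii), I would combine part~i) with Lemma~\ref{jm_short}. By i), applied to both $\pi$ (noting that the normalization $\int_\Y p(x,y)\,\diff\nu(y) = 1$ holds automatically by \eqref{eq:pMarginals}) and $\gamma$, we have $P_\# \mathcal S_M \pi^{\otimes M} = (t_M p)\cdot(\mu^{\otimes M}\otimes\nu)$ and $P_\# \mathcal S_M \gamma^{\otimes M} = (t_M q)\cdot(\mu^{\otimes M}\otimes\nu)$. Both are probability measures on $\X^M \times \Y$ with a common reference measure $\mu^{\otimes M}\otimes\nu$, so formula \eqref{kl} gives
\begin{equation*}
\KL\bigl(P_\# \mathcal S_M \pi^{\otimes M}\,\big|\,P_\# \mathcal S_M \gamma^{\otimes M}\bigr)
= \int_{\X^M\times\Y} \log(t_M p)\,\diff\bigl(P_\# \mathcal S_M \pi^{\otimes M}\bigr)
- \int_{\X^M\times\Y} \log(t_M q)\,\diff\bigl(P_\# \mathcal S_M \pi^{\otimes M}\bigr).
\end{equation*}
The first term does not depend on $q$, so it is absorbed into the constant. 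For the second term, I would identify the measure $P_\# \mathcal S_M \pi^{\otimes M}$ with the measure against which $J_M$ integrates $\log(t_M q)$ in Lemma~\ref{jm_short}, namely $\pi \otimes \mu^{\otimes(M-1)}$ (with the $\pi$-factor carrying the coordinates $(x_1, y_1)$ — or any single pair, by symmetry of $t_M q$ in the $x$-variables and the invariance \eqref{sym}). Concretely, one checks that for a test function $\phi(\zb x, y)$, $\int \phi \, \diff(P_\# \mathcal S_M \pi^{\otimes M}) = \int \phi(\zb x, y_1) (s_M p)(\zb x, \zb y)\,\diff(\mu\otimes\nu)^{\otimes M}$, and unwinding $s_M p$ and integrating out the redundant $y$-coordinates (again via \eqref{eq:pMarginals}) exactly reproduces $\int \phi \, \diff\pi\,\diff\mu^{\otimes(M-1)}$; hence the second term equals $J_M(q)$ by Lemma~\ref{jm_short}, completing the proof.

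The main obstacle, and really the only place requiring care, is the bookkeeping in part~i): correctly tracking which $q$-factors integrate to $1$ under the push-forward and which single factor survives, and then verifying that the average over $\mathcal G_M$ of the surviving factor collapses to the symmetric mean $t_M q$. This is the same counting argument as in the proof of Lemma~\ref{lem:factors} (partitioning $\mathcal G_M$ according to the value $\sigma^{-1}(1)$, each fiber having $(M-1)!$ elements), so it is routine but must be written out cleanly. A minor secondary point is to make sure the normalization hypothesis on $q$ is genuinely used — without it, $P_\# \mathcal S_M \gamma^{\otimes M}$ need not even be a probability measure, let alone have density $t_M q$ — whereas for $\pi$ the analogous normalization is free from \eqref{eq:pMarginals}; stating this asymmetry explicitly keeps the argument honest.
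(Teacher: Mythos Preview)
Your proposal is correct and follows essentially the same route as the paper: for part~i) you integrate out $y_2,\ldots,y_M$ from the density $s_M q$ (the paper does the equivalent computation via duality with a test function $\phi$), and for part~ii) both you and the paper split the $\KL$ via \eqref{kl} and reduce the $q$-dependent term to the expression in Lemma~\ref{jm_short}.

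One imprecision worth tightening: you write that ``for a test function $\phi(\zb x,y)$'' the integral against $P_\# \mathcal S_M \pi^{\otimes M}$ ``exactly reproduces $\int \phi\,\diff\pi\,\diff\mu^{\otimes(M-1)}$''. This is false for a general $\phi$, since $P_\# \mathcal S_M \pi^{\otimes M} = (t_M p)\cdot(\mu^{\otimes M}\otimes\nu)$ has density $\tfrac{1}{M}\sum_k p(x_k,y)$, not $p(x_1,y)$; the two measures differ whenever $\phi$ is not symmetric in the $x$-variables. What is true (and what both you and the paper actually use) is that the two integrals agree when $\phi = \log(t_M q)$, precisely because $t_M q$ is symmetric in $\zb x$. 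You already flag this symmetry parenthetically, so the argument is sound; just avoid phrasing it as an identification of the measures themselves.
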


\begin{proof}
i) For any $\phi \in \mathcal C(\X^M\times \Y)$, we obtain
\begin{align*}
  & \int_{\X^M\times \Y} \phi \, \diff P_{\#} S_M \gamma^{\otimes M}
  = \frac{1}{M!}\sum_{\sigma \in \mathcal G_M} \int_{(\X \times \Y)^M} \phi\circ P\circ S_\sigma \ \diff \left(q\cdot \mu\otimes \nu\right)^{\otimes M}
	\\
  = & \frac{1}{M!}\sum_{\sigma \in \mathcal G_M}
	\int_{(\X \times \Y)^M} \phi(\zb x,y_{\sigma(1)}) \cdot \prod_k q(x_k,y_k)\,
    \diff \mu(x_1)\ldots\diff \mu(x_M) \diff \nu(y_1)\ldots\diff \nu(y_M)
\intertext{and further by assumption on $q$ that}
= &
  \frac{1}{M!}\sum_{i=1}^M \sum_{\sigma:\sigma(1)=i} \int_{\X^M\times \Y} \phi(\zb x,y_{i}) \cdot q(x_i,y_i)\,
    \diff \mu^{\otimes M}(\zb x)  \diff \nu(y_i) 
		\\
  = & \frac{1}{M}\sum_{i=1}^M \int_{\X^M\times \Y} \phi(\zb x,y_{i}) \cdot q(x_i,y_i)\,
    \diff \mu^{\otimes M}(\zb x) \diff \nu(y_i) \\
  = & \int_{\X^M\times \Y} \phi(\zb x,y_{1}) \cdot \Big( \frac{1}{M}\sum_{i=1}^M q(x_i,y_1) \Big)\,
    \diff \mu^{\otimes M}(\zb x) \diff \nu(y_1) \\
  = & \int_{\X^M\times \Y} \phi \cdot (t_M q)\,\diff (\mu^{\otimes M}\otimes \nu).
\end{align*}
ii) By \eqref{kl} and part i) we conclude 
$$
\KL(P_{\#}S_M \pi^{\otimes M}| P_{\#}\mathcal S_M \gamma^{\otimes M}) 
= \int_{\X^M \times \Y} \!\!\!\!\!\log (t_M p) \, \diff P_{\#}S_M \pi^{\otimes M} 
- \int_{\X^M \times \Y} \!\!\!\!\!\log (t_M q) \, \diff P_{\#}S_M \pi^{\otimes M}.
$$
The first summand is constant with respect to $q$.
For the second summand, we get 
\begin{align*}
&	-\int_{\X^M \times \Y} \log( t_M q)\,\diff P_\# \mathcal S_M \pi^{\otimes M}
= 	- \frac{1}{M!} \sum_{\sigma \in \mathcal G_M} \int_{(\X \times \Y)^M} 
\log( (t_M q) \circ P \circ S_\sigma) \, 
\diff \pi^{\otimes M}
\\
&= - \frac{1}{M!} \sum_{\sigma \in \mathcal G_M} \int_{(\X \times \Y)^M} 
\log((t_M q)(\zb x,y_{\sigma(1)})) \, \diff \pi(x_1,y_1)\ldots \diff \pi(x_M,y_M)   \\
&= - \frac{1}{M!} \sum_{j=1}^M \sum_{\sigma:\sigma(1)=j} \int_{\X^M \times \Y} \log((t_M q)(\zb x,y_{j}))\,\\
& \qquad 
	\diff \mu(x_1)\ldots\diff \mu(x_{j-1}) 
	\diff \pi(x_j,y_j)
	\diff \mu(x_{j+1})\ldots \diff \mu(x_M) \\
&=  - \int_{\X^M \times \Y} \log(t_M q)(\zb x,y_1) \, \diff \pi(x_1,y_1)\diff \mu(x_2)\ldots\diff \mu(x_M). 
\end{align*}
which is by Lemma \ref{jm_short} the same as $J_{M}(q)$. 
\end{proof}

Finally, let $\bm{Z}_i$, $i \in [N]$, 
be independently distributed random variables as $\bm{Z}$ in \eqref{Z}.
We consider
\[
\pi_M^N:\Omega \to \mathcal{P}((\X \times \Y)^M), \quad
\pi_M^N(\omega) 
\coloneqq 
\frac{1}{N} \sum_{i=1}^N \delta_{\bm{Z}_i(\omega)}.
\]
Note that $\pi_M^N$ is a random measure,
that is the map
\[
\Omega \to [0,\infty), \,\, \omega \mapsto \big(\pi_M^N(\omega)\big)(A)
\]
is measurable for all Borel sets $A \in (\X \times \Y)^M$.
For simplicity, and as is common, 
we use the same notation for the random measure $\pi_M^N$
and for the associated empirical measure in \eqref{empirical}.
The meaning is always clear from the context.
For the next result, we consider $\mathcal J_M^N(q)$ and $J_M^N(q)$
as random variables obtained when replacing the empirical $\pi_M^N$
with the random version in their respective definition.

\begin{corollary}\label{lem:SymOp}
Let $q \in \mathcal C(\X \times \Y)$.
\begin{itemize}
\item[i)] The expected value $\mathbb E(\pi_M^N) \in \mathcal P((\X \times \Y)^M)$ of $\pi_M^N$
    is given by
    \[
    \int_{\Omega} \pi^N_M(\omega) \, \diff \PP(\omega)
    = 
    \mathcal{S}_M \pi^{\otimes M}, 
        \]
    where the integral on the left-hand side is a Bochner (Pettis) integral.
    It holds that
    $$
    \pi_M^N \rightweaks \mathcal{S}_M \pi^{\otimes M} \text{ as }  N \to \infty \; \text{a.s.}.
    $$
\item[ii)]  The expected value of $\mathcal J_M^N(q)$ is $\mathcal J_M(q)$ and 
    $\mathcal J_M^N(q) \to \mathcal J_M(q)$ a.s. as $N \to \infty$.
\item[iii)] 
    The expected value of $J_M^N(q)$ is $J_M(q)$ and 
    $J_M^N(q) \to J_M(q)$ a.s. as $N \to \infty$.
    \end{itemize}
\end{corollary}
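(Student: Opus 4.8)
The plan is to reduce all three claims to the fact that the blocks $\bm{Z}_1,\dots,\bm{Z}_N$ are independent and identically distributed with common law $P_{\bm{Z}} = \mathcal S_M \pi^{\otimes M}$ (Propositions~\ref{prop:basic} and~\ref{prop:1}), combined with the strong law of large numbers (SLLN). The expectation identities in i)--iii) are then just linearity of the expectation, and the almost sure convergences are the SLLN applied to suitable i.i.d.\ averages; in part i) this is supplemented by the standard argument (going back to Varadarajan) that passes from convergence against a countable dense family of test functions to weak-$\ast$ convergence.

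For i), fix $\phi \in \mathcal C((\X \times \Y)^M)$. The map $\omega \mapsto \langle \pi_M^N(\omega),\phi\rangle = \frac1N\sum_{i=1}^N \phi(\bm{Z}_i(\omega))$ is measurable and bounded by $\|\phi\|_\infty$, hence integrable, and
\[
\int_\Omega \langle \pi_M^N(\omega),\phi\rangle \,\diff\PP(\omega) = \frac1N\sum_{i=1}^N \mathbb E\big[\phi(\bm{Z}_i)\big] = \int_{(\X\times\Y)^M}\phi\,\diff P_{\bm{Z}} = \big\langle \mathcal S_M \pi^{\otimes M}, \phi\big\rangle .
\]
As $\phi$ was arbitrary, this identifies the (weak, i.e.\ Pettis/Gelfand) integral of the measure-valued map $\pi_M^N$ with $\mathcal S_M\pi^{\otimes M} \in \mathcal P((\X\times\Y)^M)$; in particular it is a probability measure. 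For the almost sure weak-$\ast$ convergence, fix a countable dense subset $\{\phi_n\}_{n\in\N}$ of the separable space $\mathcal C((\X\times\Y)^M)$. By the SLLN, for every $n$ there is a $\PP$-full event on which $\langle\pi_M^N,\phi_n\rangle \to \langle \mathcal S_M\pi^{\otimes M},\phi_n\rangle$; on the (still $\PP$-full) intersection of these events, this holds simultaneously for all $n$. Given arbitrary $\phi$ and $\varepsilon>0$, pick $\phi_n$ with $\|\phi-\phi_n\|_\infty<\varepsilon$; using $\|\pi_M^N(\omega)\|_{\mathcal M} = \|\mathcal S_M\pi^{\otimes M}\|_{\mathcal M} = 1$ one gets $\limsup_{N\to\infty} |\langle \pi_M^N(\omega) - \mathcal S_M\pi^{\otimes M},\phi\rangle| \le 2\varepsilon$, and $\varepsilon\downarrow0$ gives $\pi_M^N \rightweaks \mathcal S_M\pi^{\otimes M}$ a.s.

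For ii) and iii), I observe that the empirical functionals are precisely i.i.d.\ averages over the blocks: setting $h \assign -\log(s_M q)$ and $g(\bm{x},\bm{y}) \assign -\frac1M\sum_{j=1}^M \log\big(\frac1M\sum_{k=1}^M q(x_k,y_j)\big)$, one has $\mathcal J_M^N(q) = \frac1N\sum_{i=1}^N h(\bm{Z}_i)$ and $J_M^N(q) = \frac1N\sum_{i=1}^N g(\bm{Z}_i)$, where as usual $\log$ of a nonpositive number is read as $-\infty$. By continuity of $q$ on the compact space $\X\times\Y$ we have $s_M q \le \|q\|_\infty^M$ and $\frac1M\sum_k q(\cdot,\cdot) \le \|q\|_\infty$, so $h$ and $g$ are bounded from below; hence the SLLN --- in the version for i.i.d.\ random variables bounded below, allowing the limit and the mean to be $+\infty$ --- yields $\mathcal J_M^N(q)\to\mathbb E[h(\bm{Z})] = -\int\log(s_M q)\,\diff P_{\bm{Z}} = \mathcal J_M(q)$ and $J_M^N(q)\to\mathbb E[g(\bm{Z})] = J_M(q)$ almost surely, the identification of the means being exactly \eqref{jm} and \eqref{eq:LossCloudsPopulation}. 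Replacing the a.s.\ limit by the expectation in the same computation and using linearity gives $\mathbb E[\mathcal J_M^N(q)] = \mathcal J_M(q)$ and $\mathbb E[J_M^N(q)] = J_M(q)$.

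I do not expect a deep obstacle; the only points requiring care are bookkeeping. First, integrability: if $q$ vanishes on a set of positive $(\mu\otimes\nu)^{\otimes M}$-measure, then $h$ (resp.\ $g$) is genuinely $+\infty$ on a set of positive measure and $\mathcal J_M(q)$ (resp.\ $J_M(q)$) equals $+\infty$; the argument above still applies because these functions are bounded below, so the extended-valued SLLN is in force and all statements hold in $\R\cup\{+\infty\}$, while for $q$ bounded away from $0$ --- the regime relevant for the hypothesis classes of Section~\ref{sec:Kernels} --- $h,g$ are bounded and one is in the classical finite-valued SLLN. Second, the vector-valued integral in i): in the non-discrete setting $\pi_M^N$ need not be strongly (Bochner-)measurable with respect to the total-variation norm, so the integral should be understood in the weak-$\ast$ (Pettis/Gelfand) sense, which is all that is used and which the displayed identity establishes; after the discretization of Section~\ref{sec:Algorithm}, $\mathcal M$ is finite-dimensional and the Bochner integral coincides with it.
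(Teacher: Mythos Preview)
Your proof is correct and follows the same overall strategy as the paper --- reduce everything to the i.i.d.\ structure of the blocks and invoke the SLLN --- but the execution differs in two respects, both in your favour.

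For part~i), the paper argues setwise: for each fixed measurable $A$ the SLLN gives $\pi_M^N(A)\to \mathcal S_M\pi^{\otimes M}(A)$ a.s., and then appeals to Portmanteau. Your Varadarajan-style argument via a countable dense family of continuous test functions is the standard way to get around the uncountably-many-null-sets issue that the paper's phrasing glosses over, and your remark that the integral should really be read in the weak-$\ast$/Pettis sense is a legitimate technical refinement.

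For parts~ii) and~iii), the paper first establishes the weak-$\ast$ convergence of~i) and then integrates the continuous function $\log(s_M q)$ against it; this implicitly requires $s_M q>0$ (otherwise $\log(s_M q)\notin\mathcal C$), which is not assumed in the statement. You instead recognise $\mathcal J_M^N(q)$ and $J_M^N(q)$ directly as i.i.d.\ averages of the bounded-below functions $h$ and $g$ and apply the extended-valued SLLN, which is both more direct and covers the case where the functionals are $+\infty$. This buys you the full statement without the hidden positivity assumption, at no extra cost.
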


\begin{proof}
\begin{itemize}
    \item[i)] 
    First, since $(\bm{Z}_i)_{i=1}^N$ are independent
    identically distributed,
    the same holds true for the random variables $(\delta_{\bm{Z}_i}(A))_{i=1}^N$
    for any fixed measurable $A \subseteq (\X \times \Y)^M$.
    Together with the definition of the Bochner integral,
    this gives
    \[
    \biggl(
    \int_{\Omega} \pi^N_M(\omega) \, \diff \PP(\omega)
    \biggr)(A)
    =
    \int_{\Omega} \big(\pi^N_M(\omega) \big)(A) \, \diff \PP(\omega)
    =
    \int_{\Omega} \delta_{\bm{Z}(\omega)}(A) \, \diff \PP(\omega).
    \]
    Proceeding from there, we obtain with Proposition \ref{prop:1}i) that
    \begin{align*}
        \int_{\Omega} \delta_{\bm{Z}(\omega)}(A) \, \diff \PP(\omega)
        &= 
        \int_{(\X \times \Y)^M} 
        \delta_{(\zb x, \zb y)}(A) 
        \, \diff P_{\bm{Z}}(\zb x, \zb y)
        =
        \int_{A} 
        \, \diff \mathcal{S}_M \pi^{\otimes M}(\zb x, \zb y).
    \end{align*}
    Furthermore, 
    for any fixed measurable $A \subseteq (\X \times \Y)^M$,
    $(\delta_{\bm{Z}_i}(A))_{i=1}^N$ 
    is a family of independent, identically distributed
    real-valued random variables, so that the
    strong law of large numbers yields
    \[
    \pi_M^N(A) 
    \to \int_{\Omega} \delta_{\bm{Z}(\omega)}(A) \, \diff \PP(\omega) 
    = \mathcal{S}_M \pi^{\otimes M}(A), \quad \text{a.s.}, 
    \]
    as $N \to \infty$.
    The Portmanteau theorem implies
    $\pi_M^N \rightweaks \pi_M^N$ almost surely.

    \item[ii)]
    By part i), we  obtain
    \begin{align*}
        \mathbb{E}(\mathcal{J}_M^N(q)) 
        &= \int_{\Omega} - \int_{(\X \times \Y)^M} \log(s_M q) \, \diff \pi_M^N \, \diff \PP
        \\
        &= - \int_{(\X \times \Y)^M} \log(s_M q) \, \diff \mathcal{S}_M \pi^{\otimes M}
        = \mathcal{J}_M(q).
    \end{align*}
    Using i) again together with $\log(s_M q) \in C((\X \times \Y)^M)$ yields
    $\mathcal J_M^N(q) \to \mathcal J_M(q)$ almost surely as $N \to \infty$,
    as desired.

    \item[iii)] This follows by the same arguments as ii). \qedhere
\end{itemize}
\end{proof}

\subsection{Basic properties of inference functionals}\label{sec:FunctionalsBasic}
In the following, we prove various basic properties of our inference functionals.

\begin{proposition}[Lower semi-continuity]\label{prop:lsc}
The functionals 
$J_M, J_M^N: \mathcal C_+(\X \times \Y) \to \mathbb R$ defined by \eqref{eq:LossCloudsPopulation} and \eqref{eq:LossCloudsEmpirical}
are lower semi-continuous.
If $q>0$, then $J_M$ and $J_M^N$ are continuous in $q$.
\end{proposition}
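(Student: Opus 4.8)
The plan is to prove lower semicontinuity directly from the definitions \eqref{eq:LossCloudsEmpirical} and \eqref{eq:LossCloudsPopulation}, exploiting that $-\log$ composed with a nonnegative linear functional of $q$ is a nicely behaved function. First I would treat the empirical functional $J_M^N$. Fix a sequence $q_n \to q$ in $\mathcal C_+(\X \times \Y)$ with respect to $\|\cdot\|_\infty$. For each fixed block index $i$ and point index $j$, the quantity $\frac1M \sum_{k=1}^M q_n(x_k^i, y_j^i)$ converges to $\frac1M \sum_{k=1}^M q(x_k^i, y_j^i) \ge 0$, because uniform convergence gives pointwise convergence and the sum is finite. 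Since $J_M^N$ is a finite sum (over $i \in [N]$ and $j \in [M]$) of terms of the form $-\log(\cdot)$ applied to these averages, it suffices to show that $t \mapsto -\log t$, extended by $-\log 0 \assign +\infty$, is lower semicontinuous on $[0,\infty)$; this is elementary. Lower semicontinuity is preserved under finite sums and under multiplication by the positive constant $\tfrac1{MN}$, so $J_M^N$ is lower semicontinuous. If moreover $q > 0$, then by compactness of $\X \times \Y$ we have $q \ge c > 0$, hence each average $\frac1M\sum_k q(x_k^i,y_j^i) \ge c > 0$ as well, and $-\log$ is continuous on $[c,\infty)$; composing with the continuous (in the sup-norm) maps $q \mapsto \frac1M\sum_k q(x_k^i,y_j^i)$ and summing yields continuity of $J_M^N$ at $q$.

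For the population functional $J_M$, I would use the representation \eqref{eq:LossCloudsPopulation} as an integral of $-\frac1M \sum_{j=1}^M \log\bigl(\frac1M\sum_{k=1}^M q(x_k,y_j)\bigr)$ against the probability measure $\pi^{\otimes M}$ on $(\X \times \Y)^M$, and invoke a lower semicontinuity result for integral functionals — specifically Fatou's lemma. Given $q_n \to q$ uniformly, for every point $(\zb x, \zb y)$ the integrand at $q_n$ converges to the integrand at $q$ (again by pointwise convergence of the finite averages and lower semicontinuity of $-\log$, which here is in fact pointwise convergence where the limit is finite, and $\liminf$-inequality where it is $+\infty$). To apply Fatou I need a common integrable lower bound for the integrands $-\frac1M\sum_j \log(\frac1M\sum_k q_n(x_k,y_j))$; since $q_n \to q$ uniformly, we have $q_n \le \|q\|_\infty + 1 =: C$ for $n$ large, so $\frac1M\sum_k q_n(x_k,y_j) \le C$ and hence $-\log(\cdot) \ge -\log C$, a finite constant, giving the uniform lower bound $-\log C$ on the integrand. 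Fatou's lemma then yields $\liminf_n J_M(q_n) \ge J_M(q)$. For continuity when $q>0$: with $q \ge c > 0$ and $q_n \to q$ uniformly we get $c/2 \le \frac1M\sum_k q_n(x_k,y_j) \le C$ eventually, so the integrands are uniformly bounded in absolute value by $\max(|\log(c/2)|, |\log C|)$, and dominated convergence gives $J_M(q_n) \to J_M(q)$.

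The only mild subtlety — and the step I would be most careful about — is the lower bound needed for Fatou in the population case: one must make sure the bound is uniform in $n$ and integrable with respect to $\pi^{\otimes M}$, which is immediate here since the bound is a constant and $\pi^{\otimes M}$ is a probability measure, but it is worth stating explicitly that $J_M(q_n)$ could a priori be $+\infty$ (if $q_n$ vanishes on part of the support) without affecting the $\liminf$ inequality. I would also note that the $\liminf$ behaviour is genuinely needed: the value $J_M(q)$ itself may be $+\infty$, and lower semicontinuity as a statement about a possibly $+\infty$-valued functional is exactly what Fatou delivers. No deeper machinery (e.g.\ convexity-based lower semicontinuity theorems for integral functionals) is required, since the pointwise convergence of the integrands is available directly from uniform convergence of $q_n$.

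\begin{proof}
We give the argument for $J_M$; the proof for $J_M^N$ is analogous and in fact simpler, as it involves only a finite sum rather than an integral.

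Let $q_n \to q$ in $\mathcal C_+(\X \times \Y)$, i.e.\ $\|q_n - q\|_\infty \to 0$. Set
\[
g_n(\zb x, \zb y) \coloneqq - \frac1M \sum_{j=1}^M \log\Big( \frac1M \sum_{k=1}^M q_n(x_k, y_j) \Big),
\qquad
g(\zb x, \zb y) \coloneqq - \frac1M \sum_{j=1}^M \log\Big( \frac1M \sum_{k=1}^M q(x_k, y_j) \Big),
\]
so that $J_M(q_n) = \int_{(\X \times \Y)^M} g_n \, \diff \pi^{\otimes M}$ and likewise for $g$, with the convention $-\log 0 = +\infty$.

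For each fixed $(\zb x, \zb y)$, uniform convergence gives $\frac1M\sum_{k} q_n(x_k,y_j) \to \frac1M\sum_k q(x_k,y_j)$ for every $j$. Since $t \mapsto -\log t$ is continuous on $(0,\infty)$ and lower semicontinuous at $0$ (with value $+\infty$), it follows that $\liminf_{n\to\infty} g_n(\zb x, \zb y) \ge g(\zb x, \zb y)$ for every $(\zb x, \zb y) \in (\X \times \Y)^M$.

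Moreover, for $n$ large enough we have $\|q_n\|_\infty \le C \coloneqq \|q\|_\infty + 1$, hence $\frac1M\sum_k q_n(x_k,y_j) \le C$ and therefore $g_n \ge -\log C$ pointwise, a finite constant. Since $\pi^{\otimes M}$ is a probability measure, $g_n + \log C \ge 0$ is integrable, and Fatou's lemma yields
\[
\liminf_{n\to\infty} J_M(q_n)
= \liminf_{n\to\infty} \int g_n \, \diff \pi^{\otimes M}
\ge \int \liminf_{n\to\infty} g_n \, \diff \pi^{\otimes M}
\ge \int g \, \diff \pi^{\otimes M}
= J_M(q).
\]
Thus $J_M$ is lower semicontinuous. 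The same argument applied termwise to the finite double sum in \eqref{eq:LossCloudsEmpirical} (using lower semicontinuity of $-\log$ and preservation under finite sums) shows that $J_M^N$ is lower semicontinuous.

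Now suppose $q > 0$. By continuity of $q$ and compactness of $\X \times \Y$, there is $c > 0$ with $q \ge c$. For $n$ large, $\|q_n - q\|_\infty \le c/2$, so $c/2 \le \frac1M\sum_k q_n(x_k,y_j) \le C$ for all $j$ and all $(\zb x, \zb y)$. Hence $|g_n| \le \max\{|\log(c/2)|, |\log C|\} =: C'$, a constant, and $g_n \to g$ pointwise. Since the constant $C'$ is integrable with respect to the probability measure $\pi^{\otimes M}$, dominated convergence gives $J_M(q_n) \to J_M(q)$, so $J_M$ is continuous at $q$. For $J_M^N$, the same uniform bounds on $\frac1M\sum_k q_n(x_k^i,y_j^i)$ together with continuity of $-\log$ on $[c/2, C]$ show that each summand, hence $J_M^N$, is continuous at $q$.
\end{proof}
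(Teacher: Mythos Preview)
Your proof is correct and takes a genuinely different route from the paper's. The paper first establishes continuity at strictly positive $q$ (via uniform convergence of $-\log(t_M q_k)$ on a set bounded away from zero), and then bootstraps to lower semicontinuity for general $q \ge 0$ by introducing truncations $q^l = \max\{q,1/l\}$, $q_k^l = \max\{q_k,1/l\}$ and combining monotone convergence (for $J_M(q^l) \nearrow J_M(q)$) with the already-proved continuity part applied at each fixed truncation level. You instead go straight to Fatou's lemma for lower semicontinuity, using the uniform upper bound $\|q_n\|_\infty \le C$ to manufacture the common integrable lower bound $g_n \ge -\log C$, and then use dominated convergence for continuity at $q>0$. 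Your approach is more direct and avoids the auxiliary truncation argument; the paper's approach has the minor structural advantage that the continuity part is isolated first and then reused, but at the cost of an extra limiting layer. Both are entirely standard and sound.
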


\begin{proof}
We give the proof for $J_M$. For $J_M^N$ the assertion follows by the same arguments.

Let $q \in \mathcal C_+(\X \times \Y)$ and $(q_k)_k$ be a sequence of non-negative continuous functions with $\norm{q_k-q}_\infty\to 0$ as $k \to \infty$.
\\
1. First, assume that $q >0$. 
By compactness of $\X \times \Y$ there exist 
$\varepsilon >0$ and  $K \in \mathbb N$ such that 
$q, q_k \geq \varepsilon$ for all $k \geq K$. 
Then we also have $t_M q_k, t_M q \geq \varepsilon > 0$.
Since $t_M$ is continuous and the logarithm is continuous on $(0,\infty)$, we obtain $-\log (t_M q_k) \to -\log (t_M q)$ uniformly 
and then $J_M(q_k) \to J_M(q)$  as $k \to \infty$.
\\
2. Now we consider a function $q \ge 0$.
For $l \in \mathbb N$, set 
$$q_k^l: (x,y) \mapsto \max\{q_k(x,y),1/l\} 
\quad \text{and} \quad  
q^l: (x,y) \mapsto \max\{q(x,y),1/l\}.
$$
Then we have $q^l \geq q$, $q^l_{k} \geq q_{k}$, and further
$t_M q^l \geq t_M q$, $t_M q^l_k \geq t_M q_k$, so that
$-\log(t_M q^l) \nearrow -\log(t_M q)$ pointwise. 
Since $q\in\mathcal C(\X \times \Y)$, 
the function $q$ is bounded from above. Then also $-\log(t_m q)$ is bounded from below and all $-\log(t_M q^l)$ are equi-bounded from below. Now the monotone convergence theorem implies 
\begin{equation} \label{h1}
\lim_{l \to \infty} J_M(q^l) = J_M(q).
\end{equation}
Next we conclude that
$$
J_M(q_k) \geq J_M(q_k^l) = J_M(q)
+ \left(J_M(q^l)-J_M(q)\right)
+ [\left( J_M(q_k^l)-J_M(q^l) \right).
$$
For fixed $l \in \mathbb N$ and $k \to \infty$ the last summand goes to zero by Part 1 of the proof.
Hence we get
$$\liminf_{k \to \infty} J_M(q_k) \geq J_M(q)
+[J_M(q^l)-J_M(q)].
$$
Finally, sending $l \to \infty$ and using \eqref{h1} we obtain the lower semi-continuity of $J_M$.
\end{proof}

From Proposition~\ref{prop:lsc} and Corollary~\ref{lem:SymOp} we deduce the following result.

\begin{corollary}[Joint convergence]\label{prop:JointLimit}
Let $q \in \mathcal C_+(\X \times \Y)$ with $q > 0$ 
and  let $(q^N)_N$ be a sequence in $\mathcal C_+(\X \times \Y)$ which converges uniformly to $q$. Then it holds almost surely that $J^N_M(q^N) \to J_M(q)$ as $N \to \infty$.
\end{corollary}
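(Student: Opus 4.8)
The plan is to reduce the statement to two ingredients already available: the almost sure convergence $J_M^N(q)\to J_M(q)$ for the fixed limit $q$, and a deterministic, uniform‑in‑$N$ continuity estimate for $J_M^N$ near a strictly positive function. Concretely, for each realization I would split
\[
J_M^N(q^N) - J_M(q) = \bigl(J_M^N(q^N) - J_M^N(q)\bigr) + \bigl(J_M^N(q) - J_M(q)\bigr),
\]
and show that the first summand tends to $0$ for every realization (hence surely) while the second tends to $0$ almost surely. Since the sum of a sure null sequence and an almost sure null sequence is an almost sure null sequence, this gives the claim.

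The second summand is immediate: $q>0$ on the compact space $\X\times\Y$ in particular means $q\in\mathcal C(\X\times\Y)$, so Corollary~\ref{lem:SymOp}~iii) applies verbatim and yields $J_M^N(q)\to J_M(q)$ almost surely as $N\to\infty$. For the first summand I would revisit Part~1 of the proof of Proposition~\ref{prop:lsc} and extract from it a Lipschitz‑type bound with a constant that does not depend on $N$. Namely, by compactness there is $\varepsilon>0$ with $q\ge 2\varepsilon$; by uniform convergence $\norm{q^N-q}_\infty\le\varepsilon$ for all large $N$, hence $q^N\ge\varepsilon>0$ for such $N$. Since $t_M$ is a non‑expansive averaging operator, $t_M q\ge 2\varepsilon$, $t_M q^N\ge\varepsilon$ and $\norm{t_M q^N - t_M q}_\infty\le\norm{q^N-q}_\infty$, so the mean value theorem for $\log$ on $[\varepsilon,\infty)$ gives $\norm{\log(t_M q^N)-\log(t_M q)}_\infty\le\tfrac1\varepsilon\norm{q^N-q}_\infty$. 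Because $J_M^N(q^N)$ and $J_M^N(q)$ are finite averages of the values of $-\log(t_M q^N)$ and $-\log(t_M q)$ over the sampled points $(\zb x^i,y_j^i)$, it follows that $\bigl|J_M^N(q^N)-J_M^N(q)\bigr|\le\tfrac1\varepsilon\norm{q^N-q}_\infty\to 0$; this bound does not involve the randomness at all, so the convergence is sure.

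The only delicate point is securing the lower bound $t_M q^N\ge\varepsilon$ \emph{simultaneously for all large $N$}: this is exactly where I use that $q$ is bounded away from $0$ together with the \emph{uniform} (not merely pointwise) convergence $q^N\to q$, which confines all the arguments $t_M q^N$ to one fixed compact subinterval of $(0,\infty)$ on which $\log$ is Lipschitz with a constant independent of $N$. Once that is in place, everything else — the triangle inequality, the non‑expansiveness of $t_M$, and pushing a uniform estimate through the finite averages defining $J_M^N$ — is routine bookkeeping.
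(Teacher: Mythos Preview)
Your proposal is correct and follows essentially the same approach as the paper: both split $J_M^N(q^N)-J_M(q)$ into $(J_M^N(q^N)-J_M^N(q))+(J_M^N(q)-J_M(q))$, handle the second term via Corollary~\ref{lem:SymOp}~iii), and control the first by the uniform bound $\norm{\log(t_M q^N)-\log(t_M q)}_\infty\to 0$ drawn from Part~1 of the proof of Proposition~\ref{prop:lsc}. Your write-up is in fact more careful than the paper's in making explicit that the Lipschitz constant for the first term is independent of $N$ and of the sample, so that this piece converges surely rather than merely almost surely.
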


\begin{proof}
It holds
\begin{align*}
\lvert J_M^N(q^N) - J_M(q) \rvert 
&\leq \lvert J_M^N(q^N) - J_M^N(q) \rvert + \lvert J_M^N(q) - J_M(q) \rvert.
\end{align*}
For $N \to \infty$, both terms converge to $0$ almost surely. 
Indeed, on one hand it holds
\begin{align*}
\lvert J_M^N(q^N) - J_M^N(q) \rvert 
\leq &\int_{(\X \times \Y)^M}\lvert\log (t_M q_N) - \log (t_M q) \rvert \, \diff P_{\bm{Z}}
\\
\leq &\|\log (t_M q_N) - \log (t_M q)\|_\infty \to 0, \quad N \to \infty,
\end{align*}
as in the proof of Proposition~\ref{prop:lsc}.
On the other hand, we have
$\lvert J_M^N(q) - J_M(q) \rvert \to 0$, $N \to \infty$, almost surely,
due to Corollary~\ref{lem:SymOp}.
\end{proof}

Next, we give a  basic $\Gamma$-convergence-type result that establishes that by minimizing the empirical functionals $J_M^N$, we can recover the population minimizer in the limit of $N \to \infty$. For simplicity, we assume that the candidate densities are bounded away from zero, which will be sufficient for the class of hypothesis densities that we introduce in Section \ref{sec:Kernels}.

Recall that  a sequence $(F_N)_{N\in\N}$ 
of functionals 
$F_N \colon \mathcal{C}(\X \times \Y) \rightarrow (-\infty,+\infty]$ 
is said to $\Gamma$-converge to 
$F \colon  \mathcal{C}(\X \times \Y)  \rightarrow (-\infty,+\infty]$,
if the following two conditions are fulfilled for every $q \in \mathcal{C}(\X \times \Y) $, see~\cite{Braides02}:
\begin{enumerate}
	\item[i)] It holds 
    $F(q) \leq \liminf_{N \rightarrow \infty} F_N(q_N)$ 
    whenever $q_N \to q$ as $N \to \infty$.
	\item[ii)] There exists a sequence $(q_N)_{N\in\N}$ with $q_N \to q$ 
    and $\limsup_{N \to \infty} F_N(q_N) \le F(q)$.
\end{enumerate}
The importance of $\Gamma$-convergence lies in the fact that 
every cluster point of minimizers of $\{F_N\}_{N\in\N}$ is a minimizer of $F$, and any minimizer of $F$ can be approximated by a sequence of almost-minimizers of the $\{F_N\}_{N\in\N}$.

\begin{corollary}[$\Gamma$-convergence]\label{cor:Gamma}
Let $Q$ be a compact subset of $\mathcal C (\X \times \Y)$ 
which is uniformly bounded away from zero, 
i.e.~there exists some $\delta>0$ 
such that $q \geq \delta$ for all $q \in Q$.
Furthermore, let $(Q^N)_N$ be a sequence of subsets of $\mathcal C(X \times Y)$ 
such that the $\Gamma$-limit of $(\iota_{Q^N})_N$ 
is $\iota_{Q}$.
Then the $\Gamma$-limit of $(J_M^N+ \iota_{Q^N})_N$ is $J_M + \iota_Q$ almost surely.
If the sets $(Q^N)_N$ are compact, 
then the problems $(J_M^N+ \iota_{Q^N})_N$ have minimizers $(q_N)_N$. 
Almost surely, any cluster point of $(q_N)_N$ is a minimizer of $J_M + \iota_Q$.
\end{corollary}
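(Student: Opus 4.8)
The plan is to carry out the whole argument on the single almost-sure event $\Omega_\infty$ on which $\pi_M^N \rightweaks \mathcal S_M\pi^{\otimes M}$ holds, which is provided by Corollary~\ref{lem:SymOp}i); if one insists on a single exceptional null set, one builds $\Omega_\infty$ by applying the strong law of large numbers to $g_\ell(\bm Z_i)$ for a countable dense family $(g_\ell)$ in the separable space $\mathcal C((\X\times\Y)^M)$ and then using that every $\pi_M^N$ is a probability measure to upgrade the countably many limits to weak-$*$ convergence. On $\Omega_\infty$ everything below is deterministic, which is what turns the $\Gamma$-convergence into an almost-sure statement rather than merely a pointwise-in-$q$ one. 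The first step is a uniform-in-$q$ version of Corollary~\ref{prop:JointLimit}: on $\Omega_\infty$, whenever $q_N \to q$ uniformly with $q \ge \delta' > 0$, then $J_M^N(q_N) \to J_M(q)$. This is proved exactly as Corollary~\ref{prop:JointLimit}: eventually $q_N \ge \delta'/2$, so by continuity of $t_M$ and Lipschitz continuity of $\log$ on $[\delta'/2,\infty)$ one gets $\log(t_M q_N) \to \log(t_M q)$ in $\mathcal C(\X^M\times\Y)$; hence, using Lemma~\ref{jm_short} and that $\pi_M^N$ is a probability measure, $\lvert J_M^N(q_N)-J_M^N(q)\rvert \le \lVert\log(t_M q_N)-\log(t_M q)\rVert_\infty \to 0$, while $J_M^N(q) \to J_M(q)$ on $\Omega_\infty$ because this integrand is a fixed element of $\mathcal C((\X\times\Y)^M)$.

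Next I would verify the two $\Gamma$-convergence conditions on $\Omega_\infty$ (with both functionals understood to be $+\infty$ on $\mathcal C(\X\times\Y)\setminus\mathcal C_+(\X\times\Y)$). For the liminf inequality, fix $q_N \to q$. If $q \notin Q$, then since the $\Gamma$-limit of $(\iota_{Q^N})$ is $\iota_Q$ we have $\liminf_N \iota_{Q^N}(q_N) = \iota_Q(q) = +\infty$, so $q_N \notin Q^N$ for all large $N$ and $(J_M^N+\iota_{Q^N})(q_N) = +\infty$ eventually; nothing is to be shown. If $q \in Q$, then $q \ge \delta > 0$, the first step gives $J_M^N(q_N) \to J_M(q)$, and $\iota_{Q^N}\ge 0$ yields $\liminf_N (J_M^N+\iota_{Q^N})(q_N) \ge J_M(q) = (J_M+\iota_Q)(q)$. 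For the recovery sequence at $q$: if $q \notin Q$ take $q_N = q$, as the target value is $+\infty$; if $q \in Q$, take a recovery sequence $(q_N)$ for $(\iota_{Q^N})$ at $q$, so $q_N \to q$ uniformly with $q\ge\delta$ and $\limsup_N\iota_{Q^N}(q_N)\le\iota_Q(q)=0$, which forces $q_N \in Q^N$ eventually (hence also $q_N \in \mathcal C_+(\X\times\Y)$ eventually, and the finitely many remaining terms may be modified freely). The first step then gives $\limsup_N(J_M^N+\iota_{Q^N})(q_N) = \lim_N J_M^N(q_N) = J_M(q) = (J_M+\iota_Q)(q)$. Thus $(J_M^N+\iota_{Q^N})$ $\Gamma$-converges to $J_M+\iota_Q$ on $\Omega_\infty$.

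Finally, suppose the $Q^N$ are compact. They are nonempty for all large $N$ (a recovery sequence for a point of $Q$ enters them), and $J_M^N$ is lower semicontinuous by Proposition~\ref{prop:lsc}, so $J_M^N+\iota_{Q^N}$ attains a minimum at some $q_N$. If $q^*$ is a cluster point with $q_{N_k}\to q^*$, the liminf inequality gives $(J_M+\iota_Q)(q^*) \le \liminf_k (J_M^{N_k}+\iota_{Q^{N_k}})(q_{N_k}) = \liminf_k \min(J_M^{N_k}+\iota_{Q^{N_k}})$, while applying the recovery-sequence property to an arbitrary $\tilde q$ and using $\min(J_M^N+\iota_{Q^N}) \le (J_M^N+\iota_{Q^N})(\tilde q_N)$ gives $\limsup_k \min(J_M^{N_k}+\iota_{Q^{N_k}}) \le (J_M+\iota_Q)(\tilde q)$. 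Combining the two shows $(J_M+\iota_Q)(q^*) \le (J_M+\iota_Q)(\tilde q)$ for every $\tilde q$, i.e.\ $q^*$ minimizes $J_M+\iota_Q$; this is the standard consequence of $\Gamma$-convergence, cf.~\cite{Braides02}.

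The only genuine subtlety, and the step I expect to need the most care, is the order of quantifiers behind ``almost surely'': Corollaries~\ref{lem:SymOp} and~\ref{prop:JointLimit} produce exceptional null sets that could a priori depend on $q$ or on the chosen sequence, whereas $\Gamma$-convergence demands one null set valid for all $q$ and all sequences at once. Isolating the event $\Omega_\infty$ of weak-$*$ convergence and observing that, once $\omega\in\Omega_\infty$ is fixed, all remaining convergences are deterministic consequences of the continuity of $t_M$ and of $\log$ on $[\delta',\infty)$ is what resolves this; the uniform bound $q\ge\delta$ on $Q$ is precisely what keeps every relevant quantity bounded away from the singularity of $\log$ and makes $J_M$ finite on $Q$.
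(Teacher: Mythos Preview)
Your proof is correct and follows essentially the same strategy as the paper's: split into $J_M^N(q_N)\to J_M(q)$ via the joint-convergence Corollary~\ref{prop:JointLimit} and handle the indicator part via the assumed $\Gamma$-convergence of $(\iota_{Q^N})$, then invoke the fundamental theorem of $\Gamma$-convergence for the minimizer statement. The one substantive improvement in your version is that you isolate a single almost-sure event $\Omega_\infty$ of weak-$*$ convergence of $\pi_M^N$ and make all subsequent arguments deterministic on it; the paper applies Corollary~\ref{prop:JointLimit} sequence-by-sequence, which a priori yields an exceptional null set depending on the sequence $(q_N)$ and hence does not directly give $\Gamma$-convergence on one common event. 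Your observation that the relevant integrands are fixed continuous functions once $q$ is fixed, and that the $q_N\to q$ part is controlled uniformly by $\lVert\log(t_Mq_N)-\log(t_Mq)\rVert_\infty$, is exactly what closes this gap.
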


Before proceeding with the proof,
we discuss the conditions of $\Gamma$-convergence
for the $\iota$-functions.
\begin{remark}\label{rem:Gamma_convergence_for_iota}
Let $F_N = \iota_{Q^N}$, $F = \iota_{Q}$.
If $q \in Q$, 
then the first condition of $\Gamma$-convergence 
is always fulfilled.
In the same manner, if $q \not\in Q$,
then the second condition is always fulfilled.
Hence, $(\iota_{Q^N})_N$ $\Gamma$-converges to $\iota_{Q}$,
if and only if
\begin{enumerate}
    \item[i)] 
    For all $q \not\in Q$ and any $q_N \to q$, $N \to \infty$,
    there exists $\bar{N} \in \N$ so that $q_N \not \in Q_N$,
    for all $N \geq \bar{N}$.
    \item[ii)]
    For all $q \in Q$ there exists $q_N \xrightarrow[]{N \to \infty} q$, $N \to \infty$,
    and $\bar{N} \in \N$ so that $q_N \in Q_N$,
    for all $N \geq \bar{N}$.
\end{enumerate}
\end{remark}

\begin{proof}
Define $F \coloneqq J_M + \iota_Q$ and $F_N \coloneqq J_M^N + \iota_{Q^N}$, $N \in \N$.
To check the conditions for $\Gamma$-convergence, 
let $q \in Q$.
First, consider an arbitrary sequence $(q^N)_N \subset C(\X \times \Y)$, 
such that $q^N \to q$ uniformly.
Since $q > 0$, Corollary~\ref{prop:JointLimit} 
yields $J_M^N(q^N) \to J_M(q)$, almost surely. 
Furthermore, due to the $\Gamma$-convergence of $(\iota_{Q^N})_N$ to $\iota_{Q}$,
it holds $\iota_{Q}(q) \leq \liminf_{N \to \infty} \iota_{Q^N}(q_N)$.
In total, we obtain
\begin{align}\label{proof:Gamma:1}
F(q) &= J_M(q) + \iota_{Q}(q) 
\leq \bigl(\lim_{N \to \infty} J_M^N(q_N) \bigr) 
+ \bigl(\liminf_{N \to \infty} \iota_{Q^N}(q_N)\bigr)
\nonumber
\\
&= \liminf_{N \to \infty} \big( J_M^N(q_N) + \iota_{Q^N}(q_N) \big)
= \liminf_{N \to \infty} F_N(q_N),
\end{align}
almost surely.
The $\Gamma$-convergence of $(\iota_{Q^N})_N$ to $\iota_{Q}$ also
ensures the existence of a sequence $(q_N)_N \subset C(\X \times \Y)$ 
with uniform limit $q$
such that $\limsup_{N \to \infty} \iota_{Q^N}(q_N) \leq \iota_Q(q)$.
Repeating the analogous steps to \eqref{proof:Gamma:1},
we obtain $\limsup_{N \to \infty} F_N(q_N) \leq F(q)$.
Thus, $(F_N)_N$ $\Gamma$-converges to $F$.

For compact $Q^N$, the functional
$F_N = J_M^N + \iota_{Q^N}$ admits a minimizers $q_N$ for all $N \in \N$ (and all $\omega \in \Omega$).
The fundamental theorem of $\Gamma$-convergence~\cite{Braides02}
readily ensures that, almost surely, 
any cluster point of $(q_N)$ minimizes $F = J_M + \iota_{Q}$, as desired.
\end{proof}

Finally, we show that despite the approximation of the sampling model that underlies the functional $J_M$, it can recover the true relation $\pi$ between the data points. To this end, we recall an auxiliary lemma.

\begin{lemma}
\label{lem:SptVsAe}
    Let $f,g: \X \to \R$ be continuous 
    and $\mu \in \mathcal{M}_+(\X)$. 
    Then it holds $f=g$ $\mu$-a.s.\ if and only if $f=g$ on $\spt(\mu)$.
\end{lemma}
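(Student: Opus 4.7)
The plan is to prove the two directions separately, using the basic fact that $\mu\bigl(\X \setminus \spt(\mu)\bigr) = 0$, which holds because $\X$ is compact (hence second countable) and $\X \setminus \spt(\mu)$ is a countable union of open sets of $\mu$-measure zero, each given by the defining property of the support.

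For the direction ``$f=g$ on $\spt(\mu)$ implies $f=g$ $\mu$-almost everywhere,'' I would simply observe that
\begin{equation*}
\{x \in \X : f(x) \neq g(x)\} \subseteq \X \setminus \spt(\mu),
\end{equation*}
and the right-hand side has $\mu$-measure zero by the remark above. No continuity is required here.

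For the converse, suppose $f = g$ $\mu$-a.e.\ but assume for contradiction that there exists $x_0 \in \spt(\mu)$ with $f(x_0) \neq g(x_0)$. Since $f-g$ is continuous, the set $U \coloneqq \{x \in \X : f(x) \neq g(x)\}$ is open, and it contains $x_0$. By the very definition of $\spt(\mu)$, every open neighbourhood of $x_0$ has positive $\mu$-measure, so $\mu(U) > 0$, contradicting $f=g$ $\mu$-a.e.

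The only slightly nontrivial step is justifying $\mu(\X \setminus \spt(\mu)) = 0$ in the first direction; this is standard for Borel measures on second countable spaces and is effectively the reason the statement holds at all. Everything else is a one-line topological argument, so I do not foresee any real obstacle.
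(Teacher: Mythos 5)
Your proof is correct and follows essentially the same route as the paper's: the forward direction reduces to the fact that $\mu$ is concentrated on its support, and the converse is the same contradiction via continuity and the definition of $\spt(\mu)$ (you use the open set $\{f \neq g\}$ directly where the paper picks a ball around $x_0$, a cosmetic difference). If anything, your version is slightly more careful, since you explicitly justify $\mu(\X \setminus \spt(\mu)) = 0$ via second countability, a step the paper's proof takes for granted.
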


\begin{proof}
    Let $f=g$ on $\spt(\mu)$.
    Since $A \subset \X \setminus \spt(\mu)$ 
    implies $\mu(A) = 0$,
    we readily obtain $f=g$ a.s..

    Let $f=g$ a.s..
    Assume, there exists $x \in \spt(\mu)$ such that $f(x) \neq g(x)$.
    The continuity of $f$ and $g$ yields the existence of $\delta > 0$
    such that this inequality 
    extends to the open $\delta$-ball $B_\delta(x)$ around $x$.
    In other words, 
    $f(\tilde{x}) \neq g(\tilde{x})$
    for all $\tilde{x} \in B_\delta(x)$.
    Since $x \in \spt(\mu)$, 
    it holds $\mu(B_\delta(x)) > 0$ which yields a contradicton.
\end{proof}

\begin{proposition}[Population minimizer]
\label{prop:PopMin}
Let $Q$ be a compact, convex subset of $\mathcal C(\X \times \Y)$.
Then $J_M$ has a minimizer over $Q$. 
Any two minimizers $q_1$ and $q_2$ are equal $\pi$-almost everywhere,
i.e.~the minimizer is unique in $L^1(\pi)$.
Furthermore, if $\pi = p\cdot \mu\otimes \nu$ for some $p \in Q$, then $p$ is a minimizer.
\end{proposition}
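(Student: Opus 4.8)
The strategy is to exploit the Kullback–Leibler representation from Proposition~\ref{prop:KL_form}. Recall that for $q \in Q$ satisfying $\int_\Y q(x,y)\,\diff\nu(y) = 1$ for $\mu$-a.e.~$x$, we have $J_M(q) = \KL(P_\#\mathcal S_M\pi^{\otimes M}\,|\,P_\#\mathcal S_M\gamma^{\otimes M}) + \const$ with $\gamma = q\cdot(\mu\otimes\nu)$. However, the representation needs that normalization assumption on $q$, while a general $q \in Q$ need not satisfy it; so the argument must be run carefully. For the \emph{existence} part, I would combine lower semi-continuity of $J_M$ (Proposition~\ref{prop:lsc}) with compactness of $Q$ via the direct method of the calculus of variations: a minimizing sequence $(q_n)\subset Q$ has a uniformly convergent subsequence with limit $q^\star \in Q$ by compactness of $Q$ in $\mathcal C(\X\times\Y)$, and lower semi-continuity gives $J_M(q^\star) \le \liminf_n J_M(q_n) = \inf_Q J_M$. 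Note this part does not require convexity of $Q$, only compactness — but convexity is needed for uniqueness.

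For the \emph{uniqueness in $L^1(\pi)$} part, the key observation is that $J_M$ is convex: it is an integral of $q \mapsto -\log\big(\tfrac1M\sum_k q(x_k,y_j)\big)$, which is convex because $-\log$ is convex and the inner map is linear in $q$. Given two minimizers $q_1, q_2 \in Q$, convexity of $Q$ means $\tfrac12(q_1+q_2)\in Q$, and by convexity of $J_M$ together with minimality we must have equality in Jensen's inequality, which forces $t_M q_1 = t_M q_2$ $\pi\otimes\mu^{\otimes(M-1)}$-a.e.~using the form \eqref{jm_other} (strict convexity of $-\log$ away from the degenerate case). From $(t_M q_1)(\zb x, y) = (t_M q_2)(\zb x, y)$ for $\mu^{\otimes M}\otimes\nu$-a.e.~$(\zb x, y)$ — actually one only directly gets it against $\diff\pi(x_1,y_1)\diff\mu(x_2)\cdots\diff\mu(x_M)$ — I would deduce $q_1 = q_2$ $\pi$-a.e.\ as follows: continuity of $q_i$ and Lemma~\ref{lem:SptVsAe} let us work on supports; evaluating $t_M q_1 = t_M q_2$ and using the freedom in $x_2,\dots,x_M$ (which range over $\spt\mu$) together with a point $x_1 \in \spt\mu$, one isolates the first summand, concluding $q_1(x_1, y_1) = q_2(x_1, y_1)$ on $\spt\mu\times\spt\nu \supseteq \spt\pi$, hence $\pi$-a.e.

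For the final claim, suppose $\pi = p\cdot\mu\otimes\nu$ with $p\in Q$. Then by \eqref{eq:pMarginals}, $p$ satisfies the normalization hypothesis of Proposition~\ref{prop:KL_form}, so for \emph{any} competitor $q\in Q$ with $\gamma = q\cdot\mu\otimes\nu$ we have $J_M(q) - J_M(p) = \KL(P_\#\mathcal S_M\pi^{\otimes M}\,|\,P_\#\mathcal S_M\gamma^{\otimes M}) - \KL(P_\#\mathcal S_M\pi^{\otimes M}\,|\,P_\#\mathcal S_M\pi^{\otimes M})$; the second term vanishes by \eqref{eq:Gibbs} and the first is nonnegative, so $J_M(p)\le J_M(q)$. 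The subtlety is that Proposition~\ref{prop:KL_form} as stated presumes $q$ is itself normalized, so strictly one should either (i) observe that the computation in the proof of Proposition~\ref{prop:KL_form}(ii) giving $J_M(q) = -\int \log(t_M q)\,\diff P_\#\mathcal S_M\pi^{\otimes M}$ holds for all $q$ (it does — only the identification of $P_\#\mathcal S_M\gamma^{\otimes M}$ as a density against $\mu^{\otimes M}\otimes\nu$ uses normalization of $q$), so that $J_M(q) - J_M(p) = \int \log\frac{t_M p}{t_M q}\,\diff P_\#\mathcal S_M\pi^{\otimes M} = \KL(P_\#\mathcal S_M\pi^{\otimes M}\,|\,(t_Mq)\cdot\mu^{\otimes M}\otimes\nu)$ regardless, using that $t_M p$ integrates to one against $\mu^{\otimes M}\otimes\nu$; or (ii) restrict attention to the normalized subclass. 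The main obstacle I anticipate is this bookkeeping around the normalization hypothesis — making sure the KL comparison is valid for unnormalized competitors $q$, which requires checking that $(t_M q)\cdot(\mu^{\otimes M}\otimes\nu)$ is still a genuine comparison measure in the KL (it is nonnegative and finite, which suffices since KL is defined on $\mathcal M_+$) and that the constant bookkeeping matches.
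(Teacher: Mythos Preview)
Your approach matches the paper's closely: existence via lower semi-continuity of $J_M$ plus compactness of $Q$ (Weierstrass), uniqueness via strict convexity of $-\log$ composed with the affine map $q\mapsto t_Mq$, and the final claim via the $\KL$ representation of Proposition~\ref{prop:KL_form}. Two points warrant comment.

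\textbf{Uniqueness.} Your passage from $t_Mq_1=t_Mq_2$ on $\spt(\pi\otimes\mu^{\otimes(M-1)})$ to $q_1=q_2$ is phrased vaguely (``using the freedom in $x_2,\dots,x_M$ one isolates the first summand''), and the intermediate claim that equality holds on all of $\spt\mu\times\spt\nu$ overshoots: since $\spt(\pi\otimes\mu^{\otimes(M-1)})=\spt\pi\times(\spt\mu)^{M-1}$, you only have access to points with $(x_1,y)\in\spt\pi$. The paper's device is simpler and avoids this slip: for $(x,y)\in\spt\pi$ one has $x\in\spt\mu$, so the diagonal point $(x,\dots,x,y)$ lies in the support, and $t_Mq_i(x,\dots,x,y)=q_i(x,y)$ directly.

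\textbf{The final claim.} You are right to flag the normalization hypothesis in Proposition~\ref{prop:KL_form}; the paper's own proof invokes it without comment. However, your proposed fix (i) contains an error. For unnormalized $q$ one computes, with $\lambda=\mu^{\otimes M}\otimes\nu$,
\[
J_M(q)-J_M(p)=\int(t_Mp)\log\frac{t_Mp}{t_Mq}\,\diff\lambda
=\KL\bigl((t_Mp)\cdot\lambda\,\big|\,(t_Mq)\cdot\lambda\bigr)+1-\int_{\X\times\Y}q\,\diff(\mu\otimes\nu),
\]
because the $\KL$ on $\mathcal M_+$ carries the mass correction $-\int f+\int g$ from $\varphi(s)=s\log s-s+1$. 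Thus $J_M(q)-J_M(p)$ is \emph{not} a pure $\KL$ term, and in fact without any normalization constraint on $Q$ the ``furthermore'' claim can fail: for $Q=\{(1+t)p:t\in[0,1]\}$ one has $J_M((1+t)p)=J_M(p)-\log(1+t)$, so the minimizer over $Q$ is $2p$, not $p$. The statement should be read under the (in the paper's applications always satisfied, cf.~Proposition~\ref{prop:Markov}) hypothesis that every $q\in Q$ is a probability density with respect to $\mu\otimes\nu$; under that hypothesis your argument and the paper's coincide.
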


\begin{proof}
    Due to the lower semi-continuity of $J_M$ and compactness of $Q$, 
    existence of a minimizer is guaranteed by the Weierstrass theorem.
    Let $q_1,q_2 \in Q$ be two minimizers of $J_M$ over $Q$.
    We show that they are equal on $\spt \pi$.
    Due to linearity of the integral and strict convexity of $-\log$,
    we readily obtain $(t_M q_1) (\bm{x},y) = (t_M q_2) (\bm{x},y)$
    for $\pi \otimes \mu^{\otimes (M-1)}$-a.e.\ $(\bm{x},y)$.
    Due to $t_M q_1, t_M q_2 \in \mathcal{C}(\X^M \times \Y)$,
    we further obtain by Lemma \ref{lem:SptVsAe} that $t_M q_1 (\zb x,y)= t_M q_2(\zb x,y)$
    for all $(\bm{x},y) \in \spt(\pi \otimes \mu^{\otimes (M-1)})$.
    Let $(x,y) \in \spt(\pi)$. 
    In particular, this gives $x \in \spt (\mu)$,
    such that $(x,\dotsc,x,y) \in \spt(\pi \otimes \mu^{\otimes (M-1)})$
    and thus
    \begin{align*}
    q_1(x,y) 
    &= \frac{1}{M} \sum_{k=1}^M q_1(x,y) 
    = (t_M q_1) (x,\dotsc,x,y)
    \\
    &= (t_M q_2) (x,\dotsc,x,y)
    = \frac{1}{M} \sum_{k=1}^M q_2(x,y) 
    = q_2(x,y).
    \end{align*}
    Finally, let $\pi = p \cdot (\mu \otimes \nu)$ for some $p \in Q$.
    Proposition~\ref{prop:KL_form} ii) 
    gives that the set of solutions to $\min_{q \in Q} J_M(q)$
    coincides with the set of solutons of
    \[
    \min_{q \in Q} 
    \KL( 
    P_\# \mathcal{S}_M (p \cdot (\mu \otimes \nu))^{\otimes M} \vert 
    P_\# \mathcal{S}_M (q \cdot (\mu \otimes \nu))^{\otimes M}).
    \]
    By non-negativity of the $\KL$-divergence, 
    a solution is provided by $q = p$ as desired.
\end{proof}

\begin{remark}[Relation between the permutation and inference functionals]
\label{rem:Comparison}
Under the assumptions of 
Propositions~\ref{prop:1} and \ref{prop:KL_form}, 
the permutation and inference functional 
admit the respective forms
\begin{align*}
\mathcal{J}_M(q)  
& = 
\KL(\mathcal{S}_M \pi^{\otimes M} \vert 
\mathcal{S}_M (q \cdot \mu \otimes \nu)^{\otimes M}) 
+ \const, 
\\
J_M(q) 
& = 
\KL(P_\# \mathcal{S}_M \pi^{\otimes M} \vert
P_\# \mathcal{S}_M (q \cdot \mu \otimes \nu)^{\otimes M}) + \const.
\end{align*}
By \cite[Lem.~3.15]{LinHK2021}, it holds
\begin{equation}\label{eq:KL_estimate}
    \KL(T_\# \mu, T_\# \tilde{\mu})
    \leq
    \KL(\mu,\tilde{\mu})
\end{equation}
for any measurable $T:\X \to \Y$.
Hence we obtain
\[
\KL(P_\# \mathcal{S}_M \pi^{\otimes M} \vert
P_\# \mathcal{S}_M (q \cdot \mu \otimes \nu)^{\otimes M})
\leq 
\KL(\mathcal{S}_M \pi^{\otimes M} \vert 
\mathcal{S}_M (q \cdot \mu \otimes \nu)^{\otimes M}).
\]
In view of the above Proposition~\ref{prop:PopMin}, 
if $p \in Q$, $p$ is the essentially unique minimizer
for both functionals $\mathcal{J}_M$ and $J_M$.
In this sense, 
the approximation underlying the latter 
does not introduce a systematic bias. 
But the minimum of the latter may be less pronounced, 
i.e.\ we may lose some information 
contained in the data relative to $\mathcal{J}_M$.
We study the information 
that can be recovered by $J_M^N$ numerically
for larger $M$ in Section \ref{sec:numerics}.
\end{remark}

\section{Non-parametric estimation with entropic transport kernels}
\label{sec:Kernels}
Above, we propose an approximate empirical maximum likelihood functional for inferring the density of the measure $\pi$ and its population limit.
Now we introduce a suitable non-parametric class of hypothesis densities 
$Q \subset \mathcal C_+(\X \times \Y)$.
To control the bias-variance trade-off, when doing inference on finitely many samples, 
we need to be able to control the complexity of $Q$ and adjust it to the number of available samples $N$.
In kernel density estimation, this can be done by controlling the width of the kernels. 
We use entropic optimal transport to construct such kernels. In particular, this naturally allows to incorporate disintegration constraints.
The entropic regularization parameter $\varepsilon$ controls the width of the kernels, with the width given as approximately $\sqrt{\varepsilon}$.
Some necessary background on entropic optimal transport is collected in Subsection \ref{sec:KernelsBackground}.
The class of kernels $Q$ is constructed in Subsection \ref{sec:Kernel}. 
In particular, we also consider the (common) case when $\mu$ and $\nu$ are unknown and show how $Q$ can be approximated from empirical estimates in an asymptotically consistent way. 
This approximation is finite-dimensional and therefore also amenable for numerical methods.
In preparation for the numerical minimization in Section \ref{sec:Algorithm}, 
we give an explicit form of the discrete inference functional $J_M^N$ in Subsection \ref{sec:KernelsDisc}.

\subsection{Entropic optimal transport} \label{sec:KernelsBackground}
The following can be found in serveral works, e.g., in \cite{NS2021},
see also \cite{BoSch2020,FSVATP2018,santambrogio2015optimal}. 

Let $(\X,\text{dist})$ be a compact metric space and $c \coloneqq \text{dist}^2$
a cost function.
Then, for $\varepsilon >0$, the \emph{entropic optimal transport}
between $\mu, \tilde \mu \in \mathcal P(\X)$
is given by
\begin{align}\label{eq:EntropicPrimal}
\text{OT}_\varepsilon \coloneqq \inf_{\pi \in \Pi(\mu,\tilde \mu) }
\left\{ \int_{\X^2} c(x,\tilde x) \diff \pi(x,\tilde x) 
+ \varepsilon \, \KL(\pi|\mu \otimes \tilde \mu)
\right\} ,
\end{align}
where 
$
\Pi(\mu,\tilde \mu) \assign \{ \pi \in \mathcal P(\X \times \X):
\, (P_1)_\# \pi= \mu, (P_2)_\#  \pi = \tilde \mu\}
$ 
and $P_i(x_1,x_2) \coloneqq x_i$, $i=1,2$.
This minimization problem has a unique mimimizer $\hat \pi$.
Problem \eqref{eq:EntropicPrimal} can be reformulated in a dual form as
\begin{align} \label{eq:EntropicDual}
\text{OT}_\varepsilon = 
\sup_{ \phi,\psi \in \mathcal C(\X)} 
\left\{ \int_{\X} \phi \, \diff \mu 
+ 
\int_{\X} \psi \, \diff \tilde \mu 
- 
\varepsilon \int_{\X^2} \left[\exp([\phi \oplus \psi-c]/\varepsilon)-1\right]\,\diff \mu \otimes \tilde \mu \right\}.
\end{align}
Here $\phi \oplus \psi$ denotes the function $\X \times \X \ni (x,y) \mapsto \phi(x)+\psi(y)$.
Indeed, there exist optimal maximizers  $\hat \phi, \hat \psi \in \mathcal C(\X)$ 
and they are (up to an additive constant) unique on $\text{spt} \, \mu$ 
and $\text{spt} \, \tilde \mu$, respectively. 
For any pair of optimal maximizers $(\hat \phi, \hat \psi)$ of \eqref{eq:EntropicDual}, it holds
\begin{equation} \label{eq:EntropicPD}
\hat \pi = \exp([(\hat \phi \oplus \hat \psi) - c]/\varepsilon) \cdot \mu \otimes \tilde \mu
\end{equation}
and
\begin{equation}\label{eq:Sinkhorn}
\begin{aligned}
\hat \phi(x) & = -\varepsilon \log\left(\int \exp([\hat \psi(y)-c(x,y)]/\varepsilon)\,\diff \mu(y) \right), \\
\hat \psi(\tilde x) & = -\varepsilon \log\left(\int \exp([\hat \phi(y)-c(y,\tilde x)]/\varepsilon)\,\diff \tilde \mu(y) \right)
\end{aligned}
\end{equation}
for $\mu$-almost all $x$ and $\tilde \mu$-almost all $\tilde x$.
There are dual maximizers that satisfy \eqref{eq:Sinkhorn} for all $x,\tilde x \in \X$:
this can be seen since \eqref{eq:Sinkhorn} can be evaluated for any $x, \tilde x \in \X$ 
which extends the dual maximizers in a continuous way to the full space $\X$.

The following properties of entropic optimal transport will underlie our construction of   hypothesis spaces.

\begin{proposition}[Entropic transport kernels]\label{prop:EntropicKernels}
Let $(\hat \phi, \hat \psi)$ be maximizers of \eqref{eq:EntropicDual} 
that satisfy \eqref{eq:Sinkhorn} for all $x,\tilde x \in \X$.
We call 
$$
k^\varepsilon \assign \exp([\hat \phi \oplus \hat \psi-c]/\varepsilon) \in \mathcal C(\X \times \Y)
$$
the {\rm entropic transport kernel associated with} $\mu$ and $\tilde \mu$.
Then the following relations holds true:
\begin{enumerate}[(i)]
	\item $k^\varepsilon$ is unique.
	\item $k^\varepsilon > 0$.
   \item $k^\varepsilon$ is bounded from above and Lipschitz continuous.
	\item 
 $\int_{\X} k^\varepsilon (x,\tilde x )\,\diff \tilde \mu (\tilde x)= 1$ 
 for all $x \in \X$ and 
 $\int_{\X} k^\varepsilon (x,\tilde x)\,\diff \mu(x)=1$ for all $\tilde x \in \X$.
	\item 
 Let $(\mu^N)_N$, $(\tilde \mu^N)_N$ 
 be sequences of probability measures which converge weak* to $\mu$ and $\tilde \mu$ respectively, and let $(k^\varepsilon_N)_N$ be the sequence of associated transport kernels. Then $k^\varepsilon_N \to k$ as $N \to \infty$ uniformly in $\mathcal C(\X \times \X)$.
\end{enumerate}
\end{proposition}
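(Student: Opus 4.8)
The plan is to verify the five claims essentially in the order stated, since the later ones build on the earlier ones. For (i), uniqueness of $k^\varepsilon$: although the dual potentials $(\hat\phi,\hat\psi)$ are only unique up to the additive constant shift $(\hat\phi+t,\hat\psi-t)$ on $\spt\mu\times\spt\tilde\mu$, the product $\hat\phi\oplus\hat\psi$ is invariant under this shift there, hence so is $k^\varepsilon=\exp([\hat\phi\oplus\hat\psi-c]/\varepsilon)$. On the complement of the supports, the Sinkhorn formulas \eqref{eq:Sinkhorn} express the continuous extensions of $\hat\phi,\hat\psi$ in terms of their restrictions to $\spt\tilde\mu$ and $\spt\mu$ respectively (the integrals only see the supports), so the extension is forced and $k^\varepsilon$ is pinned down globally. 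Claim (ii), $k^\varepsilon>0$, is immediate since $k^\varepsilon$ is the exponential of a finite continuous function on the compact set $\X\times\X$.

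For (iii), boundedness and Lipschitz continuity, the key step is to get a priori bounds on the potentials $\hat\phi,\hat\psi$. From the fixed-point equations \eqref{eq:Sinkhorn}, since $c$ is bounded on the compact space (say $0\le c\le C$), a standard argument gives that after the canonical normalization the potentials are bounded: plugging the bound on $\hat\psi$ into the formula for $\hat\phi$ yields $\|\hat\phi\|_\infty,\|\hat\psi\|_\infty\le C$ or so. Then $\hat\phi\oplus\hat\psi-c$ is a bounded continuous function, giving the upper bound on $k^\varepsilon$. For the Lipschitz bound, I would differentiate (or estimate increments of) the log-sum-exp expressions in \eqref{eq:Sinkhorn}: since $c=\mathrm{dist}^2$ is Lipschitz on the bounded metric space, each $\hat\phi,\hat\psi$ is Lipschitz with constant controlled by $\mathrm{Lip}(c)$ (the soft-min of Lipschitz functions is Lipschitz with the same constant), hence $\hat\phi\oplus\hat\psi-c$ is Lipschitz, and composing with $\exp$ on the bounded range $[\,-C'\!,C'\,]$ preserves Lipschitzness. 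Claim (iv), the two marginal identities, follows from \eqref{eq:EntropicPD}: by Proposition-level properties $\hat\pi\in\Pi(\mu,\tilde\mu)$, so $\int k^\varepsilon(x,\tilde x)\,\diff\tilde\mu(\tilde x)=1$ for $\mu$-a.e.\ $x$; but the left-hand side is continuous in $x$, and actually this integral expression is exactly what \eqref{eq:Sinkhorn} asserts equals $1$ after exponentiation, so it holds for \emph{all} $x\in\X$, not just $\mu$-a.e.; symmetrically in $\tilde x$.

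The substantial part is (v), the stability under weak* convergence of the marginals. Here I would invoke (or reprove) the known stability of entropic optimal transport: if $\mu^N\rightweaks\mu$ and $\tilde\mu^N\rightweaks\tilde\mu$, then the optimal potentials $(\hat\phi^N,\hat\psi^N)$, suitably normalized, converge uniformly to $(\hat\phi,\hat\psi)$ — this is the content of the cited references \cite{NS2021,BoSch2020}. The mechanism is: the uniform a priori bounds and uniform Lipschitz bounds from step (iii) are independent of $N$ (they depend only on $c$ and $\varepsilon$), so by Arzel\`a--Ascoli the sequence $(\hat\phi^N,\hat\psi^N)$ is precompact in $\mathcal C(\X)\times\mathcal C(\X)$; any uniform cluster point satisfies the limiting Sinkhorn equations \eqref{eq:Sinkhorn} with marginals $\mu,\tilde\mu$ (passing to the limit under the integral is justified because $\exp([\hat\psi^N(y)-c(x,y)]/\varepsilon)$ converges uniformly in $y$ and $\tilde\mu^N\rightweaks\tilde\mu$), hence by uniqueness from (i) equals the limiting kernel's potentials; therefore the whole sequence converges. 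Uniform convergence of $(\hat\phi^N,\hat\psi^N)$ then transfers to $k^\varepsilon_N=\exp([\hat\phi^N\oplus\hat\psi^N-c]/\varepsilon)\to k^\varepsilon$ uniformly, since $\exp$ is Lipschitz on the common bounded range. The main obstacle is making the ``pass to the limit in the Sinkhorn fixed-point equation under weak* convergence of the reference measure'' step fully rigorous together with the uniform-in-$N$ equicontinuity estimate; everything else is bookkeeping.
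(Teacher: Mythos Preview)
Your proposal is correct and for (i)--(iv) matches the paper's proof essentially line by line. For (v) both you and the paper use the uniform-in-$N$ Lipschitz and boundedness estimates from (iii) together with Arzel\`a--Ascoli to extract a uniformly convergent subsequence of the dual potentials $(\hat\phi^N,\hat\psi^N)$; the only difference is how the cluster point is identified with the optimal pair for the limit problem. You pass to the limit directly in the Sinkhorn fixed-point equations \eqref{eq:Sinkhorn} (uniform convergence of the integrand plus weak* convergence of the measure) and then invoke the uniqueness from (i), which is clean and avoids touching the primal problem. The paper instead runs a primal--dual sandwich: it also takes a weak* cluster point $\pi$ of the primal optimizers $\hat\pi^N$, uses feasibility plus joint lower semicontinuity of $\KL$ to bound the primal limit value from above, and uses convergence of the dual objective along the Arzel\`a--Ascoli subsequence to bound from below, forcing both cluster points to be optimal. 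Your route is slightly more direct; the paper's route is the classical $\Gamma$-convergence style argument and would still go through even without a clean fixed-point characterization of optimality.
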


\begin{proof}
i)  
Since the dual maximizers $\hat \phi$ and $\hat \psi$ are unique $\mu$ and $\tilde \mu$-almost everywhere up to an additive constant,
the extension of $\hat \phi$ and $\hat \psi$ to all of $\X$ via \eqref{eq:Sinkhorn} only depends on the values of the other function 
$\tilde \mu$-a.e.\ or $\sigma$-a.e., respectively. 
Hence, these extensions are unique up to an additive constant, 
which makes $\hat \phi \oplus \hat \psi$ unique for all dual maximizers that solve \eqref{eq:Sinkhorn} on the full space. 
Therefore, $k^\varepsilon$ is unique and well-defined.
\\
ii)
Since $c$ and $\hat \phi \oplus \hat \psi$ are continuous on a compact domain, $k^\varepsilon$ is bounded away from zero.
\\
iii)
By compactness of $\X$, the cost function $c=\text{dist}^2$ is Lipschitz continuous. 
The solutions to \eqref{eq:Sinkhorn} inherit the Lipschitz constant of $c$ and $\hat \phi \oplus \hat \psi$ can be shown to be bounded (see \cite[Proposition 1.11]{santambrogio2015optimal} for the same arguments for unregularized transport). Therefore, $\hat \phi \oplus \hat \psi-c$ is Lipschitz continuous and bounded from above, and so is $k^\varepsilon$.
\\
iv)
This part follows from $\pi=k^\varepsilon \cdot (\mu \otimes \tilde \mu) \in \Pi(\mu,\tilde \mu)$.
\\
v)
Let $\hat \pi^N$ be the unique minimizer of $\text{OT}_\varepsilon$ for $(\mu^N,\tilde \mu^N)$ and 
let $(\hat \phi^N, \hat \psi^N)$ be dual maximizers 
that solve \eqref{eq:Sinkhorn} for all $x,\tilde x \in \X$.
Since $(\hat \pi^N)_N$ is a sequence of probability measures on a compact domain, 
the sequence is weak* pre-compact, so that it has a convergent subsequence. 
Now any cluster point $\pi$ can be shown to satisfy $\pi \in \Pi(\mu,\tilde \mu)$ and 
by the weak* lower semi-continuity of $\KL$, we have
$$
\liminf_{N \to \infty} \int_{\X \times \X} c\,\diff \pi^N + \varepsilon \KL(\pi^N|\mu^N \otimes \tau^N) \geq \int_{\X \times \X} c\,\diff \pi + \varepsilon \KL(\pi|\sigma \otimes \tau).
$$
Arguing as in point iii), the $(\hat \phi^N,\hat \psi^N)$ are equi-Lipschitz, 
and by applying suitable constant shifts, they can be shown to be equi-bounded (see again \cite[Proposition 1.11]{santambrogio2015optimal}). 
Hence, by the theorem of Ascoli--Arzela, there is some cluster point $(\phi,\psi)$ of
$( \hat \phi^N, \hat \psi^N)$ with respect to uniform convergence 
for which the dual objective values of $(\hat \phi^N,\hat \psi^N)$ converge to that of $(\phi,\psi)$. 
Hence, the cluster points $\pi$ and $(\phi,\psi)$ must be primal and dual optimal for the limit problems and therefore $k^\varepsilon_N \to k^\varepsilon$ holds true as $N \to \infty$.
\end{proof}

\subsection{Non-parametric class of hypothesis densities}
\label{sec:Kernel}
In the following, we  construct the hypothesis densities $Q$.

\begin{definition}[Hypothesis space] \label{prop:QConstruction}
For $\mu, \tilde{\mu} \in \mathcal P(\X)$, $\nu, \tilde{\nu} \in \mathcal P(\Y)$, let $k^\varepsilon_{\mu\tilde{\mu}} \in \mathcal C(\X \times \X)$ 
and 
$k^\varepsilon_{\nu\tilde{\nu}} \in \mathcal C(\Y \times \Y)$ be the entropic OT kernels between $\mu,\tilde{\mu}$ and $\nu,\tilde{\nu}$, respectively and 
$K = K_{\mu,\tilde \mu,\nu, \tilde \nu}^\varepsilon \coloneqq k^\varepsilon_{\mu\tilde{\mu}} k^\varepsilon_{\nu\tilde{\nu}}
\in \mathcal C\big( (\X \times \X)\times (\Y \times \Y) \big)
$.
We call the linear mapping
\begin{align*}
E_K: \mathcal M(\X \times \Y) \to \mathcal C(\X \times \Y), \quad
	\xi \mapsto k^\varepsilon_{\mu\tilde{\mu}}.\xi.k^\varepsilon_{\nu\tilde{\nu}} 
 :=
	\int_{\X \times \Y} 
 k^\varepsilon_{\mu\tilde{\mu}}(x,\tilde x)\,k^\varepsilon_{\nu\tilde{\nu}}(y,\tilde y)
 \,\diff \xi (\tilde x, \tilde y).
\end{align*}
{\rm entropic kernel mean embedding}. Then we propose as  {\rm hypothesis space of densities} 
\begin{align} \label{eq:Q}
	Q \assign \{ E_K(y) = k^\varepsilon_{\mu\tilde{\mu}}.\xi.k^\varepsilon_{\nu\tilde{\nu}} | \xi \in \Xi (\X \times \Y) \},
\end{align}
for some $\Xi \subseteq  \mathcal P(\X \times \Y)$.
\end{definition}
\medskip

In our applications, we will restrict ourselves to 
$\Xi  \in \big\{  \mathcal P(\X \times \Y), \Pi(\tilde \mu) \big\}$, where 
\begin{equation} \label{hyp1}
\Pi(\tilde \mu) \assign \{\xi \in \mathcal P(\X \times \Y) : (P_1)_\# \xi = \tilde \mu\}.
\end{equation}
Note that both sets $\mathcal P(\X \times \Y)$ 
and $\Pi(\tilde \mu)$
are weak* compact.

\begin{remark}
Indeed, the entropic kernel mean embedding maps into Lipschitz continuous functions in $\mathcal C(\X \times \Y)$ by the following argument:
Let $(x_1,y_1)$ and $(x_2,y_2)$ in $\X \times \Y$. Then
\begin{align*}
  & |k^\varepsilon_{\mu\tilde{\mu}}.\xi.k^\varepsilon_{\nu\tilde{\nu}}(x_1,y_1)-k^\varepsilon_{\mu\tilde{\mu}}.\xi.k^\varepsilon_{\nu\tilde{\nu}}(x_2,y_2)| \\
  \leq & \left|
\int_{\X \times \Y} 
 \big(k^\varepsilon_{\mu\tilde{\mu}}(x_1,\tilde x)\,k^\varepsilon_{\nu\tilde{\nu}}(y_1,\tilde y)-k^\varepsilon_{\mu\tilde{\mu}}(x_2,\tilde x)\,k^\varepsilon_{\nu\tilde{\nu}}(y_2,\tilde y)\big)
 \,\diff \xi (\tilde x,\tilde  y) 
 \right| \\
 \leq & C \cdot \big( \textnormal{dist}_\X(x_1,x_2) + \textnormal{dist}_\Y(y_1,y_2) \big)
\end{align*}
where $C$ depends on the Lipschitz constants and upper bounds on $k^\varepsilon_{\mu\tilde{\mu}}$ and $k^\varepsilon_{\nu\tilde{\nu}}$ as implied by Proposition \ref{prop:EntropicKernels}iii).
\end{remark}

The following remark shows the relation to kernel mean embeddings of measures.

\begin{remark}
If $\mu = \tilde \mu$,  then we can choose the dual functions in 
$\text{OT}_\varepsilon(\mu,\tilde \mu)$ that $\phi_\mu = \psi_\mu$,
and similarly for $\nu = \tilde \nu$.
In this case, the kernel becomes 
\begin{align}
K\big((x,y),(\tilde x , \tilde y) \big)
=\exp\Big( \frac{\phi_\mu(x) + \phi_\nu(y)}{\varepsilon} \Big)
\exp\Big(\frac{-c(x,\tilde x) - c(y,\tilde y)}{\varepsilon} \Big)
\exp\Big( \frac{\phi_\mu(\tilde{x}) + \phi_\nu(\tilde{y})}{\varepsilon} \Big).
\end{align}
This is a symmetric kernel, which is moreover positive definite.
Then it is well-known that the kernel mean embedding $E_K$  maps into the 
reproducing kernel Hilbert space $\mathcal H_K$ with reproducing kernel $K$.
Furthermore, for so-called characteristic kernels $K$, the map $E_K$ is injective
and surjective onto $\mathcal H_K$ if and only if $\X \times \Y$ is finite, see  \cite{steinwart2021strictly}.
For more information, we refer to \cite{SGS2018kernel, steinwart2008support}.
\end{remark}

By the next  proposition, we will see that any $q \in Q$  is a probability density with respect to $\mu \otimes \nu$.
Furthermore, in the case 
$\Xi = \Pi(\tilde \mu)$, the density
$q(x,y) \in Q$ can be interpreted as conditional density of $y$ given the location $x$ of a particle pair $(x,y)$. 
As discussed in the introduction, this is particularly useful for the estimation of transition probabilities and transfer operators in dynamical systems.

\begin{proposition}[Mass preservation]\label{prop:Markov}
Let the assumptions of Definition \ref{prop:QConstruction}
be fulfilled.
Then it holds 
for any $q \in Q$ that
$q \cdot (\mu \otimes \nu) \in \mathcal P(\X \times \Y)$. 
If moreover $\Xi = \Pi(\tilde \mu)$, then we have for any $x \in \X$ that 
$q(x,\cdot) \cdot \nu \in \mathcal P(\Y)$.
\end{proposition}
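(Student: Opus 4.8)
The plan is to verify the two claimed normalizations directly from the defining formula $q = E_K(\xi) = k^\varepsilon_{\mu\tilde\mu}.\xi.k^\varepsilon_{\nu\tilde\nu}$ and the marginalization property of the entropic kernels from Proposition~\ref{prop:EntropicKernels}~(iv). First I would write out the double integral
\[
\int_{\X\times\Y} q(x,y)\,\diff(\mu\otimes\nu)(x,y)
= \int_{\X\times\Y}\!\!\int_{\X\times\Y} k^\varepsilon_{\mu\tilde\mu}(x,\tilde x)\,k^\varepsilon_{\nu\tilde\nu}(y,\tilde y)\,\diff\xi(\tilde x,\tilde y)\,\diff(\mu\otimes\nu)(x,y),
\]
apply the Fubini--Tonelli theorem (justified since all integrands are non-negative and continuous on a compact domain, hence bounded), and integrate out $x$ against $\mu$ and $y$ against $\nu$ first. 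By Proposition~\ref{prop:EntropicKernels}~(iv) applied to the pair $(\mu,\tilde\mu)$ we have $\int_\X k^\varepsilon_{\mu\tilde\mu}(x,\tilde x)\,\diff\mu(x)=1$ for every $\tilde x$, and likewise $\int_\Y k^\varepsilon_{\nu\tilde\nu}(y,\tilde y)\,\diff\nu(y)=1$ for every $\tilde y$. Hence the inner integrals collapse to $1$, leaving $\int_{\X\times\Y} 1\,\diff\xi(\tilde x,\tilde y) = 1$ because $\xi \in \Xi \subseteq \mathcal P(\X\times\Y)$. Combined with $q>0$ (which follows from $k^\varepsilon_{\mu\tilde\mu},k^\varepsilon_{\nu\tilde\nu}>0$ by Proposition~\ref{prop:EntropicKernels}~(ii) and $\xi$ a probability measure), this gives $q\cdot(\mu\otimes\nu)\in\mathcal P(\X\times\Y)$.

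For the second claim, assume $\Xi = \Pi(\tilde\mu)$, so that $(P_1)_\#\xi = \tilde\mu$. Fix $x\in\X$ and compute
\[
\int_\Y q(x,y)\,\diff\nu(y)
= \int_\Y\!\!\int_{\X\times\Y} k^\varepsilon_{\mu\tilde\mu}(x,\tilde x)\,k^\varepsilon_{\nu\tilde\nu}(y,\tilde y)\,\diff\xi(\tilde x,\tilde y)\,\diff\nu(y),
\]
again swap the order of integration, and perform the $y$-integral first: $\int_\Y k^\varepsilon_{\nu\tilde\nu}(y,\tilde y)\,\diff\nu(y)=1$ for every $\tilde y$ by Proposition~\ref{prop:EntropicKernels}~(iv). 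This leaves $\int_{\X\times\Y} k^\varepsilon_{\mu\tilde\mu}(x,\tilde x)\,\diff\xi(\tilde x,\tilde y)$, which depends only on the $\tilde x$-variable, so it equals $\int_\X k^\varepsilon_{\mu\tilde\mu}(x,\tilde x)\,\diff(P_1)_\#\xi(\tilde x) = \int_\X k^\varepsilon_{\mu\tilde\mu}(x,\tilde x)\,\diff\tilde\mu(\tilde x)$. Here is the one place where the specific normalization of the entropic kernel matters in the opposite direction: by Proposition~\ref{prop:EntropicKernels}~(iv), $\int_\X k^\varepsilon_{\mu\tilde\mu}(x,\tilde x)\,\diff\tilde\mu(\tilde x)=1$ for all $x\in\X$. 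Together with $q(x,\cdot)>0$, this yields $q(x,\cdot)\cdot\nu\in\mathcal P(\Y)$ for every $x$.

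I do not anticipate a genuine obstacle here; the proof is essentially a bookkeeping exercise in which direction each of the two marginal identities in Proposition~\ref{prop:EntropicKernels}~(iv) is being used. The only point requiring a little care is making sure the Fubini interchange is legitimate at each step and that one invokes the correct one of the two normalizations $\int k^\varepsilon_{\mu\tilde\mu}(x,\cdot)\,\diff\tilde\mu = 1$ versus $\int k^\varepsilon_{\mu\tilde\mu}(\cdot,\tilde x)\,\diff\mu = 1$ — they are not interchangeable, and the second claim genuinely relies on pushing $\xi$ forward under $P_1$ to recover $\tilde\mu$, which is exactly why the restriction $\Xi=\Pi(\tilde\mu)$ is imposed there but not for the first claim.
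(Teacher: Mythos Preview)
Your proof is correct and follows essentially the same route as the paper's: both arguments write out $q=E_K(\xi)$, apply Fubini, and invoke the two marginal identities of Proposition~\ref{prop:EntropicKernels}(iv) in the appropriate direction, using $(P_1)_\#\xi=\tilde\mu$ for the second claim. Your version is slightly more explicit about the Fubini justification and about which normalization is used where, but the structure is identical.
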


\begin{proof}
Assume that $q = E_K(\xi)$, where $\xi \in \mathcal P(X \times Y)$. Clearly $q$ is non-negative and by Proposition \ref{prop:EntropicKernels}iv), we obtain
\begin{align*}
	 \int_{\X \times \Y} q(x,y)\,\diff \mu(x)\,\diff \nu(y) = \int_{\X \times \Y} k^\varepsilon_{\mu\tilde{\mu}}(x, \tilde x)k^\varepsilon_{\nu\tilde{\nu}}(y,\tilde y) \, \diff \xi( \tilde x, \tilde y) \diff \mu(x) \diff \nu(y) = 1.
\end{align*}
If in addition $(P_1)_\# \xi =\tilde{\mu}$, we conclude for $x \in \X$ that
\begin{align*}
    \int_{\Y} q(x,y) \, \diff \nu(y) 
    &= \int_{\Y} \int_{\X \times \Y} 
    k^\varepsilon_{\mu\tilde{\mu}}(x,\tilde x )k^\varepsilon_{\nu\tilde{\nu}}(y,\tilde y)
    \, \diff \xi(\tilde x,\tilde y) \diff \nu(y)\\ 
    &=\int_{\X} k^\varepsilon_{\mu\tilde{\mu}}(x,\tilde x) \, \diff((P_1)_\#\xi)(\tilde x) = 1.   
    \qedhere
\end{align*}
\end{proof}

\begin{lemma}
The hypothesis space $Q$  constructed in \eqref{eq:Q} is a compact subset of $\mathcal C(\X \times \Y)$ 
if $\Xi$ is weak* compact.
It is a polyhedral subset of $\mathcal C(\X \times \Y)$, if $\Xi$ is polyhedral, and 
it is finite-dimensional, if $\Xi$ is finite-dimensional.
\end{lemma}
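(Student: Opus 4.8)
The statement bundles three claims about $Q = E_K(\Xi)$: compactness (when $\Xi$ is weak* compact), polyhedrality (when $\Xi$ is polyhedral), and finite-dimensionality (when $\Xi$ is finite-dimensional). All three follow the same template: $E_K$ is a continuous linear map, so it transports the relevant structural property from $\Xi$ to its image. I would set up the proof by first recording the continuity of $E_K$ in the appropriate sense and then treat each of the three cases in turn.

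\emph{Step 1 (continuity of the embedding).} The key observation is that $E_K : \mathcal M(\X \times \Y) \to \mathcal C(\X \times \Y)$ is weak*-to-norm continuous when restricted to a weak* compact (hence bounded) subset. Indeed, $K = k^\varepsilon_{\mu\tilde\mu} k^\varepsilon_{\nu\tilde\nu} \in \mathcal C\big((\X\times\X)\times(\Y\times\Y)\big)$ by Proposition~\ref{prop:EntropicKernels}(iii), and for each $(x,y)$ the function $(\tilde x,\tilde y)\mapsto k^\varepsilon_{\mu\tilde\mu}(x,\tilde x)k^\varepsilon_{\nu\tilde\nu}(y,\tilde y)$ is continuous, so $E_K(\xi)(x,y) = \int k^\varepsilon_{\mu\tilde\mu}(x,\tilde x)k^\varepsilon_{\nu\tilde\nu}(y,\tilde y)\,\diff\xi$ depends weak*-continuously on $\xi$. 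The Remark following Definition~\ref{prop:QConstruction} already shows the family $\{E_K(\xi) : \xi\}$ is equi-Lipschitz (uniformly in $\xi$ over any bounded set, since the Lipschitz bound $C$ only depends on the kernels), so by Arzel\`a--Ascoli any weak* convergent net $\xi_\alpha \rightweaks \xi$ yields $E_K(\xi_\alpha) \to E_K(\xi)$ in $\mathcal C(\X\times\Y)$ with the sup-norm.

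\emph{Step 2 (the three cases).} For compactness: if $\Xi$ is weak* compact, then $Q = E_K(\Xi)$ is the image of a compact set under the continuous map $E_K$ (continuous by Step 1), hence compact in $\mathcal C(\X\times\Y)$. For finite-dimensionality: since $E_K$ is linear, $\mathrm{span}(Q) = E_K(\mathrm{span}(\Xi))$, so if $\Xi$ spans a finite-dimensional subspace of $\mathcal M(\X\times\Y)$ then $Q$ spans a finite-dimensional subspace of $\mathcal C(\X\times\Y)$; concretely, if $\Xi$ is the convex hull (or more generally lies in the affine span) of finitely many measures $\xi_1,\dots,\xi_r$, then every $q\in Q$ is a corresponding combination of the $E_K(\xi_\ell)$. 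For polyhedrality: a polyhedral $\Xi$ is the convex hull of finitely many vertices together with finitely many extreme rays inside a finite-dimensional affine subspace; applying the linear map $E_K$ sends vertices to vertices and rays to rays, so $E_K(\Xi)$ is again a polyhedron (one should note that $E_K$ restricted to the finite-dimensional span of $\Xi$ is a linear map between finite-dimensional spaces, and the image of a polyhedron under such a map is a polyhedron).

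\emph{Main obstacle.} The only genuinely delicate point is Step 1: establishing that $E_K$ is continuous \emph{in the right topology} for the compactness claim. Norm-to-norm continuity is immediate from $\|E_K \xi\|_\infty \le \|K\|_\infty \|\xi\|_{TV}$, but $\Xi$ is compact in the weak* topology, not the norm topology, so one must verify weak*-to-norm (sequential) continuity on bounded sets; this is where the equi-Lipschitz estimate and Arzel\`a--Ascoli do the real work. For the polyhedral case, the minor subtlety is that ``polyhedral subset of $\mathcal C(\X\times\Y)$'' should be understood relative to the finite-dimensional affine subspace it lives in, and one must check $E_K$ does not collapse the structure in a pathological way --- but since images of polyhedra under linear maps are polyhedra, this is routine once the finite-dimensional reduction is in place.
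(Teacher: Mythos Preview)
Your proposal is correct and follows the same approach as the paper: continuity of $E_K$ from the weak* topology to the sup-norm for the compactness claim, and linearity of $E_K$ for the polyhedral and finite-dimensional claims. You in fact supply more justification for the weak*-to-norm continuity (via the equi-Lipschitz estimate and Arzel\`a--Ascoli) than the paper does, where the continuity is simply asserted.
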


\begin{proof}
Since the kernels $k^\varepsilon_{\mu\tilde{\mu}}$ and 
$k^\varepsilon_{\nu\tilde{\nu}}$ are continuous, the map $E_K: \xi \mapsto k^\varepsilon_{\mu\tilde{\mu}}.\xi.k^\varepsilon_{\nu\tilde{\nu}}$ is continuous between weak* topology on $\mathcal M(X \times Y)$ and $\mathcal C(X \times Y)$.
Compactness of $Q=E_K(\Xi)$ then follows from compactness of $\Xi$.

The other assertions follows by linearity of the entropic kernel mean embedding.
\end{proof}

 The next proposition shows how the class $Q$ in \eqref{eq:Q} can be approximated, for instance by finite-dimensional sets and/or when the true measures $\mu$ and $\nu$ are unknown and only empirical approximations are available.

\begin{proposition}[Empirical approximation of hypothesis class]
\label{prop:KernelsQNConvergence}
Let $(\mu^N)_N$, $(\tilde{\mu}^N)_N \subset \mathcal{P}(\X)$ 
and $(\nu^N)_N$, $(\tilde{\nu}^N)_N \subset \mathcal{P}(\Y)$ 
be sequences of probability measures
with weak* limits
$\mu, \tilde{\mu} \in \mathcal P(\X)$ and
$\nu, \tilde{\nu} \in \mathcal P(\Y)$, respectively.
Let $\Xi \subset \mathcal P(\X \times \Y)$ 
and let $(\Xi^N)_N$ be a sequence of 
subsets of $\mathcal P(\X \times \Y)$, 
such that the $\Gamma$-limit 
with respect to the weak* topology 
of the sequence of indicator functions $(\iota_{\Xi^N})_N$ 
is $\iota_{\Xi}$.
For fixed $\varepsilon>0$, 
let $Q$ and $Q^N$ be the respective set of densities 
as in \eqref{eq:Q}, i.e.,
\begin{align*}
Q  \assign 
\{
k^\varepsilon_{\mu\tilde{\mu}}.\xi.k^\varepsilon_{\nu\tilde{\nu}} 
: \xi \in \Xi
\} 
\quad \text{and} \quad
Q^N  \assign 
\{ 
k^\varepsilon_{\mu_N\tilde{\mu}_N}.\xi.k^\varepsilon_{\nu_N\tilde{\nu}_N} 
: \xi \in \Xi^N 
\}.
\end{align*}
Then the $\Gamma$-limit of the sequence $(\iota_{Q^N})_N$ is $\iota_{Q}$.
\end{proposition}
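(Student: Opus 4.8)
The plan is to verify the two conditions of $\Gamma$-convergence for $(\iota_{Q^N})_N$ to $\iota_Q$ in the form spelled out in Remark \ref{rem:Gamma_convergence_for_iota}, using as the main engine Proposition \ref{prop:EntropicKernels}(v), which gives uniform convergence $k^\varepsilon_{\mu_N\tilde\mu_N} \to k^\varepsilon_{\mu\tilde\mu}$ in $\mathcal C(\X\times\X)$ and $k^\varepsilon_{\nu_N\tilde\nu_N} \to k^\varepsilon_{\nu\tilde\nu}$ in $\mathcal C(\Y\times\Y)$, together with the $\Gamma$-convergence hypothesis on the sets $(\Xi^N)_N$. The key auxiliary observation, which I would isolate first, is a \emph{joint continuity} statement: if $\xi_N \rightweaks \xi$ in $\mathcal P(\X\times\Y)$ and the kernels converge uniformly as above, then $k^\varepsilon_{\mu_N\tilde\mu_N}.\xi_N.k^\varepsilon_{\nu_N\tilde\nu_N} \to k^\varepsilon_{\mu\tilde\mu}.\xi.k^\varepsilon_{\nu\tilde\nu}$ uniformly in $\mathcal C(\X\times\Y)$. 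This follows from splitting
\begin{align*}
\| k^\varepsilon_{\mu_N\tilde\mu_N}.\xi_N.k^\varepsilon_{\nu_N\tilde\nu_N} - k^\varepsilon_{\mu\tilde\mu}.\xi.k^\varepsilon_{\nu\tilde\nu} \|_\infty
&\le \| (k^\varepsilon_{\mu_N\tilde\mu_N}k^\varepsilon_{\nu_N\tilde\nu_N} - k^\varepsilon_{\mu\tilde\mu}k^\varepsilon_{\nu\tilde\nu}).\xi_N \|_\infty \\
&\quad + \| k^\varepsilon_{\mu\tilde\mu}.(\xi_N - \xi).k^\varepsilon_{\nu\tilde\nu} \|_\infty;
\end{align*}
the first term is bounded by $\|K_N - K\|_\infty$ (since $\xi_N$ is a probability measure), which $\to 0$ by the uniform kernel convergence and boundedness from Proposition \ref{prop:EntropicKernels}(iii); the second term $\to 0$ because $K = k^\varepsilon_{\mu\tilde\mu}k^\varepsilon_{\nu\tilde\nu}$ is a fixed continuous function on the compact space, so $\xi \mapsto k^\varepsilon_{\mu\tilde\mu}.\xi.k^\varepsilon_{\nu\tilde\nu}$ is weak*-to-uniform continuous (one can pass to a uniformly convergent finite sum approximation of $K$, or invoke equicontinuity of the family $\{K((x,\cdot),(y,\cdot))\}_{(x,y)}$ which is uniformly Lipschitz, hence weak* convergence testing is uniform in $(x,y)$).

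Granting this joint continuity lemma, condition (ii) of Remark \ref{rem:Gamma_convergence_for_iota} is immediate: given $q \in Q$, write $q = k^\varepsilon_{\mu\tilde\mu}.\xi.k^\varepsilon_{\nu\tilde\nu}$ with $\xi \in \Xi$; by the $\Gamma$-limsup condition for $(\iota_{\Xi^N})_N$ there is a recovery sequence $\xi_N \rightweaks \xi$ with $\xi_N \in \Xi^N$ for $N$ large; then $q_N \assign k^\varepsilon_{\mu_N\tilde\mu_N}.\xi_N.k^\varepsilon_{\nu_N\tilde\nu_N} \in Q^N$ and $q_N \to q$ uniformly by the lemma, so $\iota_{Q^N}(q_N) = 0 = \iota_Q(q)$ eventually. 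For condition (i), suppose $q \notin Q$ and $q_N \to q$ uniformly in $\mathcal C(\X\times\Y)$; I must show $q_N \notin Q^N$ eventually. Assume not: then along a subsequence $q_N = k^\varepsilon_{\mu_N\tilde\mu_N}.\xi_N.k^\varepsilon_{\nu_N\tilde\nu_N}$ with $\xi_N \in \Xi^N$. By weak* compactness of $\mathcal P(\X\times\Y)$, pass to a further subsequence with $\xi_N \rightweaks \xi$. By the $\Gamma$-liminf condition for $(\iota_{\Xi^N})_N$ applied to this converging sequence, $\iota_\Xi(\xi) \le \liminf \iota_{\Xi^N}(\xi_N) = 0$, so $\xi \in \Xi$. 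By the joint continuity lemma, $q_N \to k^\varepsilon_{\mu\tilde\mu}.\xi.k^\varepsilon_{\nu\tilde\nu} =: \tilde q \in Q$; but $q_N \to q$, so $q = \tilde q \in Q$, contradicting $q \notin Q$.

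I expect the main obstacle to be the second term in the splitting — establishing that the weak*-to-uniform continuity of the embedding $\xi \mapsto k^\varepsilon_{\mu\tilde\mu}.\xi.k^\varepsilon_{\nu\tilde\nu}$ is genuinely \emph{uniform} in the evaluation point $(x,y)\in\X\times\Y$, not merely pointwise. The clean way to handle this is to note that the family of test functions $\{(\tilde x,\tilde y)\mapsto k^\varepsilon_{\mu\tilde\mu}(x,\tilde x)k^\varepsilon_{\nu\tilde\nu}(y,\tilde y) : (x,y)\in\X\times\Y\}$ is equi-Lipschitz and equi-bounded (Proposition \ref{prop:EntropicKernels}(iii)), hence relatively compact in $\mathcal C((\X\times\Y))$ by Arzelà–Ascoli; on a compact set of test functions, weak* convergence $\xi_N \rightweaks \xi$ yields convergence of integrals uniformly over that set, which is exactly $\sup_{(x,y)} | k^\varepsilon_{\mu\tilde\mu}.(\xi_N-\xi).k^\varepsilon_{\nu\tilde\nu}(x,y)| \to 0$. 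Everything else is bookkeeping with the definition of $\Gamma$-convergence as already translated in Remark \ref{rem:Gamma_convergence_for_iota} and the compactness of $\mathcal P(\X\times\Y)$.
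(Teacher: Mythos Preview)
Your proof is correct and follows essentially the same route as the paper: verify the two conditions of Remark~\ref{rem:Gamma_convergence_for_iota} by producing a recovery sequence via the $\Gamma$-limsup assumption on $(\Xi^N)_N$, and by a subsequence/compactness argument for the liminf direction. The one place where you are more careful than the paper is the joint continuity step $E_{K_N}(\xi_N)\to E_K(\xi)$: the paper invokes ``continuity of $E_K$'' without distinguishing the varying kernels $K_N$ from the limit kernel $K$, whereas you make the needed splitting explicit and justify the weak*-to-uniform continuity of the fixed embedding via the Arzel\`a--Ascoli argument on the equi-Lipschitz family $\{K((x,\cdot),(y,\cdot))\}_{(x,y)}$; this is a genuine (if minor) improvement in rigor.
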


\begin{proof}
By the assumptions and Propositon~\ref{prop:EntropicKernels}, 
we have that $(k^\varepsilon_{\mu_N\tilde{\mu}_N})_N$ 
and $(k^\varepsilon_{\nu_N\tilde{\nu}_N})_N$ 
converge uniformly to 
$k^\varepsilon_{\mu\tilde{\mu}}$ 
and $k^\varepsilon_{\nu\tilde{\nu}}$.
We check that the sets $(Q^N)_N$ and $Q$ 
fulfill i) and ii) of Remark~\ref{rem:Gamma_convergence_for_iota}
with respect to uniform convergence.
\\[1ex]
i) Let $q \not \in Q$ and 
$(q_N)_N \subset \mathcal{C}(\X \times \Y)$ 
so that $q_N \to q$, $N \to \infty$.
We show that there exists $\bar{N} \in \N$
so that $q_N \not\in Q_N$ for all $N \geq \bar{N}$
by a contradiction.
Hence, assume that $q_N \in Q_N$ 
for infinitely many $N \in \N$.
Up to picking a subsequence, 
we may assume that $q_N \in Q_N$ for all $N \in \N$.
By definition, there exist $\xi_N \in \Xi^N$,
so that $E_K(\xi_N) = q_N$, $N \in \N$.
Up to picking a further subsequence,
there exists $\xi \in \mathcal{P}(\X \times \Y)$,
so that $\xi_N \rightweaks \xi$.
Due to the $\Gamma$-convergence relation of 
$(\Xi^N)_N$ and $\Xi$,
Remark~\ref{rem:Gamma_convergence_for_iota}
ensures that $\xi \in \Xi$.
Finally, by continuity of $E_K$, we obtain
\[
E_K(\xi) 
= \lim_{N \to \infty} E_K(\xi_N) 
= \lim_{N \to \infty} q_N 
= q,
\]
which yields the desired contradiction 
to the assumption $q \not \in Q$.
\\[1ex]
ii) Let $q \in Q$. 
We construct $(q_N)_N$ with $q_N \to q$,
such that $q_N \in Q_N$, for all except finitely many $N \in \N$.
By definition, there exists $\xi \in \Xi$, 
so that $E_K(\xi) = q$.
Using again the $\Gamma$-convergence assumption of 
$(\Xi^N)_N$ and $\Xi$
together with Remark~\ref{rem:Gamma_convergence_for_iota}
we obtain the existence of 
$(\xi_N)_N \subset \mathcal{P}(\X \times \Y)$
so that $\xi_N \in \Xi^N$ 
for all but finitely many $N \in \N$.
Then, by continuity of $E_K$, it holds
\[
q_N \coloneqq E_K(\xi_N) \to E_K(\xi) = q.
\]
Additionally, by construction, $q_N \in Q_N$ for all but finitely many $N \in \N$ which concludes the proof.
\end{proof}

\begin{proposition} \label{prop:RN}
Let $(\tilde{X}^N)_N$ and $(\tilde{Y}^N)_N$ be sequences of increasing subsets of $\X$ and $\Y$ such that $\bigcup_N \tilde{X}^N$ and $\bigcup_N \tilde{Y}^N$ are dense in $\X$ and $\Y$ respectively.
Let $(\tilde{\mu}^N)_N$ be a sequence of measures with $\spt \tilde{\mu}^N \subset \tilde{X}^N$, converging weak* to $\tilde{\mu}$.
Let discrete approximations of $\Xi_1 \assign \mathcal P(\X \times \Y)$ and $\Xi_2 \assign \Pi(\tilde \mu)$ 
be given by
\begin{align} \label{eq:RN}
\Xi^N_{1}  \assign \mathcal P(\tilde{X}^N \times \tilde{Y}^N)
\quad \text{or} \quad
\Xi^N_{2}  \assign \{\xi \in \mathcal P(\tilde{X}^N \times \tilde{Y}^N) : (P_{1})_\# \xi = \tilde{\mu}^N\}.
\end{align}
Then,  the $\Gamma$-limit of the sequence of functions $(\iota_{\Xi^N_i})_N$ 
is given by $\iota_{\Xi_i}$,  $i=1,2$.
\end{proposition}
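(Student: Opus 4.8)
The plan is to verify the two conditions of $\Gamma$-convergence for indicator functions with respect to the weak* topology, as reformulated in Remark~\ref{rem:Gamma_convergence_for_iota}. Since $\Xi_1 \subset \Xi_2$ can be treated with essentially the same argument (with the extra marginal constraint handled separately), I would set up both cases in parallel and point out the differences where they arise. The key structural fact is that $\bigcup_N \tilde{X}^N$ and $\bigcup_N \tilde{Y}^N$ are dense and the $\tilde{X}^N$, $\tilde{Y}^N$ are increasing; this is what allows discrete measures on $\tilde{X}^N \times \tilde{Y}^N$ to approximate arbitrary measures on $\X \times \Y$ in the weak* topology.

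For condition i) (the $\liminf$ / ``recovery of the constraint set in the limit''), let $\xi \notin \Xi_i$ and suppose $\xi_N \rightweaks \xi$ with $\xi_N \in \Xi^N_i$ for infinitely many $N$; I would derive a contradiction. For $i=1$ the only thing to check is $\xi \in \mathcal P(\X \times \Y)$, which is automatic since $\mathcal P(\X \times \Y)$ is weak* closed and $\xi_N$ are probability measures — so in fact $\Xi_1^N \subset \Xi_1$ always and condition i) is vacuous. For $i=2$ one must additionally show that the marginal constraint passes to the limit: since $(P_1)_\# \xi_N = \tilde{\mu}^N \rightweaks \tilde{\mu}$ and the push-forward $\xi \mapsto (P_1)_\# \xi$ is weak*-to-weak* continuous, any weak* limit $\xi$ of the $\xi_N$ satisfies $(P_1)_\# \xi = \tilde{\mu}$, hence $\xi \in \Xi_2$, contradiction. (Strictly, one should phrase this for an arbitrary sequence $\xi_N \to \xi$ rather than only a convergent subsequence, but since $\mathcal P(\X \times \Y)$ is weak* compact and metrizable, subsequential arguments suffice.)

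For condition ii) (the ``recovery sequence''), the heart of the proof: given $\xi \in \Xi_i$, I need to construct $\xi_N \in \Xi^N_i$ with $\xi_N \rightweaks \xi$. The natural construction is to push $\xi$ forward under a measurable ``nearest-point'' (or quantization) map $T_N : \X \times \Y \to \tilde{X}^N \times \tilde{Y}^N$ that sends each point to a nearest point of the finite grid; density and monotonicity of the grids ensure $T_N \to \mathrm{id}$ pointwise, hence $(T_N)_\# \xi \rightweaks \xi$ by the mapping/portmanteau theorem, and $(T_N)_\# \xi$ is supported on the finite set, so lies in $\Xi^N_1$. For $i=2$, the subtlety is that this naive push-forward will generally have first marginal $(T_N^X)_\# \tilde{\mu}$ rather than the prescribed $\tilde{\mu}^N$; to fix this I would use the disintegration of $\xi$ against its first marginal $\tilde{\mu}$, write $\xi = \int \xi_x \, d\tilde{\mu}(x)$, and build $\xi_N$ by gluing: transport the first coordinate via the map realizing $\tilde{\mu} \rightweaks \tilde{\mu}^N$ (or directly assume $\tilde{\mu}^N$ is itself a quantization of $\tilde{\mu}$) and discretize the fibers into $\tilde{Y}^N$. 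One checks $(P_1)_\# \xi_N = \tilde{\mu}^N$ by construction and $\xi_N \rightweaks \xi$ by testing against continuous functions on the product.

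The main obstacle is precisely this last construction in the constrained case $i=2$: making a recovery sequence that \emph{exactly} matches the prescribed discrete first marginal $\tilde{\mu}^N$ while still converging weak* to $\xi$. The unconstrained discretization (condition ii for $i=1$, and all of condition i) is routine portmanteau-theorem bookkeeping; the constrained gluing requires care with measurability of the disintegration and with the compatibility between the grid $\tilde{X}^N$ on which $\tilde{\mu}^N$ lives and the quantization used for the fibers. I would handle it by first reducing to the case where $\tilde{\mu}^N = (r_N)_\# \tilde{\mu}$ for a measurable retraction $r_N$ onto $\tilde X^N$ converging to the identity (which one may do since any weak*-convergent $\tilde\mu^N$ supported on $\tilde X^N$ can be matched up to a small perturbation, or by invoking that the claim only needs \emph{some} such sequence), then defining $\xi_N \assign (r_N \times s_N)_\# \xi$ where $s_N$ quantizes into $\tilde Y^N$, and verifying both properties directly.
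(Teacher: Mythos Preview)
Your overall strategy—verifying the two conditions of Remark~\ref{rem:Gamma_convergence_for_iota} via closedness of $\mathcal P(\X\times\Y)$ for (i) and a nearest-point discretization for (ii)—is exactly the route the paper takes in its sketch of proof, and your treatment of $i=1$ and of condition (i) for $i=2$ is correct and in fact more detailed than the paper's.

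There is, however, a genuine gap in your recovery-sequence construction for $i=2$. The sequence $(\tilde{\mu}^N)_N$ is \emph{given} in the hypothesis; you are not free to replace it by a more convenient one, so the justification ``the claim only needs \emph{some} such sequence'' is simply wrong. The other justification, ``matched up to a small perturbation'', points in the right direction but cannot be carried out by a retraction map $r_N$ with $\tilde{\mu}^N=(r_N)_\#\tilde{\mu}$: for a generic weak*-convergent sequence $\tilde{\mu}^N$ no such measurable map need exist (e.g.\ $\tilde{\mu}$ could be a Dirac mass while $\tilde{\mu}^N$ has two atoms). What does always exist is a transport \emph{plan} $\gamma_N\in\Pi(\tilde{\mu},\tilde{\mu}^N)$ with Wasserstein cost tending to zero. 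The correct construction is then to glue $\xi\in\Pi(\tilde{\mu},(P_2)_\#\xi)$ with $\gamma_N$ along their common $\tilde{\mu}$-marginal to obtain a measure on $\X\times\X\times\Y$, project onto the last two factors to get $\xi'_N$ with first marginal exactly $\tilde{\mu}^N$, and finally push the $\Y$-coordinate through the quantizer $s_N$. This yields $\xi_N\in\Xi_2^N$ with $\xi_N\rightweaks\xi$. The paper itself leaves this step as a one-line remark (``the fact that $\tilde{\mu}^N\rightweaks\tilde{\mu}$ allows to deal with the constraint''), so you are not missing something the paper supplies—but your proposed reduction, as written, would not go through.
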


\begin{proof}[Sketch of proof]
Since $\bigcup_N \tilde{X}^N$ is dense in $\X$ and the latter is compact, for each $\delta>0$ there is some $N$ such that for all $x \in \X$ one has $B_\delta(x) \cap \tilde{X}^N \neq \emptyset$ where $B_\delta(x)$ denotes the open ball of radius $\delta$ in $\X$ centered at $x$. Therefore, for each $\mu \in \mathcal{P}(\X)$ there is some $\mu^N \in \mathcal{P}(\tilde{X}^N)$ such that the Wasserstein distance (for any $p \in [1,\infty]$) between $\mu$ and $\mu^N$ is less than $\delta$. This implies that $\bigcup_N \mathcal{P}(\tilde{X}^N)$ is a dense subset of $\mathcal{P}(\X)$ with respect to the Wasserstein distance.
By the same argument on the product space $\X \times \Y$, approximated by the product sets $\tilde{X}^N \times \tilde{Y}^N$, $\bigcup_N \Xi^N_1$ is a dense subset of $\Xi_1$ with respect to the Wasserstein distance on $\X \times \Y$. 
We show conditions i) and ii) 
of Remark~\ref{rem:Gamma_convergence_for_iota}
for the case $i=1$.
First, assume 
$\xi \not\in \Xi_1 = \mathcal{P}(\X \times \Y)$ 
and let
$\xi_N \rightweaks \xi$ 
for some $(\xi_N)_N \subset \mathcal{M}(\X \times \Y)$. 
Since $\mathcal{P}(\X \times \Y)$ is closed,
it follows 
$\xi_N \not\in \mathcal{P}(\X \times \Y)
\supset \Xi_1^N$ for all but finitely many $N \in \N$.
Condition ii) of Remark~\ref{rem:Gamma_convergence_for_iota}
follows directly since $(\Xi_1^N)_N$ is dense in $\Xi_1$.
This establishes the $\Gamma$-limit in the case $i=1$. 
The fact that $\tilde{\mu}^N \rightweaks \tilde{\mu}$ allows to deal with the constraint for $i=2$.
\end{proof}

When $\tilde{X}^N$ and $\tilde{Y}^N$ are finite, 
then
$\Xi^N_{1}$ and $\Xi^N_{2}$ are finite-dimensional.
So with Proposition \ref{prop:KernelsQNConvergence}, Corollary \ref{cor:Gamma} and Proposition \ref{prop:RN} we can approximately infer a minimizer of $J_M$ over $Q$. By Proposition \ref{prop:PopMin}, we can intuitively expect this minimizer to be `close' to $p$, if $p$ is `close' to $Q$.

\begin{remark}[Estimation of transfer operators and spectral analysis]
Let $q \in \mathcal C(\X \times \Y)$ such that $q \cdot (\mu \otimes \nu) \in \Pi(\mu,\nu)$, i.e.~$q$ represents a set of transition probability densities.
As discussed in Section \ref{sec:Intro}, $q$ induces a transfer operator $\mathcal T : L^1(\mu) \to L^1(\nu)$ via $(\mathcal Tu)(y)\assign\int_{\X} q(x,y) u(x)\,\diff \mu(x)$.
The operator $\mathcal T$ maps probability densities in $L^1(\mu)$ to probability densities in $L^1(\nu)$.
Since $q$ is bounded, $\mathcal T$ is in fact an operator from $L^p(\mu) \to L^p(\nu)$ for any $p \in [1,\infty]$. In particular, for $p=2$, spectral analysis of $\mathcal T$ can yield an informative low-dimensional description of the macroscopic properties of the underlying dynamics.

Let now $(\mu^N)_N$ and $(\nu^N)_N$ be sequences in $\mathcal{P}(\X)$ and $\mathcal{P}(\Y)$ that converge weak* to $\mu$ and $\nu$, respectively. Let $(q^N)_N$ be a sequence in $\mathcal C(\X \times \Y)$, converging uniformly to $q$ such that $(q^N)_N$ represents a set of transition probability densities from $\mu^N$ to $\nu^N$.
Then $q^N$ induces a transfer operator $\mathcal T^N : L^p(\mu^N) \to L^p(\nu^N)$.
It was shown in \cite[Sections 4.5 to 4.7]{EntropicTransfer22} that a suitable extension of $\mathcal T^N$ to $L^p(\mu) \to L^p(\nu)$ converges to $\mathcal T$ in the Hilbert--Schmidt norm, and thus spectral analysis of $\mathcal T^N$ can be used for an approximate spectral analysis of $\mathcal T$.
When $\mu^N$ and $\nu^N$ have finite support, $\mathcal T^N$ can be explicitly represented and obtained numerically as a finite matrix and its eigen- or singular vectors can be extracted. We give a numerical example for transfer operator analysis in Section \ref{sec:gyre}.
\end{remark}

\subsection{Explicit discrete functional}\label{sec:KernelsDisc}
For $N \in \N$, samples  $(\zb x^i,\zb y^i) = ((x_{j}^i,y_{j}^i)_{j=1}^M)_{i=1}^N$  from $\zb Z$ in \eqref{Z}, we set 
$$
\mathcal X^N \assign \{x_{j}^i : i \in [N],\,j \in [M]\} 
\quad  \text{and} \quad
\mathcal Y^N \assign \{y_{j}^i : i \in [N],\,j \in [M]\}.
$$ 
We associate with the $i$-th sample the empirical measures 
$$
\mu_i^N \assign \frac{1}{M} \sum_{j=1}^M \delta_{x_{j}^i} 
\quad  \text{and} \quad
\nu_i^N \assign \frac{1}{M} \sum_{j=1}^M \delta_{y_{j}^i},
$$ 
and set finally
\begin{equation} \label{mue}
\mu^N\assign \frac{1}{N}\sum_{i=1}^N \mu_i^N
\quad  \text{and} \quad
\nu^N\assign \frac{1}{N}\sum_{i=1}^N \nu_i^N.
\end{equation}
In addition, we choose empirical probability measures 
$\tilde{\mu},\tilde{\nu}$ 
concentrated on some finite subsets 
$$
\tilde{\mathcal X} \assign \{ \tilde x_k: k \in [K] \}\subset \X 
\quad  \text{and} \quad
\tilde{\mathcal Y} \assign \{ \tilde y_l: l \in [L] \} \subset \Y,
$$
i.e.,
$$
\tilde \mu 
\assign  \sum_{k=1}^K \tilde \mu_k \delta_{\tilde x_{k}} 
\quad  \text{and} \quad
\tilde \nu \assign \sum_{l=1}^L \tilde \nu_l \delta_{\tilde y_{l}},
$$
where the $(\tilde \mu_k)_k \in \triangle_K$ and $(\tilde \nu_l)_l  \in \triangle_L$.
We want to solve
\begin{equation}\label{eq:DiscreteProblemPre}
\min_{\xi\in \mathcal P(\tilde{\mathcal X} \times \tilde{\mathcal Y})} J^N_M(k^\varepsilon_{\mu^N\tilde{\mu}}.\xi.k^\varepsilon_{\nu^N\tilde{\nu}}),
\end{equation}
or the constrained problem
\begin{equation}\label{eq:DiscreteProblemPre1}
\min_{\xi\in \mathcal P(\tilde{\mathcal X} \times \tilde{\mathcal Y})} J^N_M(k^\varepsilon_{\mu^N\tilde{\mu}}.\xi.k^\varepsilon_{\nu^N\tilde{\nu}}) \quad \text{subject to} \quad (P_1)_\# \xi = \tilde \mu.
\end{equation}
Note that in for our discrete setting, the measures 
$\mathcal P(\tilde{\mathcal X} \times \tilde{\mathcal Y})$ 
have the form
$$
\xi \assign \sum_{k,l=1}^{K,L} \xi_{k,l} \delta_{\tilde x_k,\tilde y_l}, \quad 
\sum_{k,l=1}^{K,L} \xi_{k,l} = 1, \; \xi_{k,l} \ge 0
$$
and the function 
$q \assign k_{\mu^N\tilde{\mu}}.\xi.k^\varepsilon_{\nu^N\tilde{\nu}}
\in \mathcal X^N \times \mathcal Y^N$ 
is given by
\begin{equation}\label{eq:q}
q(x^n_m, y^{i}_{j}) \assign \sum_{k,l=1}^{K,L}
k_{\mu^N\tilde{\mu}} (x^n_m,\tilde x_k) k^\varepsilon_{\nu^N\tilde{\nu}}(y^{i}_{j}, \tilde y_l) \xi_{k,l}.
\end{equation}
An illustration of the density 
$q$ is given in Figure~\ref{fig:k_rho_k}.
\begin{figure}
    \centering
    \includegraphics[width = 0.5\linewidth]{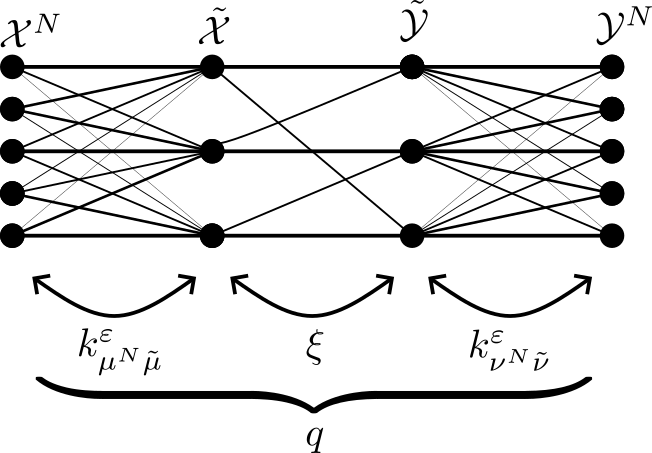}
    \caption{
    Illustration of the transport density 
    $q = k^\varepsilon_{\mu^N\tilde{\mu}}.\xi.k^\varepsilon_{\nu^N\tilde{\nu}}$.
    }
    \label{fig:k_rho_k}
\end{figure}

The following proposition provides an explicit expression for the above problem
in terms of the KL divergence, that can then be tackled with the algorithms in Section \ref{sec:Algorithm}.

\begin{proposition}\label{prop:DiscreteProblem}
For any $\xi\in\mathcal P(\tilde{\mathcal X} \times \tilde{\mathcal Y})$, it holds
\begin{align}
J^N_M(k^\varepsilon_{\mu^N\tilde{\mu}^N}.\xi.k^\varepsilon_{\nu^N\tilde{\nu}^N})
&=
\frac1N \sum_{n=1}^N \KL(\nu_n^N\mid (A^\varepsilon\xi)_n) + \text{\rm const}\\
&=
\KL(\tfrac{1}{N} \oplus_{n=1}^N \nu_n^N\mid A^\varepsilon\xi) + \text{\rm const}
\end{align}
with a constant independent of $\xi$.
Here $A^\varepsilon$ is the linear operator
$A^\varepsilon:{\mathcal P}(\tilde {\mathcal X} \times \tilde {\mathcal Y}) \to \mathcal P(\mathcal Y^N)$,
$\xi  \mapsto \frac{1}{N} \bigoplus_{n=1}^N (A^\varepsilon\xi)_n$ given with 
$q= k_{\mu^N\tilde{\mu}}.\xi.k^\varepsilon_{\nu^N\tilde{\nu}}$  by
\begin{align*} 
    (A^\varepsilon\xi)_n  
    &\coloneqq \left(\int_{\X} [k^\varepsilon_{\mu^N\tilde{\mu}}.\xi.k^\varepsilon_{\nu^N\tilde{\nu}}](x,\cdot)\,\diff \mu_n^N(x) \right)\cdot\nu^N\\
    &=
    \int_{\X} q(x,\cdot) \ \diff\mu^N_n(x)  \cdot \nu^N
    =
    \frac{1}{N M^2} \sum_{i=1}^N \sum_{j=1}^M \sum_{m=1}^M
    q(x^n_m,y^{i}_{j})  \, \delta_{y^{i}_{j}}.
    \end{align*}
\end{proposition}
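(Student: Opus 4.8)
The plan is to unfold the empirical functional into explicit sums over the data points and then recognise the result as a sum of Kullback--Leibler divergences. For $q=k^\varepsilon_{\mu^N\tilde\mu}.\xi.k^\varepsilon_{\nu^N\tilde\nu}$ set $g_n\coloneqq (t_M q)(\zb x^n,\cdot)=\int_\X q(x,\cdot)\,\diff\mu_n^N(x)\in\mathcal C(\Y)$. First I would observe that, since $\nu^N=\frac1{NM}\sum_{i=1}^N\sum_{j=1}^M\delta_{y_j^i}$, the measure $(A^\varepsilon\xi)_n=g_n\cdot\nu^N$ has exactly the claimed atomic form; moreover, since the kernels $k^\varepsilon_{\mu^N\tilde\mu},k^\varepsilon_{\nu^N\tilde\nu}$ are strictly positive by Proposition~\ref{prop:EntropicKernels}(ii) and $\xi\in\mathcal P(\tilde{\mathcal X}\times\tilde{\mathcal Y})$, we have $q>0$, hence $g_n>0$ on all of $\Y$. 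Using the definition of $J^N_M$ in \eqref{eq:LossCloudsEmpirical} together with the definition of $t_M$ and $\int_\Y f\,\diff\nu_n^N=\frac1M\sum_{j=1}^M f(y_j^n)$, this gives
\begin{equation*}
J^N_M(q)=-\frac1N\sum_{n=1}^N\int_\Y\log g_n\,\diff\nu_n^N .
\end{equation*}

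Next I would compute one term $\KL(\nu_n^N\mid (A^\varepsilon\xi)_n)=\KL(\nu_n^N\mid g_n\cdot\nu^N)$. Since $\spt\nu_n^N\subseteq\spt\nu^N$ we have $\nu_n^N\ll\nu^N$, so with $h_n\coloneqq\diff\nu_n^N/\diff\nu^N$ the Radon--Nikodym derivative of $\nu_n^N$ with respect to $g_n\cdot\nu^N$ is $h_n/g_n$ on $\spt\nu^N$, and the definition of $\KL$ via $\varphi(s)=s\log s-s+1$ yields
\begin{equation*}
\KL(\nu_n^N\mid g_n\cdot\nu^N)=\int_\Y\log h_n\,\diff\nu_n^N-\int_\Y\log g_n\,\diff\nu_n^N-\nu_n^N(\Y)+(g_n\cdot\nu^N)(\Y),
\end{equation*}
where $\int_\Y\log h_n\,\diff\nu_n^N=\KL(\nu_n^N\mid\nu^N)$ does not depend on $\xi$ and $\nu_n^N(\Y)=1$. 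Averaging over $n$: the $\int_\Y\log g_n\,\diff\nu_n^N$ terms reproduce $-J^N_M(q)$ by the identity just established, the $\KL(\nu_n^N\mid\nu^N)$ terms give the asserted $\xi$-independent constant, and it remains to check $-1+\frac1N\sum_{n=1}^N(g_n\cdot\nu^N)(\Y)=0$. For this I would expand
\begin{equation*}
(g_n\cdot\nu^N)(\Y)=\int_{\tilde{\mathcal X}\times\tilde{\mathcal Y}}\Big(\int_\X k^\varepsilon_{\mu^N\tilde\mu}(x,\tilde x)\,\diff\mu_n^N(x)\Big)\Big(\int_\Y k^\varepsilon_{\nu^N\tilde\nu}(y,\tilde y)\,\diff\nu^N(y)\Big)\,\diff\xi(\tilde x,\tilde y),
\end{equation*}
where the inner $\Y$-integral equals $1$ by Proposition~\ref{prop:EntropicKernels}(iv); averaging the inner $\X$-integral over $n$ turns $\frac1N\sum_n\mu_n^N$ into $\mu^N$, for which Proposition~\ref{prop:EntropicKernels}(iv) again gives the value $1$, so the whole average is $\int\diff\xi=1$. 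This establishes $J^N_M(q)=\frac1N\sum_{n=1}^N\KL(\nu_n^N\mid (A^\varepsilon\xi)_n)+\const$, i.e.\ the first identity; in particular $A^\varepsilon\xi$ is a probability measure. The second identity is then immediate: on the disjoint blocks of the two direct sums $\KL$ decomposes additively, and $\KL(\tfrac1N\rho\mid\tfrac1N\tau)=\tfrac1N\KL(\rho\mid\tau)$, so $\KL(\tfrac1N\bigoplus_{n=1}^N\nu_n^N\mid A^\varepsilon\xi)=\frac1N\sum_{n=1}^N\KL(\nu_n^N\mid (A^\varepsilon\xi)_n)$.

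The computation is essentially bookkeeping. The one point requiring care is that $(A^\varepsilon\xi)_n$ is in general \emph{not} a probability measure — only the average $\frac1N\sum_n(A^\varepsilon\xi)_n$ has unit mass — so one must use the $\KL$ functional on $\mathcal M_+$ and carry the mass-correction terms $-\nu_n^N(\Y)+(A^\varepsilon\xi)_n(\Y)$ through; it is precisely the cancellation of their average against the ``$-1$'' coming from $\varphi$ that both produces the additive constant and shows it is independent of $\xi$.
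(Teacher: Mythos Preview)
Your proof is correct and follows essentially the same route as the paper: both arguments expand $\KL(\nu_n^N\mid g_n\cdot\nu^N)$ with its mass-correction terms, use Proposition~\ref{prop:EntropicKernels}(iv) to show that the correction terms $-\nu_n^N(\Y)+(g_n\cdot\nu^N)(\Y)$ cancel upon averaging over $n$, and identify the remaining $\log h_n$ integral as the $\xi$-independent constant $\KL(\nu_n^N\mid\nu^N)$. You are slightly more explicit than the paper in flagging that the individual $(A^\varepsilon\xi)_n$ need not have unit mass and in spelling out the direct-sum identity for the second equality, but the substance is the same.
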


\begin{proof}
Since $\xi\in\mathcal P(\tilde{\mathcal X}\times \tilde{\mathcal Y})$,
we can apply Proposition \ref{prop:EntropicKernels}iv) to get
\[
    \int_{\X \times \Y} q \ \diff(\mu^N\otimes\nu^N) 
    = 
    \int_{\X^2\times \Y^2}k^\varepsilon_{\mu^N\tilde{\mu}}(x,\tilde x)k^\varepsilon_{\nu^N\tilde{\nu}}(y,\tilde y)\diff \xi(\tilde x, \tilde y) \, \diff\mu^N(x) \, \diff\nu^N(y) = 1.
\]
Thus, $A^\varepsilon\xi \in \mathcal P(\mathcal Y^N)$ and using the above relation
with 
$$
\nu^N_n = \frac1M  \sum _{j=1}^M \delta_{y^n_j} + \sum_{i\not = n}^N  \sum _{j=1}^M 0 \, \delta_{y^i_j}
$$
we obtain 
\begin{align*}
  &\sum_{n=1}^N \KL\Big(\nu^N_n \mid  \int_{\X} q(x,\cdot) \ \diff\mu^N_n(x)  \cdot \nu^N\Big) 
   =  
   \sum_{n=1}^N \int_{\Y} 
   \log\Big(
   \frac{\diff \nu^N_n}{ 
   \diff \big( \int_{\X} q(x,\cdot) \ \diff\mu^N_n(x)  \cdot \nu^N \big)}  \Big) \, \diff \nu_n^N\\
   &\;
   -\sum_{n=1}^N\int_{\Y} \diff \nu_n^N + 
  \sum_{n=1}^N\int_{\X\times \Y} q(x,y) \, \diff  \mu_n^N(x)\diff\nu^N(y)\\
  = {} & \sum_{n=1}^N \int_{\Y} \log\Big(\frac{1}{\int_{\X} q(x,\cdot) \ \diff\mu_n^N(x)}\frac{\diff\nu_n^N}{\diff\nu^N}\Big) \, \diff\nu_n^N\\
   = {} &-\sum_{n=1}^N \int_{\Y}\log\left(\int_{\X} q(x,\cdot) \ \diff\mu_n^N(x) \right) \diff\nu_n^N + \sum_{n=1}^N \KL(\nu_n^N|\nu^N)\\
  = {}  & N \, J_M^N(q) + \const,
\end{align*}
where we have used \eqref{eq:LossCloudsEmpirical} in the last equation.
\end{proof}


\paragraph{Matrix-vector representation.}
For our numerical implementations, we provide, based on Proposition \ref{prop:DiscreteProblem}, a streamlined matrix-vector notation of our problems 
\eqref{eq:DiscreteProblemPre} and \eqref{eq:DiscreteProblemPre1}.
Using the matrix representations 
$$
\zb K_{x} \assign \begin{pmatrix}\zb K_{x}^1\\ \vdots \\ \zb K_{x}^N \end{pmatrix}\in  \R^{MN,K} \quad \text{and} \quad
\zb K_{x}^i\assign
\big( k_{\mu^N \tilde{\mu}}(x_j^i,\tilde x_k) \big)_{j,k=1}^{M,K} ,
$$
and similarly for
$
k_{\nu^N \tilde{\nu}}^\varepsilon
$
the representations 
$\zb K_{y}  \in \R^{MN,L}$ with
$\zb K_{y}^i \in \R^{M,L}$, 
and further 
$\zb{\Xi} \assign \left(\xi_{k,l} \right)_{k,l = 1}^{K,L}$, 
we see that
\[
\zb q \assign \big( \zb q^{n,i}  \big)_{n,i = 1}^{N,N}
= \zb K_x \, \zb \Xi \, \zb K_y^\tT, \quad 
\zb q^{n,i} \assign \big( q(x^n_m,y^{i}_{j} )\big)_{m,j=1}^{M,M}
= \zb K_x^n \, \zb \Xi \, (\zb K_y^i)^\tT.
\]
Then we get the column vector
\[
( A^\varepsilon \xi)_n \, \hat{=} \, \frac{1}{N M^2}  \zb K_{y} \, \zb \Xi^\tT \, (\zb K_{x}^n)^\tT \zb 1_M \in \R^{MN}
\]
as well as
\[
A ^\varepsilon\xi \, \hat{=} \, \frac{1}{N^2 M^2} \zb K_{y} \, \zb \Xi^\tT \, \zb K_x^\tT (I_N \otimes \zb 1_M)\in \R^{MN,M} .
\]
Using the {\rm vec} representation which reorders matrices  columnwise into a vector
and the tensor product $\otimes$ with the property $\text{\rm vec} (A F B^\tT) = (B \otimes A) \text{\rm vec} (F)$, 
this can be rewritten as 
\[
\underbrace{\text{\rm vec} (A^\varepsilon \xi)}_{\zb A^\varepsilon \zb \xi} 
= 
\underbrace{
\frac{1}{N^2 M^2} 
\left( 
\bigl( 
(I_N \otimes  \zb 1_M^\tT)
\zb K_x 
\bigr)
\otimes 
\zb K_y 
\right)
}_{\zb A^\varepsilon \in \R^{N^2 M,KL}} 
\underbrace{\text{\rm vec} 
\, {\zb \Xi}}_{\zb \xi},
\]
which is the matrix representation we will use in the next section.
By  Proposition \ref{prop:EntropicKernels}iv) we know that
$$
\tfrac{1}{MN}\zb K_x^\tT \zb 1_{MN} 
= \zb 1_K , \; \zb K_x \, (\tilde \mu_k)_{k=1}^K = \zb 1_{MN}, 
\quad \text{and} \quad
\tfrac{1}{MN}\zb K_y^\tT \, \zb 1_{MN} =  \zb 1_{L}, \; \zb K_y \, (\tilde \nu_l)_{l=1}^L =  \zb 1_{MN},
$$
so that
\begin{align} \label{matrix}
(\zb A^\varepsilon) ^\tT \zb 1_{N^2M} 
&= 
\frac{1}{N^2M^2} \left(  \zb K_x^\tT  (I_N \otimes  \zb 1_M) \otimes \zb K_y^\tT \right) \, \big(\zb 1_{N} \otimes \zb 1_{MN} \big)\\
&= 
\frac{1}{N^2M^2}
\zb K_x^\tT  (I_N \otimes  \zb 1_M) \zb 1_{N} \otimes  \zb K_y^\tT  \zb 1_{MN}\\
&=
\frac{1}{N^2M^2}
\zb K_x^\tT \zb 1_{MN} \otimes  \zb K_y^\tT  \zb 1_{MN}
= \zb 1_{KL}. \label{propA}
\end{align}
Then, our minimization functional becomes 
\begin{equation} \label{J_discrete}
J_{\text{discr}} (\zb \xi) \assign J^N_M(k^\varepsilon_{\mu^N\tilde{\mu}^N}.\xi.k^\varepsilon_{\nu^N\tilde{\nu}^N}) - \text{const}
= \text{KL} (\zb \zeta, \zb A^\varepsilon \zb \xi),
\end{equation}
where
\begin{equation} \label{zeta}
\zb \zeta \assign \tfrac{1}{N} (\zb \nu ^N_n)_{n=1}^N \in \R^{N^2 M}
, \quad 
\zb \nu ^N_n \assign \big( \delta_{i,n} \zb 1_M\big)_{i=1}^N \in \R^{N M}.
\end{equation}
Finally,  problem \eqref{eq:DiscreteProblemPre}  becomes
\begin{equation} \label{problem1}
 \min_{\xi \in \triangle_{KL} }  \text{KL} (\zb \zeta, \zb A^\varepsilon \zb \xi)
\end{equation}
and its constrained version \eqref{eq:DiscreteProblemPre1} reads as
\begin{equation} \label{problem2}
 \min_{\xi \in \triangle_{KL} }  \text{KL} (\zb \zeta, \zb A^\varepsilon \zb \xi)
 \quad \text{subject to} \quad
 \sum_{l=0}^{L-1} \xi_{k+Kl} = \tilde \mu_k, \quad k=1,\ldots,K.
\end{equation}

\paragraph{Choice of \texorpdfstring{$\tilde {\mathcal X}$}{X} and \texorpdfstring{$\tilde {\mathcal Y}$}{Y}.}
If all sampled points $(\zb x^i,\zb y^i) = ((x_{j}^i,y_{j}^i)_{j=1}^M)_{i=1}^N$ are distinct,
then the total number of points in $\mathcal X^N$ and $\mathcal Y^N$ is $M N$. Choosing then $\tilde{\mu}=\mu^N$ and $\tilde{\nu}=\nu^N$, i.e.~$\tilde{\mathcal X}= \mathcal X^N$, $\tilde{\mathcal Y}= \mathcal Y^N$, the computational complexity of our scheme would grow very quickly.
Instead we can use subsets of $\mathcal X^N$ and $\mathcal Y^N$ as choices for $\tilde{\mathcal X}$ and $\tilde{\mathcal Y}$. This reduces the dimension of the set $\Xi ^N_i$ and the complexity of computing the entropic transport kernels. As long as these sets are asymptotically dense in $\X$ and $\Y$, and $\tilde{\mu}^N$ and $\tilde{\nu}^N$ are chosen consistently, we can still expect our scheme to work well by Proposition \ref{prop:KernelsQNConvergence} and again Proposition \ref{prop:RN}.

Intuitively, for fixed $\varepsilon>0$, the functions $k_{\mu^N \tilde{\mu}}^\varepsilon(\cdot,x)$ for fixed $x$ are approximately Gaussian bumps with a width on the order $\sqrt{\varepsilon}$ and similarly for $k_{\nu^N \tilde{\nu}}^\varepsilon$. 
Therefore, if the Hausdorff distance between $\tilde{\mathcal X}$, 
$\tilde{\mathcal Y}$ and $\X$, $\Y$, respectively, is somewhat less than $\sqrt{\varepsilon}$, then intuitively the richness of the hypothesis class $Q^N$ will not further increase substantially, when increasing the sets $\tilde{\mathcal X}$, $\tilde{\mathcal Y}$. 
This gives a practical guideline on the required size of these sets.

In general, complexity could be reduced further by additional methods. For instance, $\mathcal X^N$ and $\mathcal Y^N$ could be approximated by subsampling and subsequent clustering. Again, we expect that this will not substantially affect the accuracy of the results as long as the Hausdorff distance of the subsampling to the full set is less than $\sqrt{\varepsilon}$. However, for our numerical experiments we found the two above choices sufficient to obtain practically tractable problems.

\section{EMML Algorithms}\label{sec:Algorithm}
In this section, we discuss algorithms for solving our discrete minimization problems
from the previous section.
We will see that they are just special cases of a more general setting, 
which we adress in this section.
Let $N,I,J,L\in \N$ and $\dot\cup_{l=1}^L J_l =[J]$. 
Furthermore, let 
\begin{align} \label{b}
b &= (b_i)_{i=1}^I \in \R_{> 0}^I
\quad \text{with} \quad
b^\tT \zb 1_I = N \zb 1_J.
\\
 \label{A}
A &= (A_{i,j})_{i,j=1}^{I,J} \in \R^{I \times J}_{\geq 0}
\quad \text{with} \quad
A \zb 1_J > 0, 
\quad \text{and} \quad
A^\tT \zb 1_I = N \zb 1_J.
\end{align}
Then problem  \eqref{problem1} is a special case of
\begin{equation}\label{eq:basic_0}
    \min_{x \in \triangle_J} 
    \KL(b \mid A x).
\end{equation}
Given $y = (y_l)_{l=1}^L \in \triangle_L$, $y >0$,
problem  \eqref{problem2} is a special case of
\begin{equation}\label{eq:basic}
    \min_{x\in \R_{\geq 0}^J} 
    \KL(b \mid A x), 
    \quad 
    \text{subject to} 
    \quad
    \sum_{j\in J_l} x_j = y_l.
\end{equation}
Note that $x \in \triangle_J$ is automatically fulfilled 
if the constraints hold true, since $y \in \triangle_L$.

An efficient  method for solving  unconstrained minimization problems of the form \eqref{eq:basic_0}
is the  \emph{expectation maximization maximum likelihood} (EMML) Algorithm \ref{alg:c_EMML_0},
see, e.g.,\ \cite{byrne2005,VSK1985} and references therein.
%
\begin{algorithm}[t]
\begin{algorithmic}
    \State \textbf{Input:} 
    \parbox[t]{300pt}{
    $A$, $b$ fulfilling \eqref{b} and \eqref{A}, resp.,\\
    initial $x^{(0)} \in \R^J_{>0}$}
    \For{$r = 0,1,\ldots$ until a convergence criterion is reached} 
    \State
    $x^{(r+1)} \gets x^{(r)} \odot A^\tT \left(\frac{N^{-1} b}{A x^{(r)}}\right)$.
    \EndFor                            
\caption{EMML algorithm}
\label{alg:c_EMML_0}
\end{algorithmic}
\end{algorithm}
%
The EMML algorithm is known to converge monotonically 
to the solution of \eqref{eq:basic_0}. 
In particular, all iterates $x^{(r)} \in \triangle_J$ and thus also their limit vector
is in $\triangle_J$. However, the standard EMML algorithm is not able to handle equality constraints as those given in \eqref{eq:basic}. 

There are several existing (modified) EMML algorithms 
to handle linear equality constraints \cite{kt1995,J2004,takai2012}.
They have the desirable property that they can be simply built ontop
of the classic EMML algorithm. 
More precisely, in each iteration, after performing the classic EMML update, 
the obtained update is projected orthogonally onto the feasible set.
However, to retain the motonic increase of the likelihood, 
an associated step-size must be chosen carefully.
This results in performing a linesearch in every iteration 
which is usually relaxed to a step-halving procedure.
In this paper, we propose a \emph{constrained} EMML Algorithm
\ref{alg:c_EMML} for solving \eqref{eq:basic} 
which does not require carrying out a linesearch.
The algorithm essentially relies on the same multiplicative update as the EMML,
followed by a renormalization in each partition cell $J_l$ 
such that the constraint in \eqref{eq:basic} is fulfilled.
Due to this renormalization, 
we can omit the multiplication of $N^{-1}$ in the EMML update.
We have the following convergence result.

\begin{algorithm}[t]
\begin{algorithmic}
    \State \textbf{Input:} 
    \parbox[t]{300pt}{
    $b$, $A$ fulfilling \eqref{b} and \eqref{A}, resp.,\\
    $y = (y_l)_{l=1}^L \in \triangle_L$ with  $y >0$,\\
       initial $x^{(0)} \in \R^{J}_{>0}$}
    \For{$r = 0,1,\ldots$ until a convergence criterion is reached} 
    \State
    $u^{(r)} \gets A^\tT \left(\frac{b}{A x^{(r)}}\right)$.
    \vspace{0.2cm}
    \State 
    $\lambda^{(r)}_l \gets \frac{1}{y_l} \sum_{j \in J_l} x^{(r)}_j u^{(r)}_j,  \quad l \in [L]$.
    \vspace{0.2cm}
    \State 
    $x_j^{(r+1)} \gets \frac{1}{\lambda^{(r)}_l}x^{(r)}_j u^{(r)}_j, \quad j \in J_l$, $l \in [L]$.
    \EndFor                            
\caption{Constrained EMML algorithm}
\label{alg:c_EMML}
\end{algorithmic}
\end{algorithm}

\begin{theorem}\label{thm:convergence}   
The sequence $(x^{(r)})_r$ generated by Algorithm \ref{alg:c_EMML}
converges to a minimizer of \eqref{eq:basic}.
Furthermore, it holds
\[
\KL(b \mid A x^{(r)}) \geq \KL(b \mid A x^{(r+1)}), \quad r \in \N.
\]
\end{theorem}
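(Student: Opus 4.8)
The plan is to interpret Algorithm~\ref{alg:c_EMML} as an EMML-type scheme and establish monotonicity via a surrogate (majorization) argument, then upgrade the monotone decrease to convergence of the iterates. Concretely, I would first record the basic invariants of the iteration: since $A^\tT \zb 1_I = N\zb 1_J$ and the multiplicative update sets $x_j^{(r+1)} = x_j^{(r)} u_j^{(r)}/\lambda_l^{(r)}$ with $\lambda_l^{(r)} = \tfrac{1}{y_l}\sum_{j\in J_l} x_j^{(r)} u_j^{(r)}$, the partial sums satisfy $\sum_{j\in J_l} x_j^{(r+1)} = y_l$ for every $l$ and every $r\ge 1$; in particular $x^{(r)}\in\triangle_J$ and feasibility is maintained exactly from the first step on. I would also note that strict positivity is propagated: $Ax^{(r)}>0$ by \eqref{A} together with $b>0$, hence $u^{(r)}>0$, hence $x^{(r+1)}>0$. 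These remarks justify that the functional values are finite along the sequence.

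The core is the monotone decrease. I would introduce the standard EMML surrogate: for fixed $x=x^{(r)}$ define, for $\tilde x$ in the feasible set,
\begin{equation*}
Q(\tilde x \mid x) \assign \sum_{i=1}^I b_i \sum_{j=1}^J \frac{A_{ij} x_j}{(Ax)_i}\Bigl(-\log \frac{A_{ij}\tilde x_j}{(Ax)_i}\Bigr) + \text{terms independent of }\tilde x,
\end{equation*}
which by Jensen's inequality (convexity of $-\log$) majorizes $\KL(b\mid A\tilde x)$ up to an additive constant and touches it at $\tilde x = x$. The key computation is that minimizing $\tilde x \mapsto Q(\tilde x\mid x)$ subject to the affine constraints $\sum_{j\in J_l}\tilde x_j = y_l$ yields exactly the update in Algorithm~\ref{alg:c_EMML}: separating the objective over the cells $J_l$, each subproblem is of the form $\min \sum_{j\in J_l} w_j(-\log \tilde x_j)$ with $w_j = x_j u_j$ and $\sum_{j\in J_l}\tilde x_j = y_l$ (using $\sum_i A_{ij} = N$ to absorb the cross terms into the constant), whose solution is $\tilde x_j = y_l w_j / \sum_{j'\in J_l} w_{j'}$, i.e.\ precisely $x_j^{(r+1)}$. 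Combining majorization, the touching property, and the minimization property gives
\begin{equation*}
\KL(b\mid Ax^{(r+1)}) \le Q(x^{(r+1)}\mid x^{(r)}) + c \le Q(x^{(r)}\mid x^{(r)}) + c = \KL(b\mid Ax^{(r)}),
\end{equation*}
which is the claimed inequality.

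For convergence of the iterates themselves, I would argue as follows. The sequence $(x^{(r)})_r$ lies in the compact set $\{x\in\R_{\ge0}^J : \sum_{j\in J_l} x_j = y_l\}$, so it has cluster points; let $x^\ast$ be one. The monotone, bounded-below sequence $\KL(b\mid Ax^{(r)})$ converges, so along a subsequence $r_k$ with $x^{(r_k)}\to x^\ast$ one also has $x^{(r_k+1)}\to x^\ast$ by continuity of the update map (here strict positivity of $Ax^\ast$ is needed so the map is well-defined and continuous at $x^\ast$; this I would get from $A\zb 1_J>0$ and $x^\ast$ having full support on each nonzero cell, or more carefully by a separate argument ruling out vanishing coordinates in the limit). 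Then $x^\ast$ is a fixed point of the update. A fixed point satisfies $x_j^\ast u_j^\ast = \lambda_l^\ast x_j^\ast$ for $j\in J_l$, i.e.\ for each $j$ in the support either $u_j^\ast = (A^\tT(b/Ax^\ast))_j = \lambda_l^\ast$; writing out the KKT conditions for \eqref{eq:basic} (stationarity of the Lagrangian $\KL(b\mid Ax) - \sum_l \mu_l(\sum_{j\in J_l} x_j - y_l)$) shows these are exactly the optimality conditions, with $\mu_l = N - \lambda_l^\ast$ up to the normalization. Since \eqref{eq:basic} is a convex problem, any KKT point is a global minimizer, so every cluster point is a minimizer; combined with boundedness this gives convergence of the full sequence to the (set of) minimizers, and by strict convexity of $\KL(b\mid\cdot)$ in $Ax$ the value $Ax^\ast$ is unique, pinning down the limit.

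The main obstacle I anticipate is the boundary behavior: ensuring that coordinates of $x^{(r)}$ that should be positive in the limit do not drift to zero (which would break continuity of the update map and could produce spurious fixed points), and conversely handling coordinates that legitimately vanish. This is the delicate part of all EMML convergence proofs and is presumably where the appendix argument (Appendix~\ref{subsec:cEMML}) does its real work, likely via a Lyapunov/entropy argument of the form $\KL(x^\ast \mid x^{(r)})$ being monotone for an optimal $x^\ast$, which simultaneously controls the boundary and forces convergence without appealing to compactness-plus-continuity heuristics.
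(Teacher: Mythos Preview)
Your monotonicity argument via the majorizing surrogate is essentially identical to the paper's (Proposition~\ref{prop:surrogate} and the majorization--minimization reading of the update in \eqref{eq:voila}), so that half is fine.

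The convergence half has the right overall shape but contains concrete gaps, some of which you flag yourself. First, the step ``$x^{(r_k+1)}\to x^\ast$ by continuity of the update map'' does not follow: continuity only gives $x^{(r_k+1)}\to T(x^\ast)$, and convergence of the objective values does not by itself force $T(x^\ast)=x^\ast$. The paper instead extracts from the surrogate gap the quantitative estimate $F(x^{(r)})-F(x^{(r+1)})\ge \lambda_{\min}\KL(x^{(r+1)},x^{(r)})\ge \lambda_{\min}\|x^{(r+1)}-x^{(r)}\|_2^2$ (Lemma~\ref{lem:1}), which directly yields that any cluster point is a fixed point (Lemma~\ref{cor:1}). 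Second, at a fixed point the relation $x_j^\ast u_j^\ast=\lambda_l^\ast x_j^\ast$ is vacuous on the zero coordinates, so the KKT \emph{inequality} $u_j^\ast\le\lambda_l^\ast$ for $x_j^\ast=0$ is not automatic; the paper proves it by contradiction: if $u_j^\ast>\lambda_l^\ast$ then eventually $u_j^{(r)}/\lambda_l^{(r)}\ge 1+\varepsilon$, and the multiplicative update would send $x_j^{(r)}\to\infty$. Third, ``$Ax^\ast$ unique'' does not pin down $x^\ast$ (the map $x\mapsto Ax$ need not be injective), so your final sentence does not give convergence of the full sequence. Here the paper does precisely the Lyapunov argument you anticipate, but with a twist: the monotone quantity is the \emph{weighted} divergence $\KL(\lambda^\ast\odot x^\ast,\,x^{(r)}\odot u^{(r)})$ (Lemmas~\ref{lem:2} and \ref{lem:KL_theta_u_decreases}), not the naive $\KL(x^\ast\mid x^{(r)})$; the extra weights $\lambda^\ast$ are exactly what make the inequality close under the per-block renormalization, and this is the genuinely new ingredient relative to the unconstrained EMML proof.
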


Since we have not found a proof in the literature, we add it in Appendix \ref{subsec:cEMML}.
Our proof modifies a geometric argument 
originally provided by Csisz{\'a}r and Tusn{\'a}dy in \cite{csiszar1984information} 
for the unconstrained EMML algorithm. 
We rely on a condensed version of this argument based on \cite{VSK1985}.

\begin{remark}[Sparsity of $b = \bm{\zeta}$]
We remark that the vector $\bm{\zeta} = \tfrac{1}{N} \oplus_{n=1}^N \bm{\nu}_n^N$,
as defined in \eqref{zeta}, has length $NM^2$.
However, by construction, $N(N-1)M$ of its entries are $0$.
The definition of the $\KL$-divergence allows us to 
delete the corresponding rows of $\bm{\zeta}$ and $\bm{A}^\varepsilon$.
Thus, in practice we are required to solve a minimization problem
of the form $\min_{x} \KL(b \vert Ax)$ for 
$A \in \R^{NM \times KL}$ and $b \in \R^{NM}$.
\end{remark}

\section{Numerical examples} \label{sec:numerics}
In this section, we present some numerical examples that demonstrate various aspects of our method.
Subsection \ref{sec:torus} illustrates the role of the parameters $M$, $N$, and $\varepsilon$ on a simple stochastic process on the torus.
In Subsection \ref{sec:coloc}, we give a toy example for the analysis of particle colocalization. Finally, Subsection \ref{sec:gyre} analyses particles that move in a double gyre, demonstrating that the method also works on more complex dynamical systems.
\subsection{Torus} \label{sec:torus}
\begin{figure}[bt]
    \centering
    \includegraphics[width = 8cm]{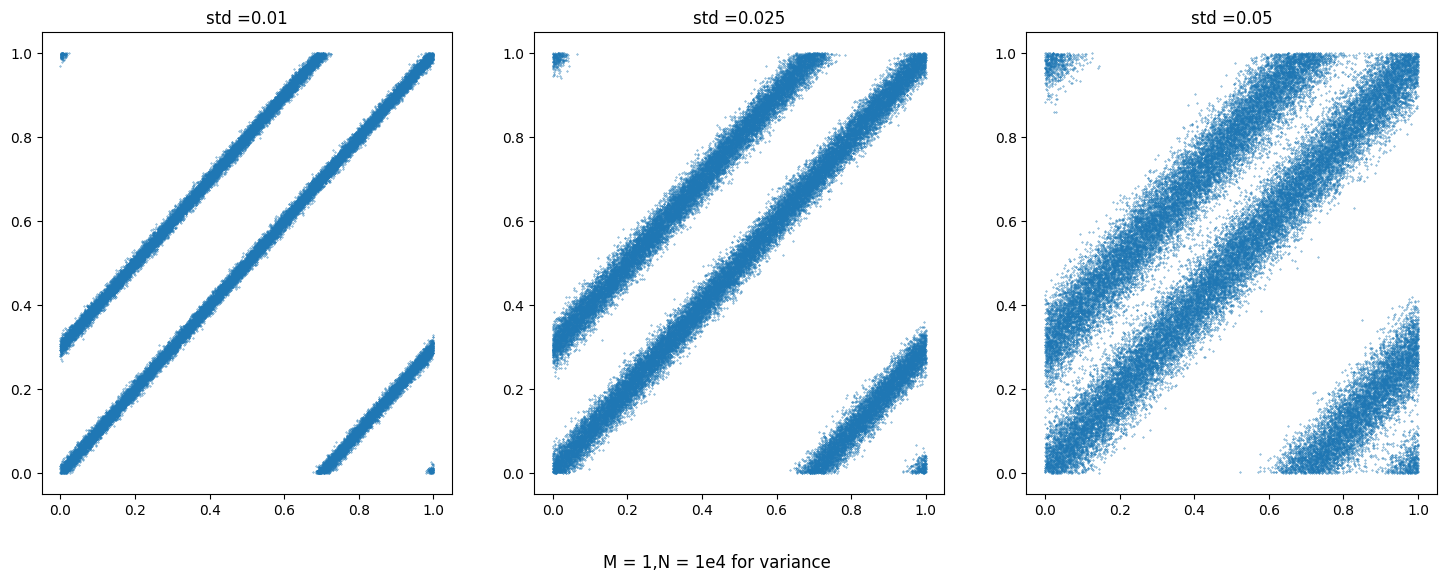}
    \caption{$N=10^4$ samples from $(X,Y)$ according to \eqref{eq:PiTorus} for three different values of $\sigma$.}
    \label{fig:true_sample}
\end{figure}
In the first numerical example, let $\X=\Y=\R/\Z$ be the 1-torus, and let $P_{(X,Y)}=\pi$ be given by
\begin{equation}
    P_X = \mu  \assign  U(\X), \qquad
    P_{Y|X=x}  \assign \frac{1}{2}\tilde{\mathcal{N}}(x, \sigma) + \frac{1}{2}\tilde{\mathcal{N}}(x+0.3, \sigma),
\label{eq:PiTorus}
\end{equation}
where $U(\X)$ is the uniform distribution on $\X$. 
Furthermore, $\tilde{\mathcal{N}}(m,\sigma)$ denotes the wrapped normal distribution with mean $m$ and standard deviation $\sigma$, i.e., $\tilde{\mathcal{N}}(m,\sigma)=f_\# \mathcal{N}(m,\sigma)$, where $\mathcal{N}(m,\sigma) \in \mathcal{P}(\R)$ denotes the normal distribution on $\R$ and $f : x \mapsto x \mod 1$.
That is, conditioned on $X=x$, with probability $1/2$, a sample from $Y$ will be drawn from a (wrapped) Gaussian around $x$, and with probability $1/2$, it will first jump by $0.3$ along the torus.
By symmetry, we deduce that $P_2 \pi =\nu=U(\X)$.
For this $\pi$ we have indeed $\pi=p \cdot (\mu \otimes \nu)$ for a continuous density $p \in \mathcal C(\X \times \X)$.
Samples from $\pi$ for different values of $\sigma$ are shown in Figure \ref{fig:true_sample}.

For given parameters $M$, $N$, $\sigma$, and $\varepsilon$, we then first sample points from $\pi$ according to Assumption \ref{asp:main}.
We set $\mu^N$ and $\nu^N$ as in \eqref{mue}. Furthermore, $\tilde{\mu}, \tilde{\nu}$ are obtained by uniformly sub-sampling 300 points with replacement from each point cloud.
The hypothesis set $Q^N$ is then constructed as in Proposition \ref{prop:KernelsQNConvergence} for $\Xi^N=\Xi^N_{2}$ defined in  Proposition \ref{prop:RN}.
The obtained hypothesis density that minimizes $J_M^N$ over $Q^N$ is denoted by $\hat{q}$. As a measure for the quality of the inference we will use $\|\hat{q}-p\|_{L^2(\mu \otimes \nu)}$ which can be approximated well by numerical integration.
As reference value we will show $\|p-1\|_{L^2(\mu \otimes \nu)}$, i.e.~the discrepancy between the true $p$ and the uniform probability density.

\paragraph{Varying \texorpdfstring{$N$}{N}.}
\begin{figure}[bt]
    \centering
    \includegraphics[width = 14cm]{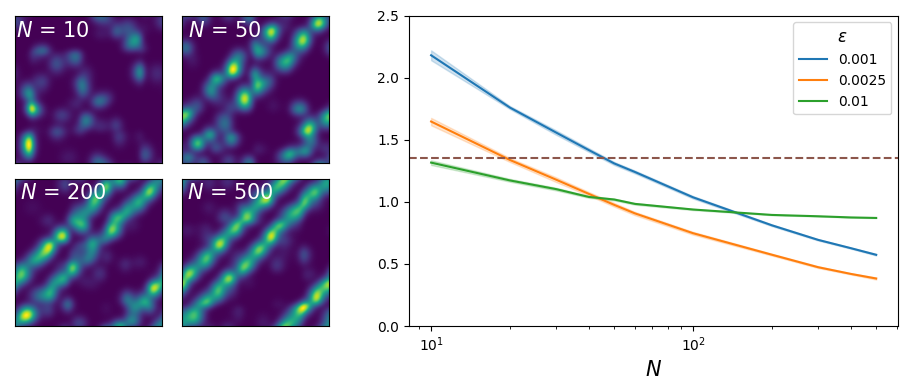}
    \caption{Left: optimal $\hat{q}$ for $\varepsilon=0.0025$ and varying $N$. Right: $\|\hat{q}-p\|_{L^2(\mu \otimes \nu)}$ for $\varepsilon \in \{0.001,0.0025,0.01\}$ and varying $N$. Plot of the mean and standard deviation obtained from 100 simulations.
    We expect that the curve for $\varepsilon=0.001$ will eventually go below the curve for $\varepsilon=0.0025$ as $N$ increases further.
    The dashed line shows $\|p-1\|_{L^2(\mu \otimes \nu)}$.
    In all cases $M=20$, $\sigma=0.05$.}
    \label{fig:sweepN}
\end{figure}
We fix $M=10$, $\sigma=0.05$, and $\varepsilon \in \{0.001,0.0025,0.01\}$ and compute the optimal $\hat{q}$ for different values of $N$. The results are visualized in Figure \ref{fig:sweepN}.
As expected, as $N$ increases and more information becomes available, the reconstruction improves.
Recall that $\sqrt{\varepsilon}$ is roughly the width of the kernels $k^\varepsilon_{\mu^N\tilde{\mu}}$ and $k^\varepsilon_{\nu^N\tilde{\nu}}$.
Consequently there is a bias-variance tradeoff in the choice of $\varepsilon$. For larger $\varepsilon$, as $N$ increases the error $\|\hat{q}-p\|_{L^2(\mu \otimes \nu)}$ initially decreases faster (less variance), but saturates earlier and at a higher level (more bias), than for smaller $\varepsilon$.

\paragraph{Varying \texorpdfstring{$\varepsilon$}{E}.}
\begin{figure}[bt]
    \centering
    \includegraphics[width = 14cm]{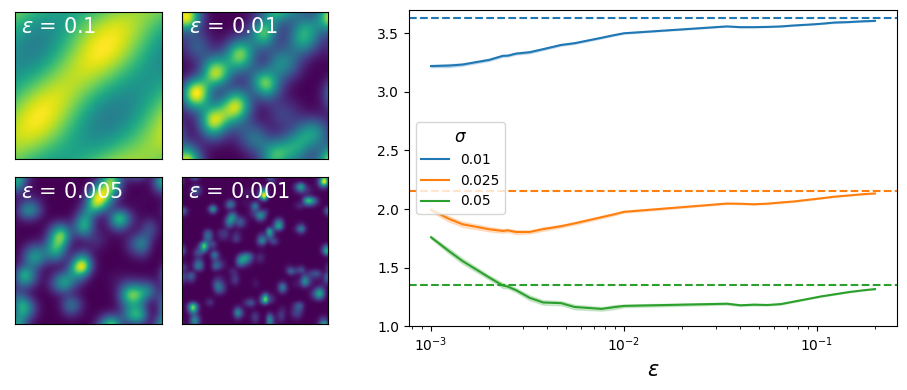}
    \caption{Left: optimal $\hat{q}$ for $\sigma=0.05$ and varying $\varepsilon$. Right: $\|\hat{q}-p\|_{L^2(\mu \otimes \nu)}$ for $\sigma \in \{0.01,0.025,0.05\}$ and varying $\varepsilon$. Shown are mean and standard deviation obtained from 100 simulations. The dashed line indicates $\|p-1\|_{L^2(\mu \otimes \nu)}$ for reference.
    In all cases $M=N=20$.}
    \label{fig:sweepEps}
\end{figure}
Next, we vary $\varepsilon$ for fixed $M=N=20$ and $\sigma \in \{0.01,0.025,0.05\}$ to emphasize the bias-variance tradeoff.
The results are shown in Figure \ref{fig:sweepEps}. For small $\varepsilon$, the reconstructed $\hat{q}$ has many concentrated spikes, also in places where $p$ is close to zero, indicating that the small width prevents efficient extraction of information from the sample data. As $\varepsilon$ increases, the double band structure of $p$ becomes visible, albeit still subject to some local noise. Finally, for even larger $\varepsilon$, the width of $k^\varepsilon_{\mu^N\tilde{\mu}}$ and $k^\varepsilon_{\nu^N\tilde{\nu}}$ becomes too high to resolve the two individual bands in $p$ and merely a single smooth band is reconstructed.
This trend is reflected in the plot of $\|\hat{q}-p\|_{L^2(\mu \otimes \nu)}$ over $\varepsilon$: first, there is a steep decrease and eventually again an increase. As $\varepsilon \to \infty$ the error approaches the value $\|1-p\|_{L^2(\mu \otimes \nu)}$, since $k^\varepsilon_{\mu^N\tilde{\mu}}$ and $k^\varepsilon_{\nu^N\tilde{\nu}}$ converge to $1$ uniformly, and hence so does $\hat{q}$.
Second, for small $\sigma$, the mass of $p$ is concentrated close to the two one-dimensional lines $Y=X$ and $Y=X+0.3$, whereas for larger $\sigma$ it is spread out further over the full two-dimensional torus. Hence, for smaller $\sigma$, fewer datapoints are necessary for reconstruction and it can be approximated less well with large $\varepsilon$. Therefore, as $\sigma$ increases, so does the $\varepsilon$ for which the reconstruction is optimal, and the asymptotic error $\|1-p\|_{L^2(\mu \otimes \nu)}$ decreases.

\paragraph{Varying \texorpdfstring{$M$}{M}.}
\begin{figure}[bt]
    \centering
    \includegraphics[width = 14cm]{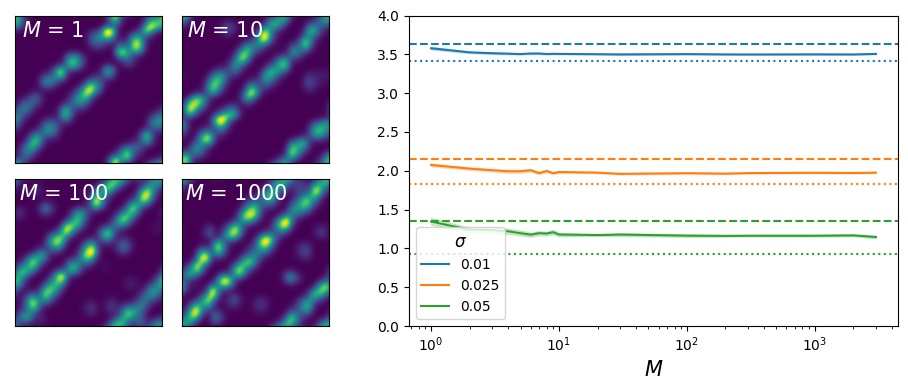}
    \caption{Left: optimal $\hat{q}$ for $N=70$, $\sigma=0.025$, $\varepsilon=0.0025$ and varying $M$. Right: $\|\hat{q}-p\|_{L^2(\mu \otimes \nu)}$ for $N=20$, $\sigma \in \{0.01,0.025,0.05\}$, $\varepsilon =0.01$ and varying $M$. Plot of the mean and standard deviation obtained from 100 simulations. The dashed line indicates $\|p-1\|_{L^2(\mu \otimes \nu)}$. The dotted line shows the error $\|p-\tilde{q}\|_{L^2(\mu \otimes \nu)}$, where $\tilde{q}$ is the minimizer of $J_{M=1}$ over $Q$, i.e.~the best possible approximation of $p$ by $Q$ in the sense of $\KL$ and without the added difficulty of unobserved pairings.}
    \label{fig:sweepM}
\end{figure}
Finally, we consider varying $M$. The results are illustrated in Figure \ref{fig:sweepM}.
We observe that the reconstruction quality first improves with increasing $M$ and eventually decreases slightly.
This means that for moderate $M > 1$, additional information can be extracted from the additional observed points relative to the case $M=1$ despite the fact that their pairing is not known.
For intuition, consider the case of very small $\sigma$. Then for small $M$ it will be relatively easy to guess the correct association of point pairs and thus increasing $M$ will be somewhat similar to increasing $N$.

As expected, the lack of this unobserved association becomes more severe as $M$ increases further, and thus the reconstruction quality does not improve indefinitely, but indeed decreases again slightly.
Intuitively, as $M$ increases, the observed point pairs $(x_j^i,y_j^i)_{j=1}^M$ of one sample of $\zb Z$ in \eqref{Z} become increasingly independent and approximately i.i.d.~samples from the independent distribution $\mu \otimes \nu$. One might therefore expect that inference breaks down for very large $M$. However, this does not seem to be the case and our method even works in this regime.

\paragraph{Parametric inference for varying \texorpdfstring{$M$}{M}.}
\begin{figure}[bt]
    \centering
    \includegraphics[width = 13cm]{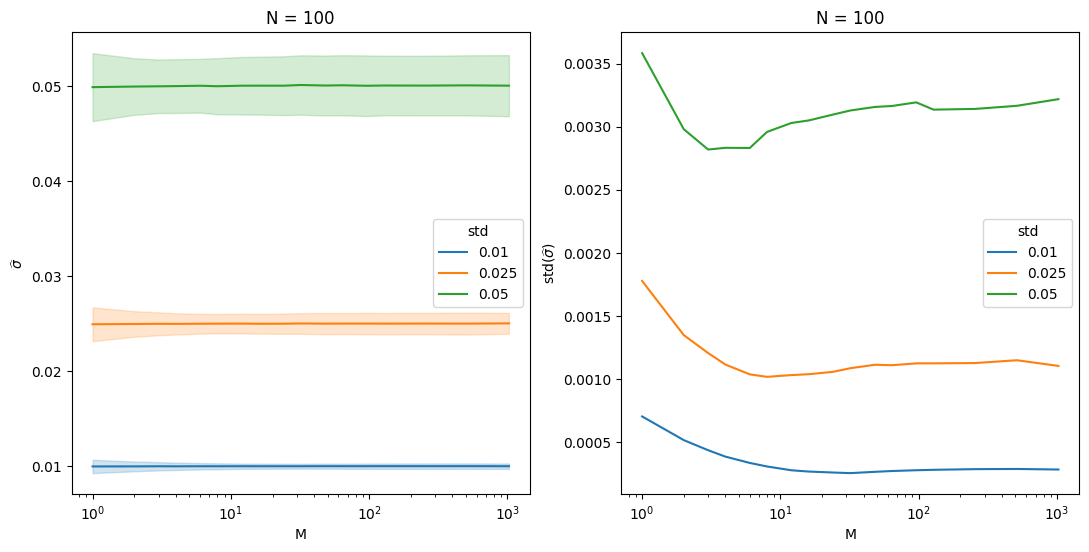}
    \caption{Left: Mean and standard deviation of the optimal $\hat{\theta}$ for three different $\sigma$ and $N=1000$ as estimated from 2000 simulations. Right: Standard deviation shown separately for emphasis.} 
    \label{fig:FI}
\end{figure}
We further analyze the effect of varying $M$ in a simpler parametric setting.
For $\theta>0$, denote by $\pi_\theta$ the measure specified by \eqref{eq:PiTorus} with $\sigma=\theta$. Then set $q_\theta \assign \tfrac{\diff \pi_\theta}{\diff \mu \otimes \nu}$, and $Q \assign \{ q_\theta | \theta >0 \}$. In this case, the optimization over $Q$ can be formulated as optimization over the one-dimensional parameter $\theta$. We denote the optimal value by $\hat{\theta}$. This could include $\hat{\theta}=\infty$ which would correspond to the uniform density. 
By Proposition \ref{prop:PopMin}, in the limit $N \to \infty$ the unique minimizer is then $\hat{\theta}=\sigma$.
Figure \ref{fig:FI} shows mean and standard deviation of the estimator $\hat{\theta}$ for $N=100$, $\sigma \in \{0.01,0.025,0.05\}$ and varying $M$ as estimated over 2000 simulations.
The average of the estimated $\hat{\theta}$ is correct for all $\sigma$, indicating that $\hat{\theta}$ is unbiased. For the standard deviation, we observe a behaviour consistent with the previous paragraph. As $M$ increases, the standard deviation first decreases and then increases again slightly, but remains bounded and seems to approach a stable limit as $M \to \infty$.
Drawing intuition from the Bernstein--von Mises theorem, this seems to suggest that the curvature of the functional $J_M^N$ at $p$ does not degenerate to zero as $M \to \infty$ and a non-zero amount of information can be extracted from each sample of $\zb Z$ even for large $M$.
We will further study this phenomenon in future work.

\subsection{Particle colocalization}
\label{sec:coloc}
\begin{figure}[bt]
    \centering
    \includegraphics[width = 13cm]{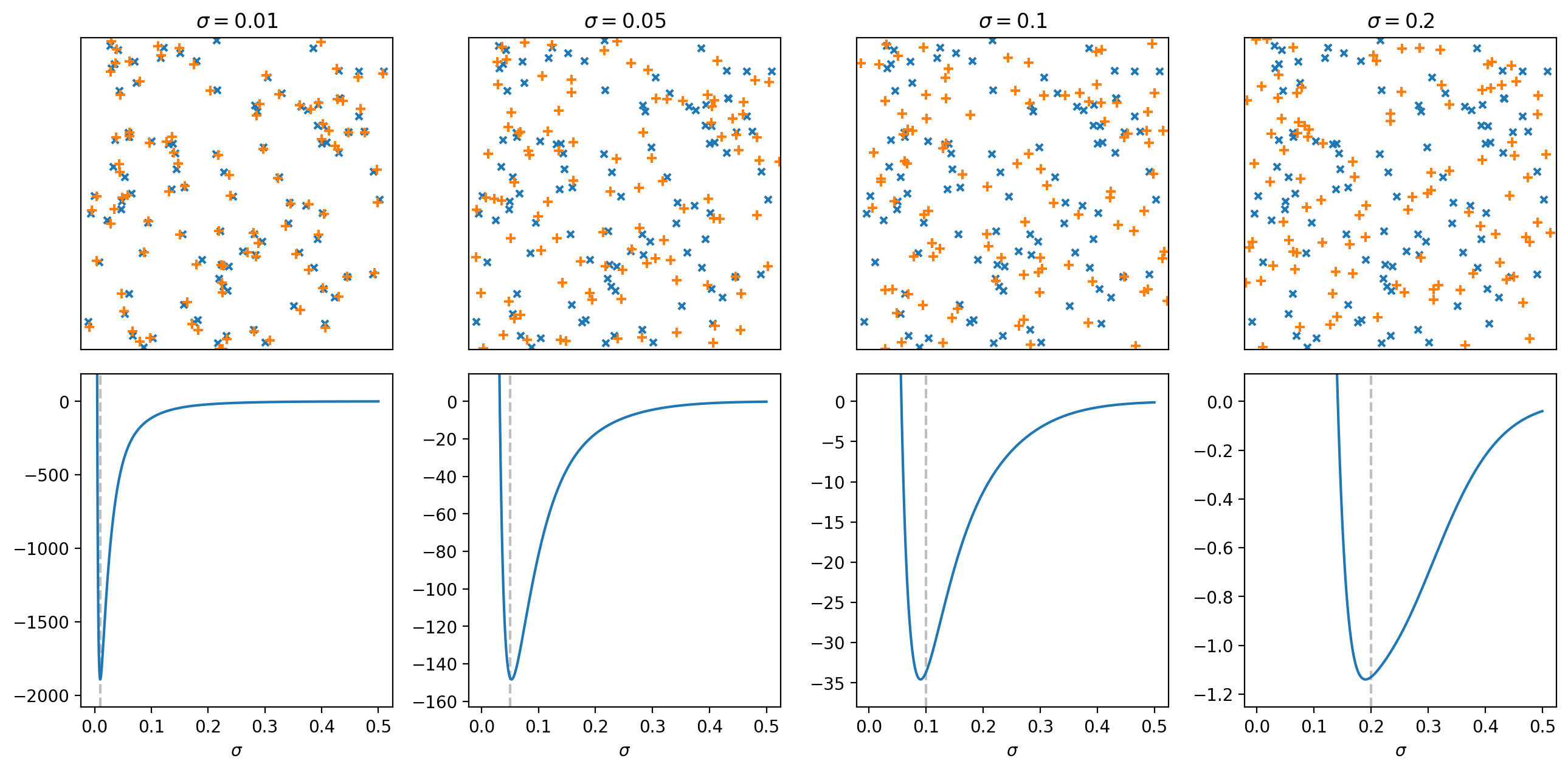}
    \caption{Top row: Single samples $(x_j,y_j)_{j=1}^M$ of $\zb Z$, \eqref{Z}, for $M=100$ and $\sigma \in \{0.01,0.05,0.1,0.2\}$, shown as point clouds on $\X$ (represented by $[0,1]^2$). Bottom row: Corresponding objective $J_M^N(q_\theta)$ for $\theta > 0$ for a single simulation of $N=10$ samples.}
    \label{fig:FI_1}
\end{figure}
Next, we use a variant of the above torus example as a toy model to illustrate the potential of our method for particle colocalization analysis.
Similar to \eqref{eq:PiTorus}, let $\X=\Y=(\R/\Z)^2$ be the 2-torus and let $P_{(X,Y)}=\pi$ be given by
\begin{align*}
    P_X = \mu & \assign  U(\X), &
    P_{Y|X=x} & \assign \tilde{\mathcal{N}}(x, \sigma),
\end{align*}
where  $\tilde{\mathcal{N}}(x, \sigma)$ denotes a wrapped isotropic normal distribution centered at $x$ with standard deviation $\sigma$ in all directions. Similar as in the last paragraph, we consider inference of $\sigma$ via a parametric family $Q\assign \{q_\theta | \theta>0\}$.
Figure \ref{fig:FI_1} shows individual samples from $\zb Z$ for $M=100$ and various $\sigma$, as well as corresponding curves $\theta \mapsto J_M^N(q_\theta)$ for $N=10$.
As can be seen, for small $\sigma$ the association of the point pairs can be guessed correctly with high probability due to their high spatial proximity, and consequently $\sigma$ can easily be inferred correctly. For larger $\sigma$, it seems visually impossible to reliably recover the point pairings. Nevertheless, the curves $J_M^N(q_\theta)$ have pronounced minima at the correct value of $\theta$ in all cases.

\subsection{Double gyre}
\label{sec:gyre}
We now consider the following well-studied (see for instance \cite{banisch2017understanding,
froyland2014almost,
OTCoherentSet2021})
deterministic non-autonomous system: 
\begin{align}\label{eq:double_gyre}
    \frac{\diff x_1}{\diff t} 
    &= - \pi A \sin\big(\pi f(t,x_1)\big) \cos(\pi x_2) \nonumber\\
    \frac{\diff x_2}{\diff t} 
    &= \pi A \sin\big(\pi f(t,x_1) \big) \sin(\pi x_2) 
    \frac{\partial f}{\partial x_1} (t,x_1),
\end{align}
where 
$f(t,x) \assign \alpha \sin(\omega t) x^2 + \left(1 - 2\alpha \sin(\omega t) \right) x$, 
$A = \alpha \assign 0.25$ and $\omega \assign 2 \pi$.
The system describes two adjacent counter-rotating gyres 
in $\X = \Y \coloneqq [0,2] \times [0,1]$, 
see Figure \ref{fig:double_gyre}. 
As can be seen in the figure, 
the vertical boundary between the gyres
oscillates periodically and takes exactly $\Delta t = 1$
to move from the leftmost positon at approximately $1 - \alpha$
to the rightmost position at approximately $1 + \alpha$ ($x$-direction).
\begin{figure}
\centering
\begin{subfigure}{0.4\textwidth}
    \centering
    \includegraphics[width=\textwidth]{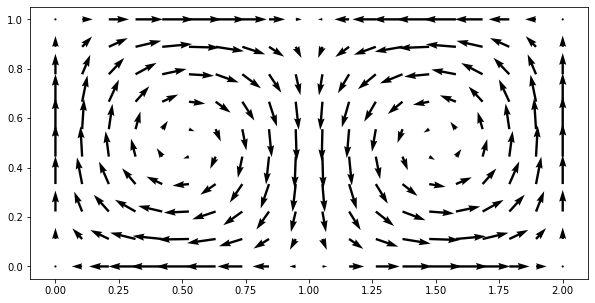}
    \caption{$t = 0.5 \, k$, $k \in \N_0$}
\end{subfigure}
\\
\begin{subfigure}{0.4\textwidth}
    \centering
    \includegraphics[width=\textwidth]{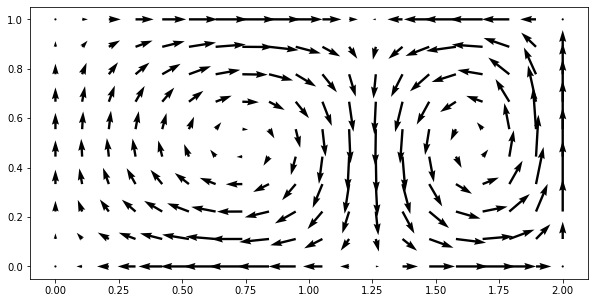}
    \caption{$t = 0.25$}
\end{subfigure}
\begin{subfigure}{0.4\textwidth}
    \centering
    \includegraphics[width=\textwidth]{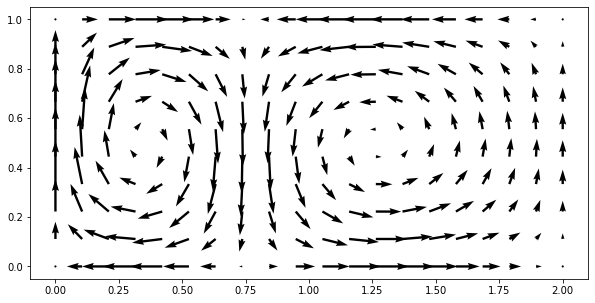}
    \caption{$t = 0.75$}
\end{subfigure}
\caption{The velocity field 
$(\frac{\diff x}{\diff t},\frac{\diff y}{\diff t})$
of \eqref{eq:double_gyre} at various times $t$.}
\label{fig:double_gyre}
\end{figure}
Figuratively speaking, 
particles on the left-hand side of the boundary
rotate around the left gyre and likewise for the right gyre.
Within a gyre, the particles distance 
to the gyre center remains unchanged for long periods,
and particles rarely transition towards the gyre boundary or
towards the gyre center.
In addition, particles close to vertical boundary between the gyres
rarely transition to a rotation around the other gyre.

For a fixed time-step $\Delta t$, 
the system can be described by a 
map $T:\X \to \X$, 
where $y = T(x) \in \X$ 
is the position of a particle $x =(x_1,x_2) \in \X$ 
after the time-step $\Delta t$,
when it is evolved according to \eqref{eq:double_gyre}.
More precisely, 
\[
T(x) = (\hat{x}_1(\Delta t),\hat{x}_2(\Delta t))
\]
where
$(\hat{x}_1(t),\hat{x}_2(t))$
is a solution to \eqref{eq:double_gyre} with initial condition
$(\hat{x}_1(0),\hat{x}_2(0)) = (x_1,x_2)$.
Let $\ell$ be the uniform distribution on $\X$
and $X$ be a random variable on $\X$ with law $\ell$.
We are interested in the joint measure 
$\pi = (\text{id},T)_\# \ell$ of
$(X,Y)$ with $Y = T(X)$.
The dynamical system preserves the 
Lebesgue measure, i.e.\ it holds
$\pi \in \Pi(\ell,\ell)$.
In this case, $\pi$ has no density with respect to its marginals (which are both $\ell$) but minimizing $J_M^N$ and $J_M$ are still well-posed problems.
The disintegration of $\pi$ with respect to its first marginal at $x$ is given by $\delta_{T(x)}$
and the associated transfer operator is
\[
\mathcal{T}:L^1(\ell) \to L^1(\ell), \quad
(\mathcal{T}u) \coloneqq u \circ T^{-1}
\]
where we use that $T$ is invertible since it is defined as flow of a sufficiently regular vector field.

\paragraph{Sampling and computation of \texorpdfstring{$q$}{q}.}
We are seeking to estimate the transfer operator 
$\mathcal{T}$ for $\Delta t = 3$.
To do this, we generate $N=300$ initial measures 
$\mu_i^N  \assign \frac{1}{M} \sum_{j=1}^{M} \delta_{x^i_j}$, 
$i=1 \ldots, N$ 
supported on $M=50$ uniformly sampled points
$(x^i_j)_{j=1}^M$ on $\X$ each.
The target measures are constructed 
by setting $y^i_j \coloneqq T(x^i_j)$ and then
$\nu_i^N  \assign \frac{1}{M} \sum_{j=1}^{M} \delta_{y^i_j}$.
We set $\mathcal X^N \assign \{x_j^i: i \in [N], j \in [M]\}$ and
$\mathcal Y^N \assign \{y_j^i: i \in [N], j \in [M]\}$.
Furthermore, we generate $\tilde{\mathcal X}$
by sampling $K=1000$ points $\tilde x_k \in \mathcal X^N$, and similarly $\tilde{\mathcal Y}$ by sampling 
$L=1000$ points $\tilde y_l \in \mathcal Y^N$
in a furthest-point manner.
On $\tilde{\mathcal{X}}$, we consider the probability measure
$\tilde{\mu} \assign \sum_{k=1}^K \tilde \mu_k \delta_{\tilde x_k}$ with
\[
\tilde{\mu}_k \coloneqq 
\frac{
\lvert \{x \in \mathcal{X}^N : 
\|x - \tilde{x}_k \| < \|x - \tilde{x}_{k'}\|, k' \neq k\}\rvert
}
{NM}.
\]
In other words, the weights of the respective points of $\tilde{\mu}$ 
are proportional to the number of closest neighbours in $\mathcal{X}^N$.
We construct $\tilde \nu \in \mathcal{P}(\tilde{\mathcal{Y}})$ analogously.
Now we compute the kernels 
$k_{\mu^N\tilde{\mu}}^\varepsilon$ 
via entropic OT between $\mu^N$ and $\tilde \mu$ 
for $\varepsilon = \sqrt{5} \cdot 0.01 \approx 0.022$, 
and similarly  $k_{\nu^N\tilde{\nu}}^\varepsilon$.
For
$
\Xi_2 \assign
\{\xi \in \mathcal P(\tilde{\mathcal{X}} \times \tilde{\mathcal{Y}}) 
: P_1 \xi = \tilde{\mu}\}$,
we solve
\[
\argmin_{\xi \in \Xi_2} 
J^N_M(k_{\mu^N\tilde{\mu}}^\varepsilon . \xi . k_{\nu^N\tilde{\nu}}^\varepsilon)
\]
by Algorithm \ref{alg:c_EMML}.
The obtained results $\hat{\xi}$ 
as well as 
$\hat{q}
\coloneqq 
k_{\mu^N\tilde{\mu}}^\varepsilon . \hat{\xi} . k_{\nu^N\tilde{\nu}}^\varepsilon$ are illustrated in Figures~\ref{fig:double_gyre_result_1}~and~\ref{fig:double_gyre_result_2}, respectively.
Notably, the inferred transfer operator reads
\[
\hat{\mathcal{T}} u \coloneqq \int_{\X} \hat q(x,\cdot) u(x) \, \diff \ell(x).
\]
We remark that by construction, 
the measures $\mu^N,\tilde{\mu}$ are discrete approximations 
of the uniform distribution $\ell$.
In addition, since $T$ preserves $\ell$ (i.e.\ $T_\# \ell = \ell$), 
the same holds true for $\nu^N,\tilde{\nu}^N$.
Thus, the assumptions of 
Proposition~\ref{prop:KernelsQNConvergence}, 
Corollary~\ref{cor:Gamma} 
and Proposition~\ref{prop:RN} are fulfilled
and $\hat{q}$ is a discrete approximation of $\argmin J_M + \iota_{Q}$,
where
\[
Q \coloneqq \{k^\varepsilon_{\ell,\ell} . \xi . k^\varepsilon_{\ell,\ell} : {P_1}_\# \xi = \ell\}.
\]
Thus, the inferred density $\hat{q}$ 
appears to  be a blurred approximation of the ground truth $\pi$. 
This is also observed in Figure\ref{fig:double_gyre_result_2}.

\begin{figure}[t]
    \centering
    \includegraphics[height = 130pt]
    {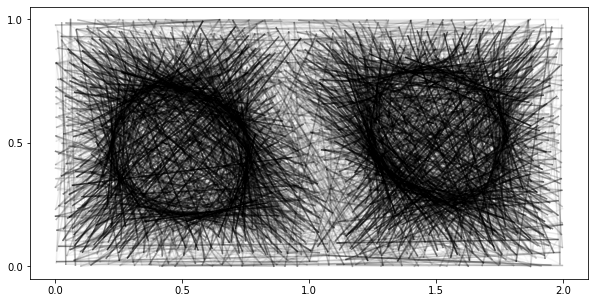}
    \caption{
    The opacity of the lines is proportional 
    to the transport $\hat \xi_{kl}$ 
    between the endpoints $\tilde x_k$ and $\tilde y_l$.   
    The two adjacent gyres can be clearly identified.
    }
    \label{fig:double_gyre_result_1}
\end{figure}

\begin{figure}
    \centering
    \includegraphics[width=\linewidth]{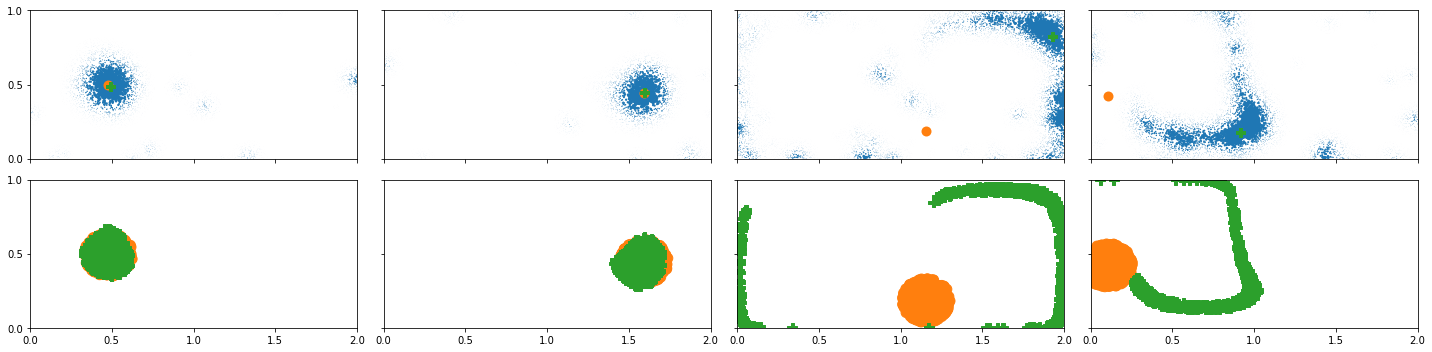}
    \caption{Top row:
    Four different points $x^i_j \in \mathcal{X}^N$ (orange),
    their propagation according to $\hat{\mathcal{T}}$,
    i.e. $q(x^i_j,\cdot)$ (blue)
    and the ground truth propagation $y^i_j = T(x^i_j)$ (green).
    It is well reflected that mass at points 
    close to the gyre centers remains there 
    with high probability.
    Mass at the boundary of each gyre mostly persists there, 
    but can also transition to the other gyre.
    Bottom row:
    The subset of points of $\mathcal{X}^N$ 
    which are at most $\sqrt{\varepsilon}$
    away from the  corresponding points $x^i_j$ picked above (orange)
    and their ground truth propagations (green).
    The similarity of the top blue point cloud
    and the corresponding bottom green point cloud shows that, up to the inevitable blurr of $\sqrt{\varepsilon}$,
    the inferred transfer operator $\hat{\mathcal{T}}$ provides
    a good approximation of the ground truth $\mathcal{T}$.}
    \label{fig:double_gyre_result_2}
\end{figure}

\paragraph{Clustering.} 
To determine coherent structures, 
we perform a spectral clustering procedure on $q$.
The idea is to find partitions 
$\mathcal X^N =\mathcal   X_1^N \dot \cup \mathcal  X_2^N$, $\mathcal Y^N = \mathcal Y_1^N \dot \cup \mathcal Y_2^N$,
such that
\[
\mu^N(\mathcal X^{N}_\kappa) = \nu^N(\mathcal Y^{N}_\kappa) \quad \text{and} 
\quad 
\hat{\mathcal{T}} 1_{\mathcal X^{N}_\kappa} = 1_{\mathcal Y^{N}_\kappa} , \quad \kappa = 1,2.
\]
We refer to \cite{FROYLAND20131,OTCoherentSet2021} for the following facts.
The partition problem is equivalent to solving the minimization problem
\begin{equation}\label{eq:hard}
\max_{
\mathcal X_1 \dot \cup \mathcal X_2 = \mathcal X,
\mathcal Y_1 \dot\cup\mathcal Y_2 = \mathcal Y}  
\biggl\{ 
\frac{\langle \hat{\mathcal{T}} \zb 1_{\mathcal X_1}, 1_{\mathcal Y_1} \rangle_\nu}{\mu(\mathcal X_1)} 
+ \frac{\langle \hat{\mathcal{T}} 1_{\mathcal X_2}, 1_{\mathcal Y_2} \rangle_\nu}{\mu (\mathcal X_2)} 
\biggr\}.
\end{equation}
Numerically this problem is challenging. 
Hence it is usually relaxed to
\begin{equation}\label{eq:fuzzy}
\max_{(\varphi,\psi) \in L^2_\mu(\mathcal X) \times L^2_\nu(\mathcal Y)} 
\biggl\{
\frac
{\langle \hat{\mathcal{T}} \varphi,\psi\rangle_\nu}
{\|\varphi\|_{\mu} \|\psi\|_{\nu}}
: \langle \varphi, 1_{\mathcal X} \rangle_\mu 
= \langle \psi, 1_{\mathcal Y}\rangle_\nu 
= 0
\biggr\}.
\end{equation}
A maximizing pair $(\hat{\varphi},\hat{\psi})$ in \eqref{eq:fuzzy} 
is given by the right and left singular functions of $\hat{\mathcal{T}}$ 
associated to the second largest singular value of $\hat{\mathcal{T}}$.
An approximate solution of \eqref{eq:hard} is then provided by
thresholding $\hat{\varphi},\hat{\psi}$ at $0$.

The clustering procedure is used in Figure \ref{cluster} to segment the different gyres and to distinguish between gyre centers and their boundaries. 
The left and right singular vectors corresponding to the second largest singular value,
shown in Figure \ref{subfig:double_gyre_1}, 
indeed reveals the two coherent sets given by the two gyres, respectively.
Furthermore, 
the left and right singular vectors corresponding to the third largest singular value, 
shown in Figure \ref{subfig:double_gyre_2},
allow us to further distinguish between the joint boundary and the joint centers.

\begin{figure}
    \begin{subfigure}{1\textwidth}
    \centering
    \includegraphics[height=3cm]{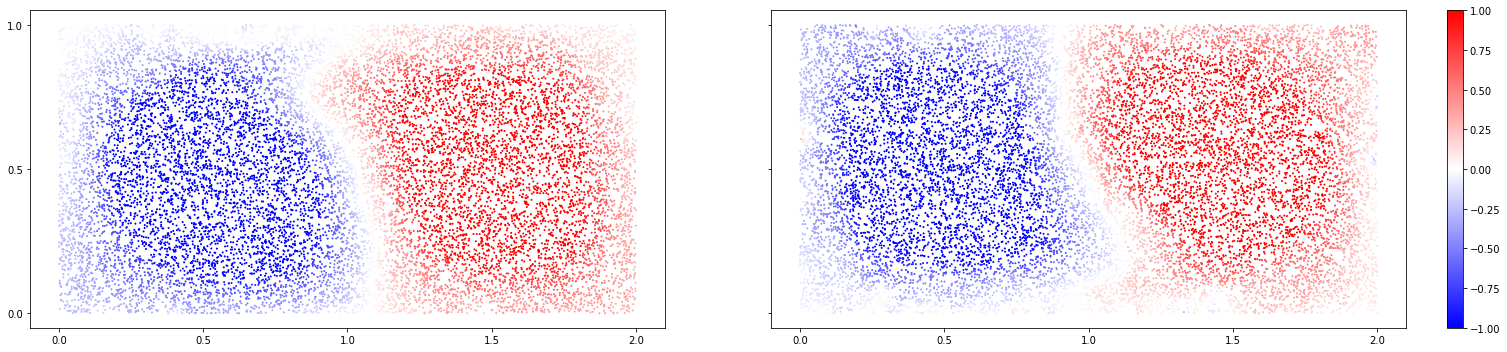}
    \subcaption{Left and right singular vectors for the second largest singular value.
    The partition distinguishes between the two gyres.}
    \label{subfig:double_gyre_1}
    \end{subfigure}
    \\
    \begin{subfigure}{1\textwidth}
    \centering
    \includegraphics[height=3cm]{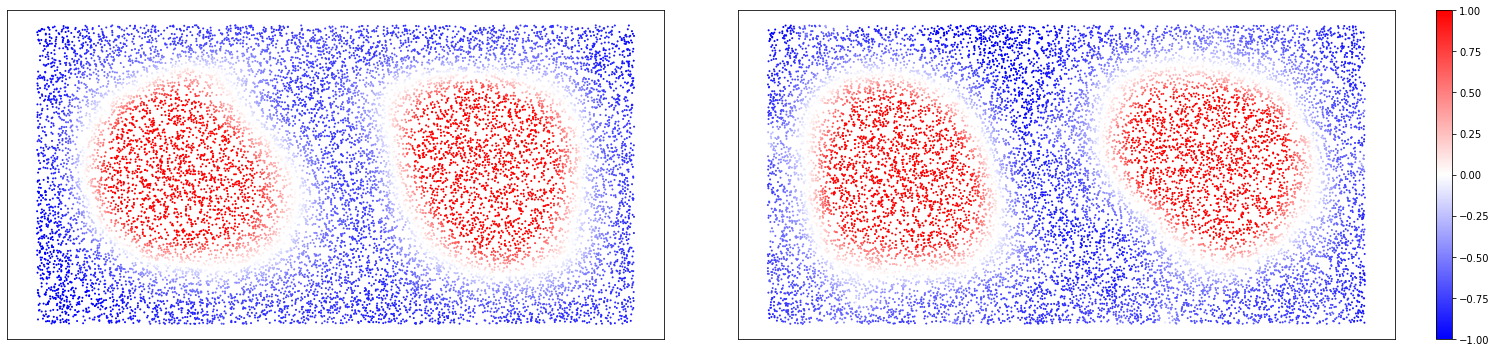}
    \subcaption{Left and right singular vectors for the third largest singular value.
    The partition distinguishes between gyre centers and gyre boundaries.}
    \label{subfig:double_gyre_2}
    \end{subfigure}
    \caption{Spectral clustering on the obtained solution $\hat{q}$.} \label{cluster}
\end{figure}

\section{Conclusion and outlook} \label{sec:Conclusion}
We have proposed an inference model for learning from various observations of batches of unpaired samples.
In particular, we proposed as hypothesis density space that employs kernels from entropic optimal transport.
After discretization, we solved the arising minimization problem by an extenced EMML algorithm
and illustrated the potential of our approach with numerical experiments.

An open question for future work is a quantitative analysis how much information can be extracted 
from each batch sample $(x^{i}_j,y^i_,j)_{j=1}^M$ for a single $i$. 
As we have seen in the numerical examples, for small $M$, 
additional unpaired samples seem to increase the information per batch 
and to improve the inference performance for fixed $N$ and $\varepsilon$. 
For larger $M$, the probabilistic inference of the particle pairings becomes too difficult 
and the information content per batch drops. 
In the formal limit $M \to \infty$, 
the empirical distributions of the observations $(x^{i}_j)_{j=1}^M$ and $(y^{i}_j)_{j=1}^M$ will converge to $\mu$ and $\nu$. 
In this case, no inference appears to be possible. 
It seems therefore surprising that our results 
suggest that inference can be viable even for very large $M$, given enough batches $N$. 
We expect that this phenomenon can be studied analytically in the spirit of the Bernstein--von Mises theorem.

Another question is related to the parameter $\varepsilon$ in the entropic transport kernels.
Intuitively, it controls complexity and flexibility of hypothesis class $Q$ 
and its choice will depend on a bias-variance trade-off related to the amount of available data. 
In this paper, we kept $\varepsilon>0$ fixed and only qualitatively considered the limit as $N \to \infty$. 
In future work, we will derive quantitative rates and examine the joint limit $(\varepsilon,N) \to (0,\infty)$.

\section*{Acknowledgement}
F.B. and G.S. acknowledge the funding by the German Research Foundation (DFG) within the RTG 2433 DAEDALUS, and G.S. by the
DFG project STE 571/19-1 
within the Austrian SFB ,,Tomography across scales''.
H.B., C.S, and B.S. acknowledge the funding by the German Research Foundation (DFG) within the CRC 1456, Mathematics of Experiment, projects A03 and C06, and the Emmy Noether-Programme.

We thank J.~Hertrich and P.~Koltai for discussions on modeling and presentation.

\appendix
\section{Proof of Theorem \ref{thm:convergence}}\label{subsec:cEMML}
The proof of Theorem \ref{thm:convergence} requires various technical lemmata and propositions.
In the following, let
\[
R \coloneqq \{x \in \R_{\ge 0}^J:  \sum_{j \in J_l} x_j = y_l
\text{ for all } l \in [L]\},
\]
which is a closed, convex and nonempty set.
Let 
$$F(x) \coloneqq - \langle b, \log(Ax) \rangle, \quad x \in R.$$
By assumption, 
we have $\sum_{i=1}^N (A x)_i = N \sum_{i=1}^N x_i = N$ for all $x \in R$. 
Hence, the optimzation problem \eqref{eq:basic} is equivalent to
\begin{equation}\label{eq:basic_a}
\min_{x \in \R^J} F(x) 
\quad \text{subject to} \quad \sum_{j \in J_l} x_j = y_l
\text{ for all } l \in [L],
\end{equation}
where we agree that $\log 0 = +\infty$.
Without loss of generality, we assume that $b_i > 0$ 
for all $i \in [I]$.
As before, let $\iota_R$ denote the indicator function of $R$ which is zero on $R$ 
and $+\infty$ outside of $R$.
Since the objective  
$F + \iota_R$ 
in \eqref{eq:basic_a}
is proper, lower semi-continuous and coercive, the minimization problem has a solution.

We want to solve the minimization problem \eqref{eq:basic_a} by a majorization-minimization algorithm.
To this end, we define for a fixed $\eta \in \R^J_{> 0}$, the function 
$G(\cdot,\eta): \mathbb R^I \to \mathbb R \cup \{+\infty\}$ by
\begin{equation}\label{eq:surrogate}
G(x,\eta) \coloneqq 
\sum_{i=1}^I b_i \sum_{j=1}^J g_{i,j}(\eta) \log \left(\frac{ g_{i,j}(\eta) }{ A_{i,j} x_j } \right), 
\quad  g_{i,j}(\eta) \coloneqq \frac{A_{i,j} \eta_j}{(A \eta)_i},
\end{equation}
where we set $G(x,\eta) \coloneqq + \infty$ if $x_j \le 0$ for some $j \in [J]$.
The following proposition summarizes properties of $G$.

\begin{proposition}\label{prop:surrogate}
\begin{itemize} 
\item[i)]
For any $\eta \in \R^J_{>0}$, the function $G(\cdot,\eta)$ in \eqref{eq:surrogate}
is strictly convex. The function
$G(\cdot,\eta) + \iota_R$ has a unique minimizer which is an element of $R^J_{> 0}$ and is given by
\begin{equation} \label{eq:min_sur}
x_j = \frac{1}{\lambda_l} \eta_j u_j, \; j \in J_l
\quad 
 \lambda_l \coloneqq \frac{1}{y_l} \sum_{j \in J_l} \eta_j u_j, 
\quad 
u \coloneqq  A^\tT \left(\frac{b}{A \eta}\right)                            
\end{equation}
where the quotient is understood componentwise.
\item[ii)]
The function
$G: \R_{>0}^J \times \R_{>0}^J \to \mathbb R \cup \{+\infty\}$ is a surrogate function
of $F$ in \eqref{eq:basic_a}, i.e.\
$G(x,x) = F(x)$ and $G(x,\eta) \ge F(x)$ for any $\eta \in \R^J_{>0}$
\end{itemize}
\end{proposition}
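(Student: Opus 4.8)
The plan is to prove part ii) first, since it is the shorter computation and the surrogate property motivates the rest. For the identity $G(x,x)=F(x)$, I substitute $\eta=x$ into \eqref{eq:surrogate}: then $g_{i,j}(x)=A_{i,j}x_j/(Ax)_i$ and $\sum_{j=1}^J g_{i,j}(x)=1$, so the inner sum collapses to $\sum_j g_{i,j}(x)\log\big(1/(Ax)_i\big)=-\log(Ax)_i$, and summing against $b_i$ gives $F(x)=-\langle b,\log(Ax)\rangle$. For the bound $G(x,\eta)\ge F(x)$, I fix the index $i$, note that $\sum_j g_{i,j}(\eta)=1$ because $\sum_j A_{i,j}\eta_j=(A\eta)_i$, and apply Jensen's inequality to the convex function $-\log$: $-\log(Ax)_i=-\log\sum_j g_{i,j}(\eta)\,\tfrac{A_{i,j}x_j}{g_{i,j}(\eta)}\le\sum_j g_{i,j}(\eta)\log\tfrac{g_{i,j}(\eta)}{A_{i,j}x_j}$. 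Multiplying by $b_i\ge 0$ and summing over $i$ yields $F(x)\le G(x,\eta)$. The degenerate cases (terms with $A_{i,j}=0$, handled by the convention $0\log 0=0$, and points $x$ with some $x_j\le 0$, where $G(x,\eta)=+\infty$ by definition) need only a brief remark.

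For part i), the key observation is that $x$ enters $G(\cdot,\eta)$ only through the terms $-\log x_j$, so for $x\in\R^J_{>0}$ one has $G(x,\eta)=c(\eta)-\sum_{j=1}^J\big(\sum_{i=1}^I b_i g_{i,j}(\eta)\big)\log x_j$ with $c(\eta)$ independent of $x$, and $G(x,\eta)=+\infty$ otherwise. I then use the identity $\sum_i b_i g_{i,j}(\eta)=\eta_j\sum_i \tfrac{b_i A_{i,j}}{(A\eta)_i}=\eta_j u_j$ with $u=A^\tT(b/(A\eta))$ to rewrite this as $G(x,\eta)=c(\eta)-\sum_j \eta_j u_j\log x_j$. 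Here $\eta_j u_j>0$: indeed $\eta_j>0$ by hypothesis, and $u_j=\sum_i b_i A_{i,j}/(A\eta)_i>0$ because $b_i>0$ and column $j$ of $A$ has a positive entry (its column sum equals $N\ge 1$ by \eqref{A}). Strict convexity of $G(\cdot,\eta)$ then follows from strict convexity of $-\log$ on $(0,\infty)$ together with the positive coefficients.

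It remains to minimize $G(\cdot,\eta)+\iota_R$. The set $R$ is nonempty, convex and compact (each coordinate is confined to $[0,y_l]$), so a minimizer exists and, by strict convexity, is unique. Since $-\sum_j \eta_j u_j\log x_j$ is separable over the partition cells $J_l$ and the constraints couple only coordinates inside a single cell, the problem splits into $L$ independent problems $\min\{-\sum_{j\in J_l}c_j\log x_j:\ x_j\ge 0,\ \sum_{j\in J_l}x_j=y_l\}$ with $c_j=\eta_j u_j>0$. A Lagrange-multiplier computation (equivalently, the weighted arithmetic–geometric mean inequality) identifies the unique minimizer as $x_j=c_j y_l/\sum_{j'\in J_l}c_{j'}$, which, with $\lambda_l:=\tfrac1{y_l}\sum_{j\in J_l}\eta_j u_j$, is exactly the formula \eqref{eq:min_sur}; since $c_j>0$ and $y_l>0$, this $x$ lies in $\R^J_{>0}$. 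I do not anticipate a genuine obstacle here: the proof is Jensen's inequality plus constrained optimization of a separable strictly convex function, and the only delicate points are keeping the $+\infty$ and $0\log 0$ conventions consistent so that the inequality and the convexity statement hold in the extended-valued sense, and checking $\eta_j u_j>0$, which is precisely where the structural assumption \eqref{A} on $A$ (positive column sums) is used.
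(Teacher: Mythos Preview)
Your proof is correct and follows essentially the same route as the paper: Jensen's inequality (concavity of $\log$) for the surrogate property in part ii), strict convexity of $-\log$ for the first claim in part i), and a Lagrange-multiplier/KKT computation for the constrained minimizer. Your preliminary simplification $G(x,\eta)=c(\eta)-\sum_j\eta_j u_j\log x_j$ makes the strict-convexity and the block-separability of the minimization slightly more transparent than the paper's direct treatment of the double sum, but the underlying method is identical.
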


\begin{proof}
i) Let $\eta \in \R^J_{>0}$ be arbitrary but fixed.
Let $x,\tilde{x} \in R$ with $x \neq \tilde{x}$. 
Then there exists $j \in [J]$ so that $x_{j} \neq \tilde{x}_{j}$
and $i \in [I]$ with $A_{i,j} > 0$.
By strict concavity of the logarithm, we obtain for $t \in (0,1)$ that
\[
\log\left(\frac{A_{i,j}}{g_{i,j}(\eta)} \left( (1-t)x_j + t\tilde{x_j}\right) \right)
>
(1-t)\log\left(\frac{A_{i,j}}{g_{i,j}(\eta)} x_j \right) 
+ 
t\log\left(\frac{A_{i,j}}{g_{i,j}(\eta)}\tilde{x_j}\right),
\]
which implies by definition that $G(\cdot,\eta)$ is strictly convex.
Since $G + \iota_R$ is also lower semi-continuous and coercive, it has a unique minimizer $x$ which must fulfill componentwise
$x >0$ by the definition of $G$.                                                                                         
By the KKT condition with the Lagrangian 
$\mathcal L(x,\lambda) \coloneqq G(x,\eta) + \sum_{l=1}^L \lambda_l (y_l - \sum_{j\in J_l} x_j)$
this minimizer is determined by $\nabla_x \mathcal L(x,\lambda) = 0$ and $\nabla_\lambda \mathcal L(x,\lambda) = 0$
which means
\begin{align} \label{eq:lagr}
   - \sum_{i=1}^I b_i g_{i,j}(\eta)  \frac{1}{x_j} + \lambda_l &= 0 \quad \text{for all } j \in J_l, \nonumber\\
    \sum_{j \in J_l} x_j &= y_l \quad \text{for all }  l \in [L].
\end{align}
Multiplying with $x_j$ and adding up the first equations over $j \in J_l$ results in
$$
\sum_{j \in J_l} \sum_{i=1}^I b_i g_{i,j}(\eta) = \sum_{j \in J_l} \lambda_l x_j = \lambda_l y_l,
$$
which yields
\begin{align*}
\lambda_l 
&= \frac{1}{y_l}\sum_{j \in J_l} 
\sum_{i=1}^I b_i g_{i,j}(\eta) 
= \frac{1}{y_l}\sum_{j \in J_l} 
\eta_j \sum_{i=1}^I A_{i,j}  \frac{b_i}{(A \eta)_i}, 
\quad
\text{and}
\\
x_j 
&= \frac{1}{\lambda_l}  \eta_j  \sum_{i=1}^I A_{i,j} \frac{b_i}{(A \eta)_i}, \quad j \in J_l.                                                       
\end{align*}                                             
ii)
By definition of $g_{i,j}$ it holds for any $\eta \in \R^J_{>0}$ that
$\sum_{j=1}^J g_{i,j} (\eta) = 1$.
Consequently, we have
$$G(x,x) = - \sum_{i=1}^I b_i \sum_{j=1}^J g_{i,j}(x) \log((Ax)_i) = F(x).$$
     Since the logarithm is concave, we obtain
    \[
    \log((Ax)_i) = \log\left(\sum_{j=1}^J g_{i,j}(\eta) \frac{A_{i,j}x_j}{g_{i,j}(\eta)}\right) \geq \sum_{j=1}^J g_{i,j}(\eta) \log\left(\frac{A_{i,j}x_j}{g_{i,j}(\eta)}\right) 
    \]
    and by $b \in \R^I_{>0}$ 
		and definition of $G$ finally
    $G(x,\eta) \geq F(x)$.
\end{proof}

By the previous result, 
we see that \eqref{alg:c_EMML} simply implements the iterative update
\begin{equation} \label{eq:voila}
x^{(r+1)} = \argmin_{x \in R} G(x, x^{(r)}),
\end{equation}
for an arbitrary starting vector $x^{(0)} \in \R_{>0}^J$.
Due to $(A x^{(r)})_i \leq N$ we obtain
\begin{align}\label{eq:estimate_a}
\lambda^{(r)}_l 
&= 
\frac{1}{y_l} \sum_{j \in J_l} x^{(r)}_j \sum_{i=1}^I A_{i,j} \frac{b_i}{(Ax^{(r)})_i}
\geq 
\underbrace{\frac{1}{y_l} \sum_{j \in J_l} 
x^{(r)}_j}_{=1}
\biggl(\min_{i \in [I]} b_i \biggr) 
\frac{\min_{j \in [J]} \bigl(\sum_{i=1}^N A_{i,j}\bigr)}{N} 
\nonumber
\\
&\geq 
\biggl(\min_{i \in [I]} b_i \biggr) 
\frac{\min_{j \in [J]} \bigl(\sum_{i=1}^N A_{i,j}\bigr)}{N} \eqqcolon \lambda_{\min} 
> 0, \quad r \in \N.
\end{align}

\begin{lemma}\label{lem:1}
    The sequence $(F(x^{(r)}))_r$ is monotonously decreasing and
    \begin{equation*}
    \|x^{(r+1)} - x^{(r)}\|_2 \to 0 \quad \text{as } r \to \infty.
    \end{equation*}
\end{lemma}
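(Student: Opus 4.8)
The plan is to use the standard majorization-minimization (MM) machinery, exploiting that $G(\cdot,\eta)$ is a surrogate for $F$ as established in Proposition \ref{prop:surrogate}. First, I would derive the monotone decrease of $(F(x^{(r)}))_r$ from the sandwich inequality: since $x^{(r+1)} = \argmin_{x\in R} G(x,x^{(r)})$ and $G(x^{(r)},x^{(r)}) = F(x^{(r)})$, we get
\begin{equation*}
F(x^{(r+1)}) \le G(x^{(r+1)},x^{(r)}) \le G(x^{(r)},x^{(r)}) = F(x^{(r)}),
\end{equation*}
using $G(x,\eta)\ge F(x)$ for the first inequality and the minimizing property for the second. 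Since $F$ is bounded below on $R$ (indeed the problem \eqref{eq:basic_a} has a solution), the sequence $(F(x^{(r)}))_r$ converges, so the successive differences $F(x^{(r)}) - F(x^{(r+1)})$ tend to $0$.

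The core of the proof is to turn this energy decrease into the claimed $\|x^{(r+1)}-x^{(r)}\|_2 \to 0$. The natural route is to quantify the gap: I would show there is a constant $\kappa>0$ such that
\begin{equation*}
F(x^{(r)}) - F(x^{(r+1)}) \ge G(x^{(r)},x^{(r)}) - G(x^{(r+1)},x^{(r)}) \ge \kappa \, \|x^{(r+1)}-x^{(r)}\|_2^2,
\end{equation*}
i.e.\ a strong-convexity-type lower bound on $G(\cdot,x^{(r)})$ around its minimizer $x^{(r+1)}$, with a constant uniform in $r$. For this I would compute the Hessian of $G(\cdot,\eta)$: it is diagonal with entries $\partial^2_{x_j} G(x,\eta) = \frac{1}{x_j^2}\sum_i b_i g_{i,j}(\eta) = \frac{\eta_j}{x_j^2} u_j(\eta)$ where $u = A^\tT(b/A\eta)$, and by the explicit minimizer formula \eqref{eq:min_sur} one has $\eta_j u_j = \lambda_l x_j$ for $j\in J_l$, hence $\partial^2_{x_j} G(x^{(r+1)},x^{(r)}) = \lambda_l^{(r)}/x_j^{(r+1)}$. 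Since $R$ lives in a bounded box (the constraints $\sum_{j\in J_l} x_j = y_l$ with $x\ge 0$ bound each $x_j$ by $y_l\le 1$), and the iterates stay in $R$, the $x_j^{(r+1)}$ are uniformly bounded above; together with the uniform lower bound $\lambda_l^{(r)}\ge \lambda_{\min}>0$ from \eqref{eq:estimate_a}, I obtain a uniform strong convexity constant for $G(\cdot,x^{(r)})$ on $R$. A second-order Taylor expansion (or direct convexity argument using that $x^{(r+1)}$ is the constrained minimizer and $x^{(r)}\in R$) then yields the quadratic lower bound. Summing over $r$ and using that $\sum_r (F(x^{(r)})-F(x^{(r+1)}))$ telescopes to a finite quantity gives $\sum_r \|x^{(r+1)}-x^{(r)}\|_2^2 < \infty$, which in particular forces $\|x^{(r+1)}-x^{(r)}\|_2\to 0$.

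The main obstacle I anticipate is the uniform strong-convexity constant: one must be careful that the iterates $x^{(r+1)}$ do not have components drifting to $0$ (which would blow up $1/x_j^{(r+1)}$ in the Hessian—though that only helps for the lower bound) or, more subtly, that the relevant quantities stay controlled so that the Taylor remainder estimate is uniform. The clean way around this is to avoid pointwise-in-$x_j$ Hessian bounds and instead argue directly: by convexity of $G(\cdot,x^{(r)})$ and optimality of $x^{(r+1)}$ on the convex set $R$, and using the second derivative structure, one gets $G(x^{(r)},x^{(r)}) - G(x^{(r+1)},x^{(r)}) \ge \tfrac12 \sum_{j} \frac{\lambda_l^{(r)}}{\max(x_j^{(r)},x_j^{(r+1)})}\,(x_j^{(r)}-x_j^{(r+1)})^2$ (comparing the $x\log(1/x)$-type terms), and then the upper bound $x_j \le 1$ on $R$ together with $\lambda_l^{(r)} \ge \lambda_{\min}$ finishes it. This makes the constant $\kappa = \lambda_{\min}/2$ explicit and uniform, and the remainder of the lemma is then the telescoping argument above.
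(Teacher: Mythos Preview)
Your overall strategy matches the paper's: the MM sandwich gives monotonicity, and then one bounds the gap $G(x^{(r)},x^{(r)}) - G(x^{(r+1)},x^{(r)})$ from below by a constant times $\|x^{(r+1)}-x^{(r)}\|_2^2$. The difference lies in how this quadratic lower bound is obtained, and your first route (Taylor on $G$ with a uniform strong-convexity constant) does not quite work as stated. On the segment between $x^{(r)}$ and $x^{(r+1)}$ the $j$-th Hessian entry of $G(\cdot,x^{(r)})$ is $\lambda_l^{(r)} x_j^{(r+1)}/x_j^2$, and if $x_j^{(r+1)}$ is tiny while $x_j^{(r)}$ is of order one this entry is tiny as well; your parenthetical ``that only helps for the lower bound'' overlooks that the Taylor remainder requires the Hessian on the whole segment, not just at the minimizer.

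The paper sidesteps this by first computing the gap explicitly rather than Taylor-expanding $G$: one gets
\[
G(x^{(r)},x^{(r)}) - G(x^{(r+1)},x^{(r)}) \;=\; \sum_{l} \lambda_l^{(r)} \sum_{j\in J_l} x_j^{(r+1)}\log\!\Big(\tfrac{x_j^{(r+1)}}{x_j^{(r)}}\Big) \;\ge\; \lambda_{\min}\,\KL(x^{(r+1)}| x^{(r)}),
\]
using that each block sum is a nonnegative KL (the block totals agree). Then Taylor is applied to $\KL(\cdot\,|\,x^{(r)})$, whose Hessian $\text{diag}(1/\eta)$ is trivially bounded below by the identity on the simplex. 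This is precisely the clean version of the ``direct argument'' you sketch at the end, so your fallback route is correct; the paper just makes the reduction to a genuine KL explicit and thereby avoids the uniformity issue altogether.
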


\begin{proof}
Since $G$ is a surrogate of $F$, we obtain with \eqref{eq:voila} that
\[
F(x^{(r)}) = G( x^{(r)},x^{(r)}) \ge G( x^{(r+1)},x^{(r)}) \ge F(x^{(r+1)}),
\]
so that the sequence of numbers $\left( F(x^{(r)}) \right)_r$ decreases monotonously.
Since it is also bounded from below, it converges to some number $F^*$.
We estimate
\begin{align*}
F(x^{(r)}) - F(x^{(r+1)}) 
&\ge
G( x^{(r)},x^{(r)}) - G( x^{(r+1)},x^{(r)})\\
&=
\sum_{i=1}^I b_i \sum_{j=1}^J g_{i,j}(x^{(r)}) 
\left( \log \frac{g_{i,j}(x^{(r)})}{A_{i,j} x_j^{(r)}} - \log \frac{g_{i,j}(x^{(r)})}{A_{i,j} x_j^{(r+1)}} \right)\\
&=
\sum_{i=1}^I b_i \sum_{j=1}^J \frac{A_{i,j} x_j^{(r)}}{(A x^{(r)})_j} \log \left( \frac{x^{(r+1)}_j}{x^{(r)}_j} \right)\\
&=
\sum_{l=1}^L \sum_{j \in J_l} x^{(r)}_j u^{(r)}_j \log \left( \frac{x^{(r+1)}_j}{x^{(r)}_j} \right)\\
&=
\sum_{l=1}^L \lambda_l^{(r)} \sum_{j \in J_l} x^{(r+1)}_j \log \left( \frac{x^{(r+1)}_j}{x^{(r)}_j} \right)\\
&=
\min_{l \in [L]} \lambda_l^{(r)} \KL (x^{(r+1)},x^{(r)}) 
\ge 
\lambda_{\min}
\KL (x^{(r+1)},x^{(r)}), 
\end{align*}
where we applied \eqref{eq:estimate_a} in the final estimate. Moreover, we can estimate $\KL (x^{(r+1)},x^{(r)})$ using
Taylor's expansion of $\KL (\cdot,x^{(r)})$ with some 
$\eta^{(r)} \in \{(1-t) x^{(r)} + t x^{(r+1)} : t     \in [0,1]\}$ 
as
    \begin{align*}
    \KL( x^{(r+1)},x^{(r)} ) 
		&= 
		\KL(x^{(r)},x^{(r)}) 
		+ 
		\nabla_x \KL(x^{(r)},x^{(r)})^\tT (x^{(r+1)} - x^{(r)}) 
		\\
    & \quad + 
		\frac12 (x^{(r+1)} - x^{(r)})^\tT 
		\nabla_x^2 \KL(\eta^{(r)},x^{(r)}) (x^{(r+1)} - x^{(r)})\\
		&= 
		\frac12 (x^{(r+1)} - x^{(r)})^\tT
		\text{diag} \left(\frac{1}{\eta^{(r)}}\right) 
		(x^{(r+1)} - x^{(r)})
		\\
    &\geq \|x^{(r+1)} - x^{(r)}\|_2^2.
    \end{align*}
In summary, we get 
\begin{equation*}
\|x^{(r+1)} - x^{(r)}\|_2^2 
\le 
\frac{1}{\lambda_{\min}}
(F(x^{(r)}) -	F(x^{(r+1)})) \to 0 \quad \text{as } r \to \infty.
\end{equation*}
\end{proof}

\begin{lemma}\label{cor:1}
    If $(x^{(r_k)})_k$ is a converging subsequence of $(x^{(r)})_r$ with limit $x^*$, then $(x^{(r_k+1)})_k$ also converges to $x^*$.
    Furthermore, any subsequential limit point $x^*$ of $(x^{(r)})_r$  satisfies
    \[
    x^*_j = \frac{1}{\lambda^*_l} x^*_j u^*_j, \quad j \in J_l, l \in [L],
    \]
    where 
    $\lambda^*_l 
    = 
    \frac{1}{y_l} \sum_{j \in J_l} x_j^* u_j^* > 0$, 
    $l \in [L]$ 
    and $u^* = A^{\tT}\bigl(\frac{b}{Ax^*}\bigr)$.
\end{lemma}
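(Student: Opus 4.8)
The plan is to establish the two claims of Lemma~\ref{cor:1} in order, using the decay estimate $\|x^{(r+1)}-x^{(r)}\|_2 \to 0$ from Lemma~\ref{lem:1} together with the closed-form update \eqref{eq:voila}--\eqref{eq:min_sur} and the uniform lower bound $\lambda^{(r)}_l \geq \lambda_{\min} > 0$ from \eqref{eq:estimate_a}. First I would handle the subsequence claim: suppose $x^{(r_k)} \to x^*$. By the triangle inequality, $\|x^{(r_k+1)} - x^*\|_2 \leq \|x^{(r_k+1)} - x^{(r_k)}\|_2 + \|x^{(r_k)} - x^*\|_2$, and both terms tend to $0$ as $k \to \infty$ — the first by Lemma~\ref{lem:1} and the second by assumption. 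Hence $x^{(r_k+1)} \to x^*$ as well.

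Next I would derive the fixed-point relation for a subsequential limit $x^*$. I need to know first that $x^* \in R$ and that $A x^* > 0$, so that the quantities $u^* = A^\tT(b / A x^*)$ and $\lambda^*_l = \frac{1}{y_l}\sum_{j \in J_l} x^*_j u^*_j$ are well-defined and finite. Membership in $R$ is immediate since $R$ is closed and all iterates lie in $R$. The positivity $A x^* > 0$ requires a short argument: I would show that the iterates stay bounded away from the "bad" region. One clean way is to note that $F(x^{(r)})$ is decreasing, hence bounded above by $F(x^{(0)}) < \infty$; since $F(x) = -\langle b, \log(Ax)\rangle$ with $b > 0$ and $(Ax)_i \leq N$ bounded above on $R$, an upper bound on $F$ forces a strictly positive lower bound on each coordinate $(Ax^{(r)})_i$, uniformly in $r$. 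Passing to the limit gives $(A x^*)_i \geq \delta > 0$ for all $i$, so $u^*$ and $\lambda^*_l$ are finite, and $\lambda^*_l \geq \lambda_{\min} > 0$ follows by taking limits in \eqref{eq:estimate_a} (the right-hand side of \eqref{eq:estimate_a} does not depend on $r$).

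Finally I would pass to the limit in the update formula itself. Along the subsequence $(r_k)$, the update \eqref{eq:min_sur} reads $x^{(r_k+1)}_j = \frac{1}{\lambda^{(r_k)}_l} x^{(r_k)}_j u^{(r_k)}_j$ for $j \in J_l$. The map $x \mapsto A^\tT(b / Ax)$ is continuous on the open set $\{Ax > 0\}$, so $u^{(r_k)} \to u^*$; then $\lambda^{(r_k)}_l \to \lambda^*_l$ by continuity of the defining sum and $\lambda^*_l > 0$; and $x^{(r_k)}_j \to x^*_j$ by assumption while $x^{(r_k+1)}_j \to x^*_j$ by the first part. Taking $k \to \infty$ in the update equation yields $x^*_j = \frac{1}{\lambda^*_l} x^*_j u^*_j$ for $j \in J_l$, $l \in [L]$, as claimed. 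The main obstacle is the uniform positivity $(A x^{(r)})_i \geq \delta > 0$; everything else is continuity and the triangle inequality. If the $F$-boundedness argument for this is not airtight (e.g.\ if one worries about coordinates $(Ax^{(r)})_i$ for which $b_i$ is small), an alternative is to argue directly from $\lambda^{(r)}_l \geq \lambda_{\min}$ and the multiplicative structure that no coordinate of $x^{(r)}$ entering a nonzero column of $A$ can collapse to zero without violating the KL-descent estimate in Lemma~\ref{lem:1}.
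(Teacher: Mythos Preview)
Your proposal is correct and follows essentially the same route as the paper: the first claim via the triangle inequality and Lemma~\ref{lem:1}, the second by passing to the limit in the update formula \eqref{eq:min_sur} using continuity and the lower bound \eqref{eq:estimate_a}. Your argument is in fact more complete than the paper's, which simply asserts ``by construction $u^{(r_k)} \to u^*$'' without justifying that $A x^* > 0$; your use of the monotone decrease of $F$ together with the uniform upper bound on $(Ax^{(r)})_i$ to force a uniform positive lower bound is exactly the missing step, and it is sound.
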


\begin{proof}
    The first statement immediately follows by \eqref{lem:1} 
    since every bounded sequence has a convergent subsequence. 
    Moreover, 
    let $(x^{(r_k)})_k$ be a converging subsequence with limit $x^*$, 
    then by contruction we obtain
    \[
    u^{(r_k)} \to u^* \quad \text{and} \quad \lambda^{(r_k)} \to \lambda^*, \quad k \to \infty.
    \]
    Due to \eqref{eq:estimate_a} we have $\lambda^*_l > 0$ for all $l \in [L]$. 
    Thus, by \eqref{lem:1} and definition of the iterates,
    \[
    x^*_j 
    = 
    \lim_{k \to \infty} x^{(r_k + 1)} 
    = 
    \lim_{k \to \infty} \frac{1}{\lambda^{(r_k)}_l} x^{(r_k)}_j u^{(r_k)}_j
    = 
    \frac{1}{\lambda^*_l} x^*_j u^*_j, \quad j \in J_l, l \in [L],
    \]
    as desired.
\end{proof}

In the following, we denote the componentwise multiplication of two vectors by $\odot$.
Furthermore, we extend this notation to $\lambda \in \R^L$ and $x \in \R^J$,
by defining
\[
(\lambda \odot x)_j 
\coloneqq 
\lambda_l x_j, \quad j \in J_l, l \in [L].
\]
In a similar fashion, 
for all $\pi \in \R^{I \times J}$, we define $\lambda \odot \pi \in \R^{I \times J}$ by 
\[
(\lambda \odot \pi)_{i,j} \coloneqq \lambda_l \pi_{i,j}, \quad i \in [I], j \in J_l, l \in [L].
\]

\begin{lemma}\label{lem:2}
    Let $(\lambda^*,x^*)$ be a subsequential limit point of $(\lambda^{(r)},x^{(r)})_r$, then
    \begin{equation}\label{est:3}
    \KL(\lambda^* \odot x^*, \lambda^{(r)} \odot x^{(r+1)}) \leq \KL(\lambda^* \odot x^*,\lambda^* \odot x^{(r)}), \quad r \in \N.
    \end{equation}
\end{lemma}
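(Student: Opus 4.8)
The plan is to reduce the claimed inequality to a comparison of Kullback--Leibler divergences of two nonnegative matrices on $[I]\times[J]$, and then obtain the bound from the monotonicity of $\KL$ under marginalization together with the monotonicity of $F$ along the iterates established in Lemma~\ref{lem:1}.

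First I would rewrite both sides of \eqref{est:3} in terms of the ``unnormalized'' update. The update step in Algorithm~\ref{alg:c_EMML} gives $(\lambda^{(r)}\odot x^{(r+1)})_j = x^{(r)}_j u^{(r)}_j$ for $j\in J_l$, with $u^{(r)} = A^\tT(b/(Ax^{(r)}))$, so $\lambda^{(r)}\odot x^{(r+1)} = x^{(r)}\odot u^{(r)}$. For the limit point, Lemma~\ref{cor:1} gives $x^*_j = \tfrac{1}{\lambda^*_l}x^*_j u^*_j$ for $j\in J_l$, hence $\lambda^*_l x^*_j = x^*_j u^*_j$ for every $j$ (trivially when $x^*_j=0$, since $\lambda^*_l=u^*_j$ on the support of $x^*$), i.e.\ $\lambda^*\odot x^* = x^*\odot u^*$ with $u^* = A^\tT(b/(Ax^*))$. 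Thus \eqref{est:3} is equivalent to
\[
\KL\bigl(x^*\odot u^*,\, x^{(r)}\odot u^{(r)}\bigr)\;\le\;\KL\bigl(\lambda^*\odot x^*,\, \lambda^*\odot x^{(r)}\bigr).
\]

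Next I would introduce the matrices $\pi^*,\pi^{(r)}\in\R^{I\times J}_{\ge 0}$ with entries $\pi^*_{ij}\assign A_{ij}x^*_j b_i/(Ax^*)_i$ and $\pi^{(r)}_{ij}\assign A_{ij}x^{(r)}_j b_i/(Ax^{(r)})_i$. Both have row sums equal to $b_i$, and their $j$-th column sums are exactly $x^*_j u^*_j$ and $x^{(r)}_j u^{(r)}_j$. Summing out the first index is a pushforward under a (linear) map, so \eqref{eq:KL_estimate}, applied after normalizing to probability measures and using that $\KL$ is positively homogeneous, gives $\KL(x^*\odot u^*, x^{(r)}\odot u^{(r)})\le\KL(\pi^*,\pi^{(r)})$. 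A short computation, in which the linear mass terms cancel because the row sums of both matrices are $b$, shows
\[
\KL(\pi^*,\pi^{(r)}) = \sum_{j=1}^J x^*_j u^*_j\,\log\frac{x^*_j}{x^{(r)}_j} \;+\; \sum_{i=1}^I b_i\,\log\frac{(Ax^{(r)})_i}{(Ax^*)_i}.
\]
The second sum equals $F(x^*) - F(x^{(r)})$, which is $\le 0$: by Lemma~\ref{lem:1} the sequence $(F(x^{(r)}))_r$ is nonincreasing, and along the convergent subsequence $x^{(r_k)}\to x^*$ one has $F(x^*)=\lim_k F(x^{(r_k)})\le F(x^{(r)})$ for every $r$. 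For the first sum I would use once more $x^*_j u^*_j = \lambda^*_l x^*_j$ together with $x^*,x^{(r)}\in R$ (so the $J_l$-blocks of both vectors have mass $y_l$, using that $R$ is closed and all iterates lie in $R$), which identifies it with $\KL(\lambda^*\odot x^*,\, \lambda^*\odot x^{(r)})$. Chaining these three facts yields \eqref{est:3}.

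The main obstacle, and essentially the only nontrivial point, is to justify $(Ax^*)_i>0$ for all $i$, so that $\pi^*$ is well defined, $F(x^*)$ is finite, and $F$ is continuous at $x^*$; this follows because $(F(x^{(r)}))_r$ is bounded (being nonincreasing and bounded below, using $\sum_i(Ax^{(r)})_i=N$), whereas $(Ax^{(r_k)})_i\to 0$ for some $i$ would force $F(x^{(r_k)})\to+\infty$. Everything else is routine manipulation of the KL mass terms together with the properties of $F$ and of the limit point $x^*$ already recorded in Lemmas~\ref{lem:1} and~\ref{cor:1}.
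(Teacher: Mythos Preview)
Your argument is correct and follows the same skeleton as the paper's proof: introduce the matrices with entries $A_{ij}x_j\,b_i/(Ax)_i$ at $x^*$ and $x^{(r)}$, bound the vector $\KL$ on the left of \eqref{est:3} by the matrix $\KL$ via marginalization over $i$ (this is the paper's inequality \eqref{est:1}), and then identify the matrix $\KL$ with $\KL(\lambda^*\odot x^*,\lambda^*\odot x^{(r)})+F(x^*)-F(x^{(r)})$, using $F(x^*)\le F(x^{(r)})$ to conclude. Your justification that $(Ax^*)_i>0$, so that everything is well defined and $F$ is continuous at $x^*$, is exactly the point that the paper treats tacitly.

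The one genuine difference is that you compute $\KL(\pi^*,\pi^{(r)})$ directly, whereas the paper first proves the Pythagorean-type inequality \eqref{est:2} and then evaluates $\KL(\lambda^*\odot\pi^*,q^{(r)})-\KL(\lambda^{(r)}\odot\pi^{(r)},q^{(r)})$. In this particular situation \eqref{est:2} is in fact an equality (both $\lambda^*\odot\pi^*$ and $\lambda^{(r)}\odot\pi^{(r)}$ have row sums $b$, and $\log(q^{(r)}_{ij}/(\lambda^{(r)}\odot\pi^{(r)})_{ij})=\log((Ax^{(r)})_i/b_i)$ depends only on $i$), so the detour through $q^{(r)}$ adds nothing; your direct computation is a clean shortcut that bypasses an entire step of the paper's argument. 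The paper's route has the conceptual advantage of making explicit that $\lambda^{(r)}\odot\pi^{(r)}$ is the $\KL$-projection of $q^{(r)}$ onto $\Delta^b$, which is the Csisz\'ar--Tusn\'ady viewpoint underlying EMML, but for the purpose of proving \eqref{est:3} your version is more economical.
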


\begin{proof}
    For $r \in \N$, we set
    \begin{align*}
    q^{(r)}_{i,j} &\coloneqq A_{i,j} x_j^{(r)},\\
    \pi^{(r)}_{i,j} &\coloneqq \frac{1}{\lambda_l^{(r)}} x^{(r)}_j A_{i,j} \frac{b_i}{(Ax^{(r)})_i},\\
    \pi^*_{i,j} &\coloneqq \frac{1}{\lambda_l^*} x^*_j A_{i,j} \frac{b_i}{(Ax^*)_i}, \quad i \in [I], j \in J_l, l \in [L].
    \end{align*}
    The proof is divided into three steps, the first two steps consist of showing the inequalities
    \begin{align}
        \KL(\lambda^* \odot x^*, \lambda^{(r)} \odot x^{(r+1)}) &\leq \KL(\lambda^* \odot \pi^*, \lambda^{(r)} \odot \pi^{(r)}),  \label{est:1}\\
        \KL(\lambda^* \odot \pi^*, \lambda^{(r)} \odot \pi^{(r)}) + \KL(\lambda^{(r)} \odot \pi^{(r)},q^{(r)}) &\leq \KL(\lambda^* \odot \pi^*,q^{(r)}), \label{est:2}
    \end{align}
    which are then used to prove \eqref{est:3} in the third step. 

    We begin by showing \eqref{est:1}.
    For any $c \in \R^{I \times J}$, we define $\widetilde{c} \in \R^{I \times J}$ by
    \smash{$
    \widetilde{c}_{i,j} \coloneqq \frac{c_{i,j}}{\sum_{i' \in [I]} c_{i',j}}
    $}.
    By construction, we have $\sum_{i \in [I]} \pi^{(r)}_{i,j} = \frac{1}{\lambda_l^{(r)}} x^{(r)}_j u^{(r)}_j = x^{(r+1)}_j$, $j \in J_l$, $l \in [L]$. Similarly, we can leverage \eqref{cor:1} to obtain $\sum_{i \in [I]} \pi^*_{i,j} = x^*_j$, $j \in [J]$. Thus,
    \[
    \pi^*_{i,j} = \widetilde{\pi^*_{i,j}} x^*_j, \quad \text{and} \quad \pi^{(r)}_{i,j} = \widetilde{\pi^{(r)}_{i,j}} x^{(r+1)}_j, \quad i \in [I], j \in [J].
    \]
    It holds
    \begin{align*}
        &\KL(\lambda^* \odot \pi^*,\lambda^{(r)} \odot \pi^{(r)}) 
        = \sum_{\substack{i \in [I] \\ j \in J_l, l \in [L]}} \lambda^*_l \pi^*_{i,j} \log\left(\frac{\lambda^*_l  \pi^*_{i,j}}{\lambda^{(r)}_l \pi^{(r)}_{i,j}}\right)\\
        &= \sum_{\substack{i \in [I] \\ j \in J_l, l \in [L]}} \lambda^*_l \widetilde{\pi^*_{i,j}} x^*_j\log\left(\frac{\lambda^*_l \widetilde{\pi^*_{i,j}} x^*_j}{\lambda^{(r)}_l \widetilde{\pi^{(r)}_{i,j}} x^{(r+1)}_j}\right)\\
        &= \sum_{\substack{i \in [I] \\ j \in J_l, l \in [L]}} \lambda^*_l \widetilde{\pi^*_{i,j}} x^*_j\log\left(\frac{\lambda^*_l x^*_j}{\lambda^{(r)}_l x^{(r+1)}_j}\right) 
        + \sum_{\substack{i \in [I] \\ j \in J_l, l \in [L]}} \lambda^*_l \widetilde{\pi^*_{i,j}} x^*_j\log\left(\frac{\widetilde{\pi^*_{i,j}}}{\widetilde{\pi^{(r)}_{i,j}}}\right)\\
        &= \sum_{j \in J_l, l \in [L]} \lambda^*_l x^*_j\log\left(\frac{\lambda^*_l x^*_j}{\lambda^{(r)}_l x^{(r+1)}_j}\right) 
        + \underbrace{\sum_{\substack{i \in [I] \\ j \in J_l, l \in [L]}} \lambda^*_l \widetilde{\pi^*_{i,j}} x^*_j\log\left(\frac{\lambda^*_l  x^*_j \widetilde{\pi^*_{i,j}}}{\lambda^*_l x^*_j\widetilde{\pi^{(r)}_{i,j}}}\right)}_{\geq 0}\\
        &\geq \KL(\lambda^* \odot x^*, \lambda^{(r)} \odot x^{(r+1)}).
    \end{align*}
    Notably, without loss of generality the summation over $j$ in each line can be restricted to \smash{$x_j^* \neq 0$} and due to \eqref{prop:surrogate} it holds $x^{(r+1)}_j \neq 0$ so that all terms are well defined. We remark that the last estimate is obtained by \eqref{eq:Gibbs} together with
    \[
    \left.
\begin{array}{ll}
\sum_{\substack{i \in [I] \\ j \in J_l, l \in [L]}} \lambda^*_l \widetilde{\pi^*_{i,j}} x_j^* \\
\sum_{\substack{i \in [I] \\ j \in J_l, l \in [L]}} \lambda^*_l \widetilde{\pi^{(r)}_{i,j}} x_j^* 
\end{array}
\right\}
     = \sum_{j \in J_l, l \in [L]} \lambda^*_l x_j^* = \sum_{j \in [J]} x_j^* \sum_{i \in [I]} A_{i,j} \frac{b_i}{(Ax^*)_i} = \sum_{i\in [I]} b_i = 1.
    \]
    where we used \eqref{cor:1}.
    Hence we obtain the first estimate \eqref{est:1}.

    We proceed by showing \eqref{est:2}. For this, we define the following convex set
    \[
    \Delta^b \coloneqq \{c \in \R^{I \times J} : \sum_{j=1}^J c_{i,j} = b_i\}.
    \]
    For $c \in \Delta^b$, we have
    \begin{align}
        \KL(c,q^{(r)}) 
        &= \sum_{i,j} c_{i,j} \log\left(\frac{c_{i,j}}{q^{(r)}_{i,j}}\right)
        \nonumber
        \\
        &= \sum_{i,j} c_{i,j}\log\left(\frac{c_{i,j} b_i (Ax^{(r)})_i}{q^{(r)}_{i,j} b_i(Ax^{(r)})_i}\right)
        \nonumber
        \\
        &= 
        \sum_{i,j} c_{i,j}\log\left(\frac{c_{i,j} (Ax^{(r)})_i}{q^{(r)}_{i,j}b_i }\right)
        +
        \sum_{i,j} c_{i,j}\log\left(\frac{b_i}{(Ax^{(r)})_i}\right)
        \nonumber
        \\
        &= 
        \underbrace{\sum_{i,j} c_{i,j}\log\left(\frac{c_{i,j} (Ax^{(r)})_i}{q^{(r)}_{i,j}b_i }\right)}_{\geq 0}
        +
        \sum_{i} b_i \log\left(\frac{b_i}{(Ax^{(r)})_i}\right)
        \nonumber
        \\
        &\geq \KL(b,Ax^{(r)}), \label{eq:estimate_KL_c_q}
    \end{align}
    where the last estimate is again obtained by {\color{red} \eqref{eq:Gibbs}} together with
    \[
    \sum_{i\in [i], j \in [J]} c_{i,j} = \sum_{i \in [I]} b_i = 1 \quad \text{and} \quad \sum_{i\in [i], j \in [J]} \frac{q^{(r)}_{i,j} b_i}{(Ax^{(r)})_i} = \sum_{i \in [I]} b_i = 1.
    \]
    We have equality in \eqref{eq:estimate_KL_c_q} if and only if 
    \[
    c_{i,j} = q_{i,j}^{(r)} \frac{b_i}{(Ax^{(r)})_i} = x_j^{(r)} A_{i,j} \frac{b_i}{(Ax^{(r)})_i} = \lambda^{(r)}_l \pi^{(r)}_{i,j}.
    \]
    Now, set $f(t) \coloneqq \KL(t (\lambda^{(r)} \odot \pi^{(r)}) + (1-t)(\lambda^* \odot \pi^*),q^{(r)})$, $t \in [0,1]$. The derivative of $f$ is given by
    \[
    f'(t) = \sum_{i,j} \left(\lambda^{(r)}_l \pi^{(r)}_{i,j}  - \lambda^*_l \pi^*_{i,j}\right) 
    \left(\log\left(\frac{t \lambda^{(r)}_l \pi^{(r)}_{i,j} + (1-t)\lambda^*_l \pi^*_{i,j}}{q^{(r)}_{i,j}}\right) + 1 \right).
    \]
    By construction $\lambda^{(r)} \odot \pi^{(r)}, \lambda^* \odot \pi^*$ and any of their convex combinations are elements of $\Delta^b$. Hence
    \begin{align*}
        0 \geq f'(1) &= \sum_{i,j} \left(\lambda^{(r)}_l \pi^{(r)}_{i,j}  - \lambda^*_l \pi^*_{i,j}\right) 
        \log\left(\frac{\lambda^{(r)}_l \pi^{(r)}_{i,j}}{q^{(r)}_{i,j}}\right) + \underbrace{\sum_{i,j} \left(\lambda^{(r)}_l \pi^{(r)}_{i,j}  - \lambda^*_l \pi^*_{i,j}\right)}_{=0}\\
        &= \KL(\lambda^{(r)} \odot \pi^{(r)}, q^{(r)}) - \sum_{i,j} \lambda_l \pi^*_{i,j} \log\left(\frac{\lambda^{(r)}_l \pi^{(r)}_{i,j}}{q^{(r)}_{i,j}}\right)\\
        &= \KL(\lambda^{(r)} \odot \pi^{(r)}, q^{(r)}) 
        - 
        \sum_{i,j} \lambda_l \pi^*_{i,j} \log\left(\frac{\lambda^{(r)}_l \pi^{(r)}_{i,j}}{q^{(r)}_{i,j}} \frac{\lambda^*_l \pi^*_{i,j}}{\lambda^*_l \pi^*_{i,j}}\right)\\
        &= \KL(\lambda^{(r)} \odot \pi^{(r)}, q^{(r)}) 
        - 
        \KL(\lambda^* \odot \pi^*,q^{(r)})
        +
        \KL(\lambda^* \odot \pi^*,\lambda^{(r)} \odot \pi^{(r)})
    \end{align*}
    which yields \eqref{est:2}.

    We finish the proof by showing \eqref{est:3}.
    It holds
    \begin{align*}
        &\KL(\lambda^* \odot x^*, \lambda^{(r)} \odot x^{(r+1)}) \\
        \stackrel{\eqref{est:1}}{\leq} 
        &\KL(\lambda^* \odot \pi^*, \lambda^{(r)} \odot \pi^{(r)})\\
        \stackrel{\eqref{est:2}}{\leq} 
        &\KL(\lambda^* \odot \pi^*,q^{(r)}) - \KL(\lambda^{(r)} \odot \pi^{(r)},q^{(r)}) \\
        = 
        &\sum_{\substack{i \in [I] \\ j \in J_l, l \in [L]}} \lambda_l^* \pi^*_{i,j} \log\left(\frac{x^*_j A_{i,j} \frac{b_i}{(Ax^*)_i}}{A_{i,j} x^{(r)}_j}\right) - \sum_{\substack{i \in [I] \\ j \in J_l, l \in [L]}} \lambda^{(r)}_l \pi^{(r)}_{i,j} \log\left(\frac{x^{(r)}_j A_{i,j} \frac{b_i}{(Ax^{(r)})_i}}{A_{i,j}x^{(r)}_j}\right)\\
        = 
        &\sum_{\substack{i \in [I] \\ j \in J_l, l \in [L]}} \lambda_l^* \pi^*_{i,j} \log\left(\frac{x^*_j}{x^{(r)}_j}\right) 
        +
        \!\!\!\!\!
        \sum_{\substack{i \in [I] \\ j \in J_l, l \in [L]}} \lambda_l^* \pi^*_{i,j} \log\left(\frac{b_i}{(Ax^*)_i}\right) 
        - 
        \!\!\!\!\!
        \sum_{\substack{i \in [I] \\ j \in J_l, l \in [L]}} \lambda^{(r)}_l \pi^{(r)}_{i,j} \log\left(\frac{b_i}{(Ax^{(r)})_i}\right)\\
        = 
        &\sum_{j \in J_l, l \in [L]} \lambda_l^* x_j^*\log\left(\frac{x^*_j}{x^{(r)}_j}\right) 
        +
        \sum_{i \in [I]} b_i \log\left(\frac{b_i}{(Ax^*)_i}\right) 
        - 
        \sum_{i \in [I]} b_i \log\left(\frac{b_i}{(Ax^{(r)})_i}\right)\\
        = 
        &\; \KL(\lambda^* \odot x^*, \lambda^* \odot x^{(r)})
        +
        \underbrace{\sum_{i,j} b_i \log\left(\frac{1}{(Ax^*)_i}\right) 
        - 
        \sum_{i} b_i \log\left(\frac{1}{(Ax^{(r)})_i}\right)}_{= F(x^*) - F(x^{(r)}) \leq 0}\\
        \leq 
        &\; \KL(\lambda^* \odot x^*, \lambda^* \odot x^{(r)}).
    \end{align*}
    Above we have used that $\sum_{i} \pi^*_{i,j} = x^*_j$ in the sixth line which follows by \eqref{cor:1} and the monotonicity of $(F(x^{(r)}))_r$ provided by \eqref{lem:1}.
\end{proof}

\begin{lemma}\label{lem:KL_theta_u_decreases}
    Let $(\lambda^*,x^*)$ be subsequential limit point of $(\lambda^{(r)},x^{(r)})_r$. Then it holds
    \begin{equation}
    \KL(\lambda^* \odot x^*,x^{(r)} \odot u^{(r)}) \leq \KL(\lambda^* \odot x^*,x^{(r-1)} \odot u^{(r-1)}), \quad r \in \N.
    \end{equation}
\end{lemma}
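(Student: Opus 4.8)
The plan is to reduce the claim to Lemma~\ref{lem:2} by exploiting the structure of the iteration. The starting observation is that the update in Algorithm~\ref{alg:c_EMML} can be rewritten, for every $r$ and every $j \in J_l$, $l \in [L]$, as $\lambda^{(r)}_l x^{(r+1)}_j = x^{(r)}_j u^{(r)}_j$, i.e.
\[
\lambda^{(r)} \odot x^{(r+1)} = x^{(r)} \odot u^{(r)}.
\]
Plugging this identity into Lemma~\ref{lem:2} turns it into
\[
\KL(\lambda^*\odot x^*, x^{(r)}\odot u^{(r)}) \le \KL(\lambda^*\odot x^*, \lambda^*\odot x^{(r)}),
\]
and replacing $r$ by $r-1$ shows that the right-hand side of the asserted inequality equals $\KL(\lambda^*\odot x^*, \lambda^{(r-1)}\odot x^{(r)})$. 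Hence it remains to establish the single new estimate
\[
\KL(\lambda^*\odot x^*, \lambda^*\odot x^{(r)}) \le \KL(\lambda^*\odot x^*, \lambda^{(r-1)}\odot x^{(r)}).
\]

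To prove this, I would use that $x^*$ is a limit point of iterates, each of which lies in the closed set $R$, so $x^* \in R$; in particular $x^*$ and $x^{(r)}$ have the same cell sums $\sum_{j\in J_l} x^*_j = \sum_{j\in J_l} x^{(r)}_j = y_l$. Expanding the Kullback--Leibler divergence and splitting $\log(\lambda^*_l x^*_j) - \log(\lambda_l x^{(r)}_j)$ into a ``cell'' term and an ``intra-cell'' term then yields, for any $\lambda \in \R^L_{>0}$,
\[
\KL(\lambda^*\odot x^*, \lambda\odot x^{(r)}) = \KL\big((\lambda^*_l y_l)_{l\in[L]} \,|\, (\lambda_l y_l)_{l\in[L]}\big) + \sum_{l\in[L]} \lambda^*_l \sum_{j\in J_l} x^*_j \log\Big(\frac{x^*_j}{x^{(r)}_j}\Big).
\]
The second summand does not depend on $\lambda$, while the first is nonnegative by \eqref{eq:Gibbs} and vanishes for $\lambda = \lambda^*$; since $\lambda^{(r-1)} > 0$ (cf.\ \eqref{eq:estimate_a}), choosing $\lambda = \lambda^{(r-1)}$ gives exactly the required estimate. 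Chaining these inequalities together with the identity $\lambda^{(r-1)}\odot x^{(r)} = x^{(r-1)}\odot u^{(r-1)}$ then delivers the lemma.

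The argument is essentially bookkeeping once the update is rewritten in the product form above, so I do not anticipate a serious obstacle. The only points needing minor care are: verifying $x^* \in R$ so that the cell-sum cancellation is legitimate; checking that all the divergences involved are finite, which holds because all iterates $x^{(r)}$ and all $\lambda^{(r)}_l$ are strictly positive (Proposition~\ref{prop:surrogate}, \eqref{eq:estimate_a}), so that $x^{(r)}\odot u^{(r)} = \lambda^{(r)}\odot x^{(r+1)}$ has strictly positive entries; and using the convention $0\log 0 = 0$ for coordinates with $x^*_j = 0$.
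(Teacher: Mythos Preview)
Your proposal is correct and follows essentially the same route as the paper: both start from Lemma~\ref{lem:2} rewritten via the identity $\lambda^{(r)}\odot x^{(r+1)}=x^{(r)}\odot u^{(r)}$ and then reduce the remaining comparison to the nonnegativity of $\KL\big((\lambda^*_l y_l)_l \,\big|\, (\lambda^{(r-1)}_l y_l)_l\big)$, which is exactly the extra term the paper isolates and bounds by~\eqref{eq:Gibbs}. Your presentation via the cell-wise decomposition $\KL(\lambda^*\odot x^*,\lambda\odot x^{(r)})=\KL((\lambda^*_l y_l)_l\,|\,(\lambda_l y_l)_l)+\text{const}$ is a slightly cleaner way to phrase the same computation.
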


\begin{proof}
Let $r \in \N$, \eqref{lem:2} yields
\begin{align*}
    &\KL(\lambda^* \odot x^*,x^{(r)} \odot u^{(r)}) = \KL(\lambda^* \odot x^*,\lambda^{(r)} \odot x^{(r+1)})
    \leq \KL(\lambda^* \odot x^*, \lambda^* \odot x^{(r)})\\
    &= \sum_{j \in J_l, l \in [L]} \lambda^*_l x_j^* \log\left(\frac{\lambda^*_l x^*_j}{\lambda^*_lx_j^{(r)}}\right)\\
    &= \sum_{j \in J_l, l \in [L]} \lambda^*_l x_j^* \log\left(\frac{\lambda^{(r-1)}_l \lambda^*_lx^*_j}{\lambda^*_lx^{(r-1)}_j u^{(r-1)}_j}\right)\\
    &= \underbrace{\sum_{j \in J_l, l \in [L]} \lambda^*_l x_j^* \log\left(\frac{\lambda^*_l x^*_j}{x^{(r-1)}_j u^{(r-1)}_j}\right)}_{=\KL(\lambda^* \odot x^*, x^{(r-1)} \odot u^{(r-1)})} + \sum_{j \in J_l, l \in [L]} \lambda^*_l x_j^* \log\left(\frac{\lambda^{(r - 1)}_l}{\lambda^*_l}\right).
\end{align*}
Hence, showing $\sum_{j \in J_l, l \in [L]} \lambda^*_l x_j^* \log\left(\frac{\lambda^{(r - 1)}_l}{\lambda^*_l}\right) \leq 0$ finishes the proof.
For this term, we obtain
\begin{align*}
\sum_{j \in J_l, l \in [L]} \lambda^*_l x_j^* \log\left(\frac{\lambda^{(r-1)}_l}{\lambda^*_l}\right) &= \sum_{l \in [L]} \lambda^*_l \underbrace{\left(\sum_{j \in J_l} x_j^*\right)}_{=y_l} \log\left(\frac{\lambda^{(r-1)}_l}{\lambda^*_l}\right)\\
&= - \sum_{l \in [L]} y_l \lambda^*_l\log\left(\frac{y_l \lambda^*_l}{y_l \lambda^{(r-1)}_l}\right)\\
&\leq 0,
\end{align*}
where we used \eqref{eq:Gibbs} in the last step together with
\begin{align*}
&\sum_{l \in [L]} y_l \lambda^*_l = \sum_{l \in [L]} y_l \frac{1}{y_l} \sum_{j \in J_l} x^*_j u^*_j = \sum_{j \in [J]} x^*_j \sum_{i \in [I]} A_{i,j} \frac{b_i}{(Ax^*)_i} = \sum_{i \in [I]} b_i = 1
\end{align*}
and $\sum_{l} y_l \lambda^{(r-1)}_l = 1$ which can be shown analogously.
\end{proof}

Finally, we can give the proof of \eqref{thm:convergence}.

\begin{proof}[Proof of Theorem \ref{thm:convergence}]
The monotonic decrease of the objective is already shown in \eqref{lem:1}.
We show that $(x^{(r)})_r$ converges. 
Since $(x^{(r)})_r$ is contained in the compact set of probability vectors, 
we can pick converging subsequences $(x^{(r_k)})_k$ and $(x^{(r_m')})_m$ 
with limits $x^*$ and $\tilde{x}$, respectively. 
We show $x^* = \tilde{x}$. 
We denote
\[
u^*_j 
= 
A^{\tT} \left(\frac{b}{Ax^*}\right), 
\quad 
\lambda^*_l 
= 
\frac{1}{y_l} \sum_{j \in J_l} x^*_ju^*_j, 
\quad 
\tilde{u}_j 
= 
A^{\tT} \left(\frac{b}{A\tilde{x}}\right), 
\quad 
\tilde{\lambda}_l 
= 
\frac{1}{y_l} \sum_{j \in J_l} \tilde{x}_j \tilde{u}_j.
\]
Clearly $\lambda^*$, $u^*$, $\tilde{\lambda}$ and $\tilde{u}$ 
are the limits of $(\lambda^{(r_k)})_k$, $(u^{(r_k)})_k$, $(\lambda^{(r_m')})_m$ 
and $(u^{(r_m')})_m$, respectively.
Let $k,m \in \N$ so that $r_m' \geq r_k$. 
Due to \eqref{lem:KL_theta_u_decreases},
we obtain
\[
\KL(\lambda^* \odot x^*, \lambda^{(r_m')} \odot x^{(r_m')}) 
\leq 
\KL(\lambda^* \odot x^*, \lambda^{(r_k)} \odot x^{(r_k)}).
\]
Taking the limit $m \to \infty$ yields
\[
\KL(\lambda^* \odot x^*, \tilde{\lambda} \odot \tilde{x}) 
\leq 
\KL(\lambda^* \odot x^*, \lambda^{(r_k)} \odot x^{(r_k)}) 
\to 0, 
\quad k \to \infty.
\]
We arrive at 
$\lambda^*_l x^*_j 
= 
\tilde{\lambda}_l \tilde{x}_j$ 
for all $j \in J_l$, $l \in [L]$. 
By construction, 
$\sum_{j \in J_l} x^*_j 
= 
\sum_{j \in J_l} \tilde{x}_j = y_l > 0$, 
$l \in [L]$, 
we obtain $\lambda^* = \tilde \lambda$ which have non-zero entries
and thus also $x^* = \tilde{x}$. 
Since every converging subsequence of $(x^{(r)})_r$ 
converges to the same limit $x^*$, 
the entire sequence converges.

To finish the proof we show that $x^*$ is indeed a minimizer of $F + \iota_R$.
More precisely, 
we show that $x^*, \lambda^*$ fulfill the KKT conditions
\begin{align*}
\sum_{i \in I} A_{i,j} \frac{b_i}{(A x)_i} 
&= \lambda_l \quad \text{if } x_j >0, \; j \in J_l, l \in [L],
\\
\sum_{i \in I} A_{i,j} \frac{b_i}{(A x)_i} 
&\le \lambda_l \quad \text{if } x_j = 0, \; j \in J_l, l \in [L],
\\
\sum_{j \in J_l} x_j &= y_l, \quad l \in [L].
\end{align*}
By construction, 
$x^*$ readily fulfills the last condition. 
Moreover, \eqref{cor:1} gives
\[
x_j^* 
=
\frac{1}{\lambda^*_l} x^*_j 
\sum_{i\in [I]} A_{i,j} \frac{b_i}{(Ax^*)}, 
\quad j \in J_l, l \in [L].
\]
Hence, 
for any $j \in J_l$, $l \in [L]$ for which $x_j^* > 0$, 
we can divide both sides by $x_j^*$ 
to see that $x^*$ fulfills the first KKT condition with multiplier $\lambda^*$. 
Finally, to see the second KKT condition, 
assume that there exists some $j \in J_l$, $l \in [L]$ with
\[
u_j^* = \sum_{i \in I} A_{i,j} \frac{b_i}{(A x^*)_i} > \lambda_l^*.
\]
Since $u_j^*$ is the limit of $(u^{(r)}_j)_r$, 
there exists $\varepsilon > 0$ and $N \in \N$, 
so that $\frac{u^{(r)}_j}{\lambda^{(r)}_l} \geq 1 + \varepsilon$ 
for all $r > N$. 
Then, it holds
\[
x_j^* 
= \lim_{r \to \infty} x^{(r)}_j 
= \lim_{r \to \infty} x^{(r-1)}_j \frac{u^{(r-1)}_j}{\lambda^{(r-1)}_l} 
= \cdots
=\underbrace{x^{(N)}_j}_{\neq 0} \lim_{r \to \infty} \prod_{n = N}^r \underbrace{\frac{u^{(n)}_j}{\lambda^{(n)}_l}}_{ \geq 1 + \varepsilon} = \infty,
\]
which yields the desired contradiction and thus finishes the proof.
\end{proof}

\bibliographystyle{abbrv}
\bibliography{references}

\end{document}